\newtheorem{theorem}{Theorem}
\newtheorem{proposition}[theorem]{Proposition}
\title{On Defining Neural Averaging}
\author{%
  Su Hyeong Lee\thanks{Correspondence to: \texttt{sulee@uchicago.edu}} \\
  Department of Statistics\\
  University of Chicago\\
  \And
  Richard Ngo \\
  Independent \\
}
\begin{document}

\maketitle

\begin{abstract}
What does it even mean to average neural networks? We investigate the problem of synthesizing a single neural network from a collection of pretrained models, each trained on disjoint data shards, using only their final weights and no access to training data. In forming a definition of neural averaging, we take insight from model soup, which appears to aggregate multiple models into a singular model while enhancing generalization performance. In this work, we reinterpret model souping as a special case of a broader framework: Amortized Model Ensembling (AME) for neural averaging, a data-free meta-optimization approach that treats model differences as pseudogradients to guide neural weight updates. We show that this perspective not only recovers model soup but enables more expressive and adaptive ensembling strategies. Empirically, AME produces averaged neural solutions that outperform both individual experts and model soup baselines, especially in out-of-distribution settings. Our results suggest a principled and generalizable notion of data-free model weight aggregation and defines, in one sense, how to perform neural averaging.
\end{abstract}

\section{Introduction}\label{introduction}

Consider the following problem. Suppose we have $N$ neural networks that are domain experts with differing capabilities, perhaps due to being trained on different shards of a larger data stream. Therefore, each expert is knowledgeable about its own shard, but is largely oblivious or out-of-domain on others. Now, remove all access to the data. We are only given the model weights and nothing else. In this setup, can we synthesize a single model that preserves the knowledge embedded in each expert, using only these final weights and without performing any additional data-driven training?

At a high level, our goal is to construct a form of “average” over these domain experts--a model that aggregates their capabilities and ideally retains their individual performance. But what does it even mean to perform neural averaging? What properties should such an average of neural nets satisfy?

In this work, we propose a framework for answering these questions. We begin by identifying a characteristic for neural network averaging. Neural nets are often highly specialized: they memorize the data they are trained on and tend to generalize less well outside of it~\cite{memorize0,memorize1}. If we succeed in constructing an averaged model, then, even without access to any training data, we expect it to retain the memory of each expert. That is, such a model should perform better on each expert’s original training distribution than it would on a random validation set, despite never having seen any of that data itself.

Is such a notion of neural averaging even realizable? A naive approach might be to create a rudimentary mixture model of neural nets: at inference time, sampling from a multinomial distribution over the $N$ experts and using the selected model to make predictions. In expectation, this creates a kind of functional ensemble. However, this strategy is deeply unsatisfying. It requires constantly maintaining access to all $N$ models, which is inefficient, and does nothing to address the fact that each expert is blind to most of the overall data distribution. Deploying any single expert is thus inherently brittle.

A more promising direction comes from the model soup literature~\cite{modelsoup,greedy1}. These works show that averaging the weights of multiple independently trained models, e.g., trained with different data orderings or hyperparameters, can yield a new model with stronger generalization. Not only does this suggest that weight-space averaging can be meaningful, but also that it can outperform the individual models being averaged. This empirical success motivates our central question: can we extend model soup to a useful notion of neural averaging?

Our approach is guided by the following intuition. During training, each minibatch gradient update pushes a model through a high-dimensional, non-convex optimization landscape, nudging its weights to encode the statistical features of the data~\cite{DNNnonconvex,Minibatch}. After enough iterations, the model converges to a local minimum that reflects the distribution it was trained on. We hypothesize that these converged weights implicitly contain enough directional signal that, by computing distance vectors in weight space, we can construct pseudogradients that continue to point toward the expert’s learned minima. If this is true, then we may be able to define a meta-optimization algorithm, operating directly on the expert weights, that mimics the effect of data-driven training. Crucially, this approach would require no actual access to data, relying only on the geometry of the converged models. Moreover, if we employ adaptive optimizers known to accelerate convergence in non-convex settings~\cite{ADAM,AdaGrad,AdamNoConverge}, we may be able to reach high-quality solutions more efficiently~\cite{heavytail1,HeavyTailedNoisePaper}.

In this paper, we introduce a novel ensembling algorithm that aggregates multiple pretrained neural networks into a single model, without using any data or retraining. Our method constructs updates in weight space by treating model differences as pseudogradients, enabling optimization toward a fused model that integrates the knowledge of each expert. We show that the state-of-the-art model soup strategy~\cite{modelsoup,greedy1} emerges as a special case of our framework, corresponding to a single-step gradient descent with a fixed step size. By reframing ensembling as data-free meta-optimization, we offer a new perspective on what it means to “average” in neural network space, that is, to perform neural averaging. 

\section{Related Literature}\label{relevantworksection}

\subsection{Ensembling Methods in Machine Learning}

Traditional ensembling techniques seek to combine multiple models to enhance overall performance, leveraging the strengths of each individual model. Bootstrap Aggregating~\cite{bagging} trains multiple instances of the same model on different subsets of the training data created through random sampling with replacement. This approach can help reduce variance and mitigate overfitting~\cite{Randomforests,hastie2009elements}. Boosting~\cite{friedman2001greedy,freund1997decision}, on the other hand, builds models sequentially, where each new model attempts to correct the errors of its predecessors. Voting~\cite{zhou2012ensemble} combines the predictions of multiple models by taking a majority vote for classification tasks or averaging for regression tasks. Although these methods are effective in improving model robustness and predictive accuracy, bagging and boosting are expensive to implement for DNN architectures, and voting requires the maintenance of all selected workers in parallel, resulting in an $n$-fold increase in run-time compute for $n$ ensembled learners during deployment~\cite{modelsoup}. In contrast, our approach consolidates all $n$ sub-optimal learners into a single, powerful learner. 

\subsection{Model Ensembling in Deep Learning \& Model Soup}\label{DNNEnsembleRelatedWorks}

The idea of combining multiple neural network weights, either through weighted linear averaging or exponential moving averaging, is a well-studied approach~\cite{SWA,WhenDoFlatOptimaMinimizersWork,losssurfaces,weightaverage2,WARM,LinearInterpolationSimplex,GeneralizationAndWeightEnsemble}. For instance, Snapshot Ensembling~\cite{SnapshotEnsemble} captures multiple local minima in the optimization landscape over a single training run, by encouraging transitory local convergence via a periodic warm-up-ramp-down learning rate schedule. The state dictionary at each minima is saved, and the last $m$ snapshots are linearly averaged to form a robust predictive model, improving generalization without additional training cost. Additional works have shown that when averaging models that share a portion of the optimization trajectory, including fine-tuning with pre-trained initialization, model accuracies typically do not degrade significantly when averages are taken~\cite{Neyshabur,linearmodeconnectivity}. 

A recent development is model soup~\cite{modelsoup,WARM}, which showed that in pre-trained transfer learning settings, linear averaging of model parameters (called \textit{soup ingredients}) can make substantial gains on out-of-distribution performance and zero-shot capabilities on downstream tasks. In contrast with previous approaches, souping merges models fine-tuned with hyperparameter configuration diversity (e.g., momentum parameters and learning rates). A recent follow-up work~\cite{LinearInterpolationSimplex} shows that linear interpolation between the weights of fine-tuned models uncovers domains of comparable performance within the simplex whose vertices are the soup ingredients. Despite these empirical observations, it remains unclear why such a linear aggregation of model weights ensembles effectively. In this paper, we show that this procedure can be realized as a very specific sub-category of amortized model ensembling with adaptivity disabled. This entirely optimization-driven approach taken in our work differs significantly from previous interpretations of the superior performance attained via DNN model interpolation, where under strict assumptions, averaging weights discovered over SGD~\cite{SGD} trajectories was viewed as approximately sampling from a Gaussian and computing the maximum likelihood estimator~\cite{SWA,SGDasSampleGaussian}.

\section{Reformulation of Model Aggregation as Gradient Descent}\label{SoupAsGD}
\subsection{Understanding Weight Averaging as a Single Optimization Step}\label{understandingmodelsoup}
In this section, we show that the linear model averaging process is equivalent to optimizing over an online quadratic loss objective, with the optimum located precisely at each random variable $\xi$ drawn from the population distribution $\mathcal{D}$ of soup ingredients. Therefore, if $\mathcal{D}$ is a single distribution over well-trained models with limited variance, such as over a loss landscape which is empirically well-approximated 
by a quadratic convex objective, we may intuit that the souped model should display strong performance. In particular, while the optimization of DNNs is a non-convex problem~\cite{DNNnonconvex}, Goodfellow et al~\cite{DNNapproximatelyconvexoverSGD} observes that loss surfaces can sometimes be approximately convex in practice, over a single trajectory when optimized with SGD. However, if $\mathcal{D}$ is a mixture model comprising multiple distributions associated with well-trained learners situated in disparate regions of the non-convex loss landscape, the simplistic quadratic objective function will exhibit significant fluctuations depending on the sampled distribution. This increased variation implies that non-adaptive gradient descent on a basic quadratic loss will struggle to stabilize the souped output, leading to sub-optimal performance. We now motivate the AME framework by formalizing model soup.

Consider the simple stochastic loss function,
\begin{equation}\label{heavytailedobjective}
    f(x) = \frac{1}{2}\left(||x-\xi||^2 - ||\xi||^2\right), 
\end{equation}
where $x \in \mathbb{R}^d$ is the high-dimensional space of model parameters and $\xi \sim \mathcal{D}$ for $\mathcal{D}$ the distribution over trained model parameters. After drawing $N$ models $\xi_1, \dots, \xi_N$ to materialize $N$ (online) realizations of the stochastic objective, we perform gradient descent after initializing\footnote{Alternatively, we may initialize at ingredient $x_1$ and choose the learning rate schedule $\eta_i = 1/(i+1)$.} at an \textit{arbitrary} $x_{pivot}$ with harmonic learning rate decay $\eta_i = 1/i$, at step $i$. Letting $x_i = \xi_i$ for notational clarity, the first step of gradient descent gives
\begin{equation*}
w_1 = x_{pivot} - \frac{1}{1}(x_{pivot}-x_1),
\end{equation*}
which leads to the second step 
\begin{equation*}
w_2 = x_1 - \frac{1}{2}(x_1 - x_2) = \frac{1}{2}(x_1 + x_2).
\end{equation*}
Noting the identity 
\begin{equation*}
\frac{1}{n}\left(\sum_{i}^{n} x_i\right) - \frac{1}{n+1} \left( \frac{1}{n}\left(\sum_{i}^{n} x_i\right) - x_{n+1}\right) = \frac{1}{n+1}\left(\sum_{i}^{n+1} x_i\right),   
\end{equation*}
we obtain after $N$ steps the souped output
\begin{equation}\label{linearsoup}
x_{soup} = w_{N} = \frac{1}{N} \left(\sum_{i=1}^N x_i\right). 
\end{equation} 

Therefore in this paper, we evaluate the utility of regularizing neural nets over the simple objective~\eqref{heavytailedobjective}, whose optimum is located at each $\xi\sim \mathcal{D}$. As there exist many distributions such that the empirical average deviates from the maximum likelihood estimator, we seek to utilize stochastic adaptive optimization that effectively supports a diverse range of $\mathcal{D}$ to locate a suitable ensembled model $\xi_{ens} \neq x_{soup}$. 

Therefore, we reinterpret model souping as regularization over an online convex objective which is non-adaptively optimized for one epoch in expectation at $\mathbb{E}[\xi]$, the first moment of $\mathcal{D}$. This paves way for the deployment of alternative learning descent algorithms (e.g., Adam~\cite{ADAM}, Adagrad~\cite{AdaGrad}) for neural averaging, which may provide qualitatively variant results. Intuitively, we posit that the ensemble performance is conditional on the distribution $\mathcal{D}$ of trained model weights. For instance, Lee et al~\cite{EfficientAdaptiveFederatedOptimization} (i.e., Theorem 13) show that under the convex stochastic objective~\eqref{heavytailedobjective}, if $\mathcal{D}$ follows a heavy-tailed distribution, then the outputs of non-adaptive gradient descent will display infinite variance as well as incur infinite regret in expectation under any learning rate schedule. Therefore, alternative model updates such as clipped-SGD or adaptive optimizers could be used to enhance robustness and accelerate convergence~\cite{HeavyTailedNoisePaper,heavytailedclassimbalance,linearattentionisallyouneed}. We include further detailed discussions along with intuition-building synthetic experiments in Appendix~\ref{StatisticalEstimators}. In Section~\ref{mathformAME}, we formalize these notions into a framework which we call amortized model ensembling. We begin by easing out a precise mathematical formulation, which we generalize to subsume the derivation above.   

\begin{figure}[H]
    \vspace{-10pt}
    \begin{subfigure}[b]{0.24\textwidth}
        \centering
        \includegraphics[width=\textwidth]{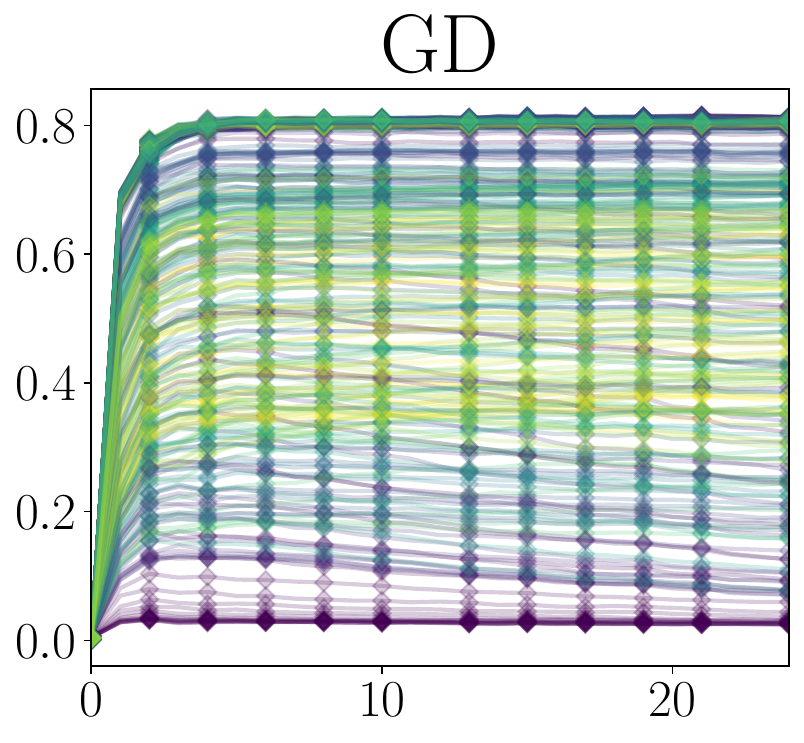}
    \end{subfigure}
    %\hfill
    \begin{subfigure}[b]{0.24\textwidth}
        \centering
        \includegraphics[width=\textwidth]{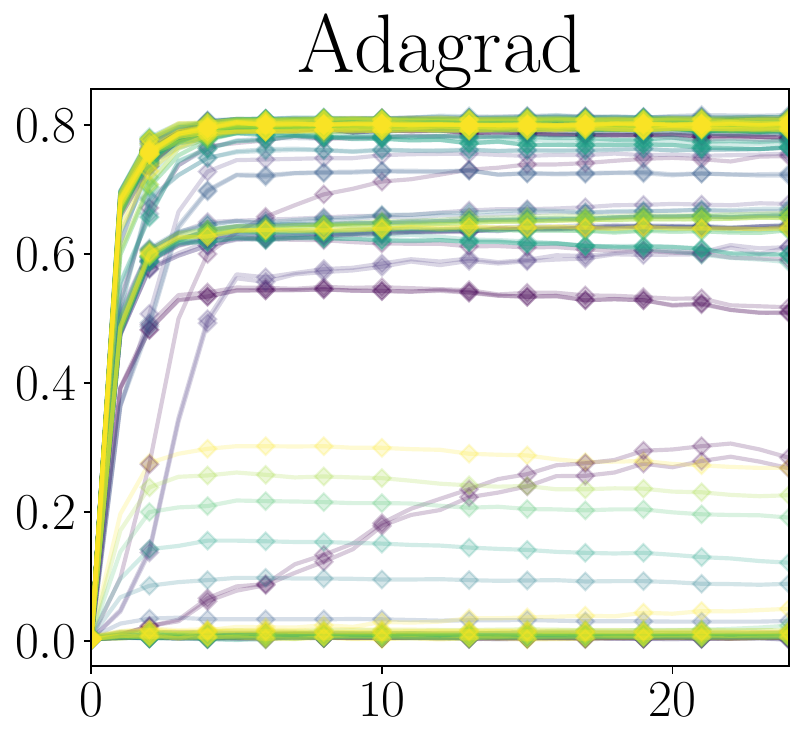}
    \end{subfigure}
    %\hfill
    \begin{subfigure}[b]{0.24\textwidth}
        \centering
        \includegraphics[width=\textwidth]{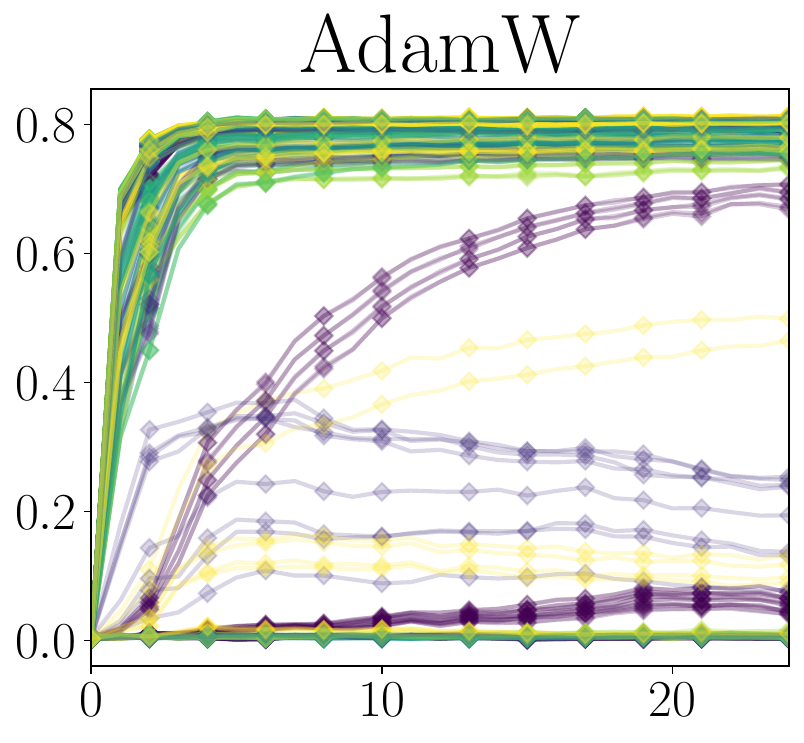}
    \end{subfigure}
    %\hfill
    \begin{subfigure}[b]{0.24\textwidth}
        \centering
        \includegraphics[width=\textwidth]{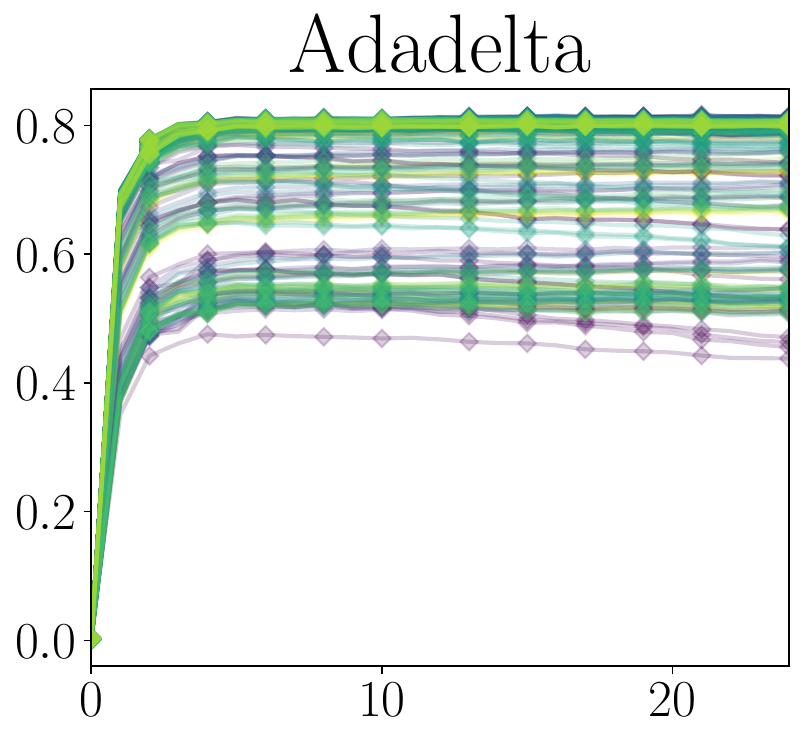}
    \end{subfigure}
    %\hfill
    \caption{Performance of GD~\cite{SGD}, Adagrad~\cite{AdaGrad}, AdamW~\cite{AdamW}, and Adadelta~\cite{Adadelta} amortized ensembling of ViT-S~\cite{ViT} trained on GLD-23K~\cite{49052}, from left to right, after 1 ensembling epoch. No training nor testing data of any sort are used during the ensemble process (data-free), and only trained model weights are accessed. Plots depict classification accuracy on GLD-23K testing data unseen during training, for a variety of optimizer hyperparameter choices (described in Appendix~\ref{SetupandDatasetAppendix}). Horizontal axis is the number of model training epochs, different from ensembling epochs, and the color transition indicates the number of ensemble ingredients (models aggregated), ranging from 2 (dark) to 16 (light). This figure shows that on a macroscopic level, each optimizer instantiation induces qualitatively variant ensemble accuracy dynamics, affirming the existence of diverse ensembling strategies uncovered by the AME framework. Details are given in full in Appendix~\ref{ExperimentSetupAppendix}.}
    \label{DifferentOptimizers1Epoch}
\end{figure}

\subsection{Mathematical Formulation of Amortized Model Ensembling}\label{mathformAME}
An alternative way of forming a standard model soup is to use a pivot, 
\begin{equation}\label{reinterpretsouping}
x_{soup} = \frac{1}{N} \sum_{i \in [N]} x_i = x_{pivot} - \frac{1}{N}\sum_{i \in [N]} \left(x_{pivot}-x_i\right)
\end{equation}
where $x_1, \dots, x_N$ are the selected model ingredients. Pivoting adds an additional degree of freedom, which allows the formulation of pivoted pseudogradients. We may equivalently interpret~\eqref{reinterpretsouping} as a sequence of $N$ training steps with initialization $x_{pivot}$ as follows, 
\begin{equation}\label{sequentialreinterpretation}
\begin{aligned}
w_1 &= x_{pivot} - \frac{1}{N}(x_{pivot} - x_1), \\
w_2 &= w_1 - \frac{1}{N}(x_{pivot} - x_2), \\
&\quad \quad \quad \quad \vdots \\
w_N &= w_{N-1} - \frac{1}{N}(x_{pivot} - x_{N}). \\
\end{aligned}    
\end{equation}
Defining $g_{i} = \zeta_i (x_{pivot}-x_i)/N$ to be the \textit{pivoted pseudogradient} with amplification parameter $\zeta_i$, we formalize this training process as backpropagating with the $g_{i}$,
\begin{equation}\tag{\textbf{GD Ensembling}}\label{GDensemble}
    w_{i} = w_{i-1} - \eta_i g_i.
\end{equation}
Selecting the fixed learning rate $\eta_i = 1$ and gradient amplification parameter $\zeta_i = 1$ results in standard linear averaging. Therefore, we may consider GD backpropagation variants, such as using a learning rate schedule or applying an adaptive optimizer~\cite{ADAM,AdaGrad} to the pivoted pseudogradients. For instance, we have for unamplified $g_i$,
\begin{equation}\tag{\textbf{Adagrad Ensembling}}\label{Adagradensemble}
w_{i} = w_{i-1} - \frac{\eta_i}{\sqrt{\sum_{j=1}^{i} g_j^2} + \epsilon} g_i,
\end{equation}
\begin{equation}\tag{\textbf{Adam Ensembling}}\label{Adamensemble}
w_i = w_{i-1} - \frac{1}{1-\beta_1^i}\cdot \frac{\eta_i \hat{m}_i}{(\sqrt{1-\beta_2^i})^{-1}\sqrt{\hat{v}_i} + \epsilon},
\end{equation}
where for~\ref{Adamensemble}, the momentum parameters satisfy $\beta_1, \beta_2 \in [0,1)$ and the moment estimate moving averages are defined as
\begin{equation*}
\hat{m}_i = \beta_1 m_{i-1} + (1 - \beta_1) g_i, \quad \hat{v}_i = \beta_2 v_{i-1} + (1 - \beta_2) g_i^2. 
\end{equation*}
Note that any linear interpolation of model weights within the simplex whose vertices are the ensemble ingredients follows as a consequence of~\ref{GDensemble} with an appropriate learning rate schedule.

\paragraph{Initialization of gradient descent.} In standard SGD of modern DNNs, the weight initialization process typically involves a combination of random generation and informative heuristics to control neuron variance during the early stages of training. This is due to the fact that when a model is trained from scratch, we must often start with a very weak, unsophisticated learner that may wander into undesirable landscapes. Future updates gradually realize stronger learners in an online fashion. By contrast, in ensemble settings, we are typically provided with a wealth of fully realized and suitably trained learners, which necessitates the question of the selection of $x_{pivot}$. Note that $x_{pivot}$ corresponds to the initialization in the gradient descent setting of~\eqref{sequentialreinterpretation}.

\paragraph{Alternative formulation of Amortized Model Ensembling.} A non-equivalent formulation of AME is to follow the derivation of Section~\ref{understandingmodelsoup}. Then, the unamplified $i$-th pseudogradient is defined as $(w_{i}-x_{i})/N$, which may be amplified using $\zeta_i$ or an appropriate learning rate schedule. We further detail this setting under the name \textit{adaptive model pivoting} in Section~\ref{AdapMS}. This corresponds to choosing the pivot $x_{pivot}$ adaptively at each step when computing pseudogradients, to reflect the latest and most informative learner. In Section~\ref{AdapMS}, we develop a concrete amortized ensembling framework using heuristics that guide our algorithm design. Generally, we take smaller learning steps when the model ingredients are weak, and apply larger, more confident updates when the ingredients are strong, determined autonomously by adaptive optimizers which leverage historical gradient statistics.  

% \vspace{-5pt}
\subsection{Amortized Model Ensembling}\label{AdapMS}
% \vspace{-5pt}

Given unordered model ingredients $\{x_1,\dots, x_N\}$, how should we develop the ensemble? In particular, the step-wise construction of gradient descent~\eqref{sequentialreinterpretation} requires an ordering of the model ingredients to extract the pivoted pseudogradients. How should we initialize the pivot, and what should guide the choice of ordering? For initialization of the pivot, we propose to select an informative model, such as any of the model ingredients $\{x_1,\dots, x_N\}$ or their standard average (linear soup) as candidates. To order model ingredients, we may construct a list based on any metric such as the validation loss or accuracy on the entire training dataset, from largest to smallest. 
The ordering can also be task-specific, such as applying a hierarchical total order based on the validation metric evaluated on a particular dataset of interest. 

\begin{algorithm}
\caption{Amortized Model Ensembling (Unspecified Optimizer)}\label{AdaptiveModelEnsembleAlgoPseudocode}
\begin{algorithmic}[1]
\REQUIRE Any set of unordered models $\{x_1,\dots,x_N\} \subset \mathbb{R}^d$, pivot $x_{pivot} \in \mathbb{R}^d$\\
Learning rate schedule $\eta_i$, pivoted pseudogradient amplification schedule $\zeta_i$, $\forall i \in [N]$
\STATE Construct ordered list $x_1, \dots, x_N$ using metric of choice, initialize $w_0 = x_{pivot}$
\FOR{$i = 1, \dots, N$}
    \STATE Compute $g_{i} \leftarrow \zeta_i(w-x_i)/N$ for $w = x_{pivot}$ or $w_{i-1}$
    \STATE $\hat{g}_i \leftarrow Optimizer(g_i)$
    \STATE Let $w_i \leftarrow w_{i-1} - \eta_i \hat{g}_i$
\ENDFOR
\RETURN $w_N$ (ensembled model)
\end{algorithmic}
\end{algorithm}

\paragraph{Adaptive model pivoting.}
We now explore concurrently updating the initial pivot $x_{pivot}$ to the best known value, in order to account for suboptimal choices of $x_{pivot}$. A version of this can be done by simply letting $x_{pivot} = w_{i-1}$ at step $i$ of model ensembling. For instance,~\eqref{sequentialreinterpretation} becomes
\begin{equation}\label{adaptivesequentialreinterpretation}
\begin{aligned}
w_1 &= x_{pivot} - \frac{1}{N}(x_{pivot} - x_1), \\
w_2 &= w_1 - \frac{1}{N}(w_1 - x_2), \\
&\quad \quad \quad \quad \vdots \\
w_N &= w_{N-1} - \frac{1}{N}(w_{N-1} - x_{N}), \\
\end{aligned}    
\end{equation}
where pivoted pseudogradients are now defined as $g_i = \zeta_i(w_{i-1} - x_{i})/N$. Other choices (e.g., exponentially weighted moving average of $w_i$) are possible for $x_{pivot}$ to adaptively form pivoted pseudogradients. In particular, we may relate this form~\eqref{adaptivesequentialreinterpretation} back to the intuition provided in Section~\ref{introduction}. That is, the pseudogradient is the difference vector between the current state of the ensemble $w$ and a model ingredient $x$, which takes advantage of the information signals of the training data already encoded in the weight space of $x$.

There is a vast and active literature on adaptive gradient algorithms, spanning both empirical and theoretical perspectives. Under the AME framework, this literature is now fully accessible to neural network weight-space aggregation, including convergence results and theoretical insights into optimizer selection for models demonstrating a range of characteristics. For more details, we refer to Appendix~\ref{AdvantagesOfAME}.

\section{Experiment Setups \& Empirical Evaluation}\label{Experiments}

\paragraph{General Experiment Setup.} We examine the ensemble performance of multiple vision transformers in our evaluation. Per each model, a hyperparameter sweep was conducted whose grid contained a diversity of learning rates and adaptivity/momentum parameters, under different weight decay constants detailed in Appendix~\ref{ExperimentSetupAppendix}. Given any training dataset, the resulting model for each hyperparameter configuration being swept over was saved after every training epoch. During the ensemble stage, all models were hierarchically listed based on their accuracy on a held-out validation set, conditioned on each epoch (e.g., at epoch 20). Afterwards, a select number of top-performing models were ensembled. For robust statistical significance at the expense of computational complexity, we repeated this ensemble procedure, including accessing new model ingredients per epoch, across all training epochs (typically ranging between 0 to 24 or 49). For example, each plot in Figure~\ref{2and4GDEnsemble} contains the performance evaluation of approximately $1500$ ensembles. We provide full details in Appendix~\ref{SetupandDatasetAppendix} as well as in figure captions. For all experiments, we set the pseudogradient amplification schedule $\zeta_i = 1$, and employed adaptive pivoting to remain faithful to convex regularization against the stochastic objective~\eqref{heavytailedobjective}. The pivots for AME are initialized at the standard model soup.

Figure~\ref{2and4GDEnsemble} presents test accuracies of ViT-S ensembles trained on GLD-23K~\cite{49052}. To assess whether AME constitutes a meaningful form of model averaging, we examine whether the resulting model preserves the memories of its model ingredients. Given that neural networks are known to memorize their training distributions~\cite{memorize0,memorize1}, a natural diagnostic is to compare performance on the original training distributions versus unseen test data--despite AME being applied without access to either. As shown in Appendix~\ref{GDEnsembleAppendix}, Figures~\ref{example_train_figure}-\ref{example_test_figure} (extensions of Figure~\ref{2and4GDEnsemble}) and Table~\ref{tab:train_test_accuracies_main_text}, the ensembled model consistently achieves higher accuracy on training distributions than on test data, indicating knowledge retention of training data of the ingredients. Moreover, the smoothness of validation curves across a wide range of AME hyperparameters suggests that effective configurations generalize across training regimes: optimizer settings that ensemble well for weakly trained models also transfer to stronger ones.

\begin{figure}[H]
\begin{subfigure}[b]{\textwidth}
\centering
\includegraphics[width=\textwidth]{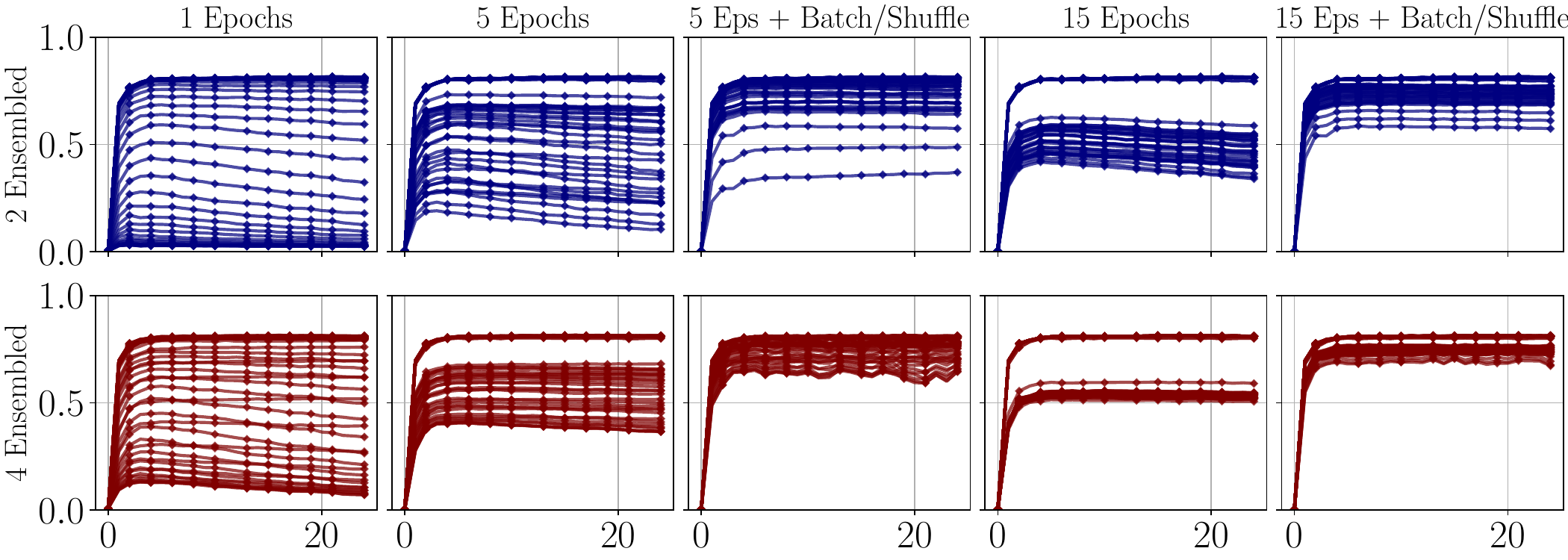}
\end{subfigure}
\hfill
\vspace{-10pt}
    \caption{Ensemble test accuracy results of ViT-S fine-tuned on centralized GLD-23K dataset over hyperparameter sweep and experiment setup detailed in Appendix~\ref{ExperimentSetupAppendix}. Each continuous line depicts a single hyperparameter configuration in AME, where an ensemble is formed for each x-axis timestep, which represents training epochs from 0 to 24. The title states the number of ensemble epochs where each model ingredient is treated as a datapoint. AME was instantiated with gradient descent, and batching used two ingredient models per batch. We observe that as the number of ensemble epochs range across 1, 5, and 15, \textit{the ensemble performance improves and solidifies}, manifesting the effects of zero-data model training by simply increasing the number of ensemble epochs. Batching as well as shuffling the neural nets being fused further enhances the ensemble performance. This shows that simply running more meta-optimization epochs, with batching and shuffling of the ensemble ingredients, enhances performance. Additionally, adding more high-performance model ingredients also benefits test accuracy. Additional results are contained in Appendix~\ref{GDEnsembleAppendix}.}
    \label{2and4GDEnsemble}
    \vspace{-10pt}
\end{figure}

\subsection{Extracting Domain Knowledge via Amortized Model Ensembling}~\label{DomainKnowledge} 
It has previously been observed that balanced linear model averaging (model soup) may empirically strengthen generalization performance. If so, why might souping have this effect? We note that soups correspond to a single epoch of amortized model ensembling instantiated with GD, utilizing a harmonic learning rate schedule (Section~\ref{SoupAsGD}). In order to capitalize on this phenomenon to design foundational meta-optimization strategies that attain greater generalization capacity, we present an experimental setup to isolate and manifest this behavior, navigated by the following heuristic. 

\vspace{-5pt}
\paragraph{Intuition.} During batched backpropagation, each minibatch gradient guides the model across a complex, non-convex optimization landscape, aiming to reach a weight configuration that aligns with the information contained in the minibatch sample. Through repeated updates over multiple batches, the objective is to converge to a local optimum that fits the training data distribution. However, once the model's weights have converged to such a configuration, the weights already encode the relevant information from the data. By computing a distance vector and adjusting the ensembled weights toward the converged model configurations, could these updates be interpreted as `pseudogradients' toward the training data distribution? If so, this suggests that with suitably trained model weights and an appropriate optimizer, it is possible to achieve an effect analogous to model training while purely ensembling trained weights, without requiring any actual training data whatsoever. Is it possible to localize and prototype this phenomenon?

\vspace{-5pt}
\paragraph{Corresponding Setup.} Motivated by this perspective, we examine the generalization capabilities of amortized ensembles by training ViT-S on CIFAR-10 while consecutively excluding selected classes for any given model. In the first setup, we trained 30 models with optimal AdamW hyperparameters and differing random seeds (impacting data batching and initialization of the classifier head), where the training dataloader singularly and evenly excluded each of the 10 classes. That is, all models were each trained on 9/10 classes, which forced an entire class of the training dataset (a highly problematic 10\%) to be perfectly out-of-domain given any singular model. This definitively capped testing time accuracy at $\sim 0.9$. Then, all 30 models were fused, and their performance evaluated. The analogous experiment was repeated at multiple even harsher extremes, where each model was exclusively trained on one to five classes, resulting in 5 sets of 30 models which were $90\%$, $80\%$, $70\%$, $60 \%$, $50\%$ out-of-domain given any singular model. Additional details about the setup are contained in Appendix~\ref{ExperimentSetupAppendix}, and we summarize the final accuracies of the best performing singular model, model soup, and AME in Table~\ref{tab:oodtable}. A detailed explanation of greedy model soups are presented in Appendix~\ref{GreedyAppendixSection}, and supporting plots are provided in Appendix~\ref{DomainKnowledgeAppendixPlot}.

Figure~\ref{OOD_Appendix_Figure_traininfo} in Appendix~\ref{DomainKnowledgeAppendixPlot} confirms that as more epochs are taken toward the limited-class training data, the pre-trained transformer excessively overfits during fine-tuning, demonstrated by a destabilized test-time loss. However in Figure~\ref{OOD_Appendix_Figure_ensembleinfo}, the performance of well-performing ensembles steadily increase as more training epochs are taken (for a fixed number of ensemble epochs), indicating that each saturated training epoch is materializing a more informative ensemble ingredient in the model weight space. Despite the expectation of gross overfitting to the training classes in this setup, we observe that amortized model ensembling can significantly enhance OOD performance. Surprisingly, even in the case of $90\%$-OOD models, the best performing ensemble model reached a $\sim 23\%$ classification accuracy on the entire CIFAR-10 dataset. In this setting, each fine-tuned model is able to game the loss by learning a uniformly fixed mapping to the singular training class\footnote{Indeed in Figure~\ref{OOD_Appendix_Figure_traininfo}, the training accuracy achieves 100.00\% within the first three epochs.}. Additionally, Figures~\ref{OOD_Appendix_Figure_ensembleinfo},~\ref{OOD_Appendix_Figure_ensemble9C} show phase transitions in test performance across AME hyperparameter configurations--a phenomenon we flag for future theoretical study (Appendix~\ref{DomainKnowledgeAppendixPlot}).

\begin{table}[H]
    \vspace{-10pt}
    \centering
    \caption{Test-time classification accuracy on CIFAR-10 dataset. For greedy soups, (Desc), (Asc) indicate the ordering of the model ingredients on a held-out validation set (Appendix~\ref{GreedyAppendixSection}). }\label{tab:oodtable}
    \resizebox{\textwidth}{!}{%
    \begin{tabular}{lccccc}
        \toprule
        & \textbf{Best Model Ing.} & \textbf{Soup} & \textbf{Greedy Soup (Asc)} & \textbf{Greedy Soup (Desc)} & \textbf{AdamW Ensemble} \\
        \midrule
        \textbf{10\% OOD} & 87.99\% & \textbf{\textcolor{red}{98.81\%}} & 88.66\% & 89.16\% & \textbf{\textcolor{blue}{98.88\%}} \\ % 98.60
        \textbf{50\% OOD} & 49.65\% & \textbf{\textcolor{red}{96.64\%}} & 93.88\% & 95.19\% & \textbf{\textcolor{blue}{97.09\%}} \\ %96.78
        \textbf{60\% OOD} & 39.78\% & \textbf{\textcolor{red}{94.88\%}} & 74.02\% & 80.73\% & \textbf{\textcolor{blue}{95.46\%}} \\
        \textbf{70\% OOD} & 29.89\% & \textbf{\textcolor{red}{84.77\%}} & 58.44\% &66.26\% & \textbf{\textcolor{blue}{88.28\%}} \\
        \textbf{80\% OOD} & 20.00\% & 61.18\% & 58.32\% & \textbf{\textcolor{red}{62.91\%}} & \textbf{\textcolor{blue}{66.16\%}} \\
        \textbf{90\% OOD} & 10.00\% & 21.76\% & 22.44\% & \textbf{\textcolor{red}{22.45\%}} & \textbf{\textcolor{blue}{23.22\%}} \\
        \bottomrule
    \end{tabular}
    }
\end{table}

\begin{table}[H]
    \vspace{-20pt}
    \centering
    \caption{Train and test accuracies across different OOD percentages}\label{tab:train_test_accuracies_main_text}
    \resizebox{\textwidth}{!}{%
    \begin{tabular}{lcccccc}
        \toprule
        & \textbf{90\% OOD} & \textbf{80\% OOD} & \textbf{70\% OOD} & \textbf{60\% OOD} & \textbf{50\% OOD} & \textbf{10\% OOD} \\
        \midrule
        \textbf{Best Ingredient Train Accuracy} & 10.00\% & 20.00\% & 29.98\% & 39.96\% & 49.95\% & 89.63\% \\
        \textbf{Best Ingredient Test Accuracy} & 10.00\% & 20.00\% & 29.89\% & 39.78\% & 49.65\% & 87.99\% \\
        \textbf{Best Ensemble Train Accuracy} & \textbf{\textcolor{blue}{23.50\%}} & \textbf{\textcolor{blue}{66.85\%}} & \textbf{\textcolor{blue}{89.64\%}} & \textbf{\textcolor{blue}{96.75\%}} & \textbf{\textcolor{blue}{98.58\%}} & \textbf{\textcolor{blue}{99.99\%}} \\
        \textbf{Best Ensemble Test Accuracy} & \textbf{\textcolor{red}{23.22\%}} & \textbf{\textcolor{red}{66.16\%}} & \textbf{\textcolor{red}{88.28\%}} & \textbf{\textcolor{red}{95.46\%}} & \textbf{\textcolor{red}{97.09\%}} & \textbf{\textcolor{red}{98.88\%}} \\
        \bottomrule
    \end{tabular}
    }
    \vspace{-5pt}
\end{table}

\paragraph{Discussion.}

In Section~\ref{DomainKnowledge}, we materialize a setting in which ensembling can act to extract specialized domain knowledge, significantly enhancing the generalization capabilities of the ensemble net compared to any singular model or model soup. In Figure~\ref{2and4GDEnsemble}, we observe that simply employing more AME ensemble epochs helps to enhance, then stabilize the resultant ensemble performance for general hyperparameter selections, revealing the influence of implicit training during the zero-data ensembling process. Furthermore, the performance of the ensemble is higher on the seen training data than on the unseen testing data (e.g., Appendix~\ref{GDEnsembleAppendix} or  Table~\ref{tab:train_test_accuracies_main_text}). Notably, such training-like effects manifest despite AME being completely data-free, where regularization is performed against a convex objective~\eqref{heavytailedobjective}. Additionally, instantiating the ensemble using various optimizer strategies can induce qualitatively differing behaviors in even a single ensemble epoch, as cumulatively illustrated in Figure~\ref{DifferentOptimizers1Epoch}. Due to limited space, we examine the effects of batching and shuffling while ensembling a different number of model ingredients in Appendix~\ref{GDEnsembleAppendix}. We relegate further discussions to the relevant portions of the Appendix.

\subsection{Neural Averaging CIFAR-100}\label{Neural_Averaging_CIFAR100}

While our evaluations focus on vision classification with ViTs, the principles underlying AME are not tied to specific architectures. Indeed, the method’s independence from training data and its general formulation suggest broad applicability. For instance, what does it mean to perform a neural average over training data? As an illustrative example, consider reinterpreting input images as weights in an augmented architecture: each image is treated as defining the weights of an added initial layer, and a synthetic input consisting entirely of the pixel value $1$ is passed through this modified network. All resulting models thus recover the original image classification under the identity data. Therefore, each image corresponds to a unique network, and ensembling the first-layer weights becomes equivalent to ensembling the image data itself.

Motivated by this intuition, we construct a synthetic dataset by sampling 500 image tensors initialized over a uniform distribution $U[0, 1]$, matching the number of training examples per class in CIFAR-100. 
We then apply class-conditional AME, treating each training image as an ensemble ingredient. The generated pseudo-dataset still elicits correct predictions on the classes the images were synthesized from (e.g., “willow tree (95\%)” vs. “maple tree (82\%)”) from a typical ViT fine-tuned on CIFAR-100. This suggests that AME captures meaningful structure in weight space that persists even under input synthesis, raising fundamental questions about the nature of neural averaging. We provide additional visualizations in Appendix~\ref{Neural_Averaging_Images}.

\begin{figure}[H]
\centering
\begin{adjustbox}{width=\textwidth}
% remove all inter‑cell padding just for this table
{\setlength{\tabcolsep}{0pt}\renewcommand{\arraystretch}{0}%
\begin{tabular}{@{}c*{6}{@{}c@{}}@{}}
% Row 1
\rotatebox{90}{\fontsize{6}{7}\selectfont\ \quad \ \ Apple (31.6\%)} &
\includegraphics[width=0.15\textwidth]{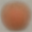} &
\includegraphics[width=0.15\textwidth]{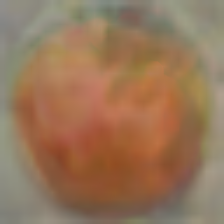} &
\includegraphics[width=0.15\textwidth]{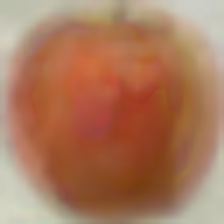} &
\includegraphics[width=0.15\textwidth]{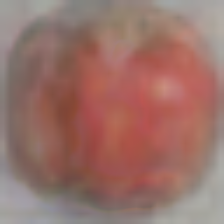} &
\includegraphics[width=0.15\textwidth]{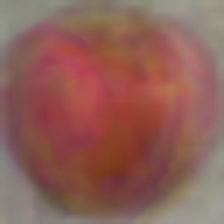} &
\includegraphics[width=0.15\textwidth]{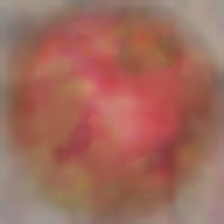} \\
\rotatebox{90}{\fontsize{6}{7}\selectfont\ \ \ \ Maple Tree (82.0\%)} &
\includegraphics[width=0.15\textwidth]{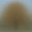} &
\includegraphics[width=0.15\textwidth]{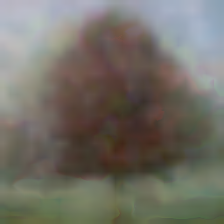} &
\includegraphics[width=0.15\textwidth]{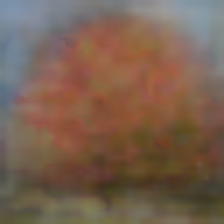} &
\includegraphics[width=0.15\textwidth]{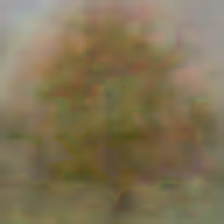} &
\includegraphics[width=0.15\textwidth]{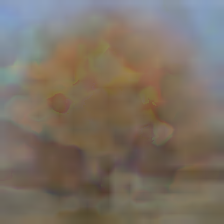} &
\includegraphics[width=0.15\textwidth]{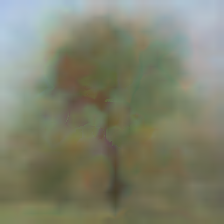} \\
\rotatebox{90}{\fontsize{6}{7}\selectfont\ \ \ Willow Tree (95.0\%)} &
\includegraphics[width=0.15\textwidth]{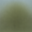} &
\includegraphics[width=0.15\textwidth]{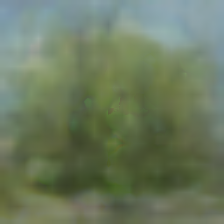} &
\includegraphics[width=0.15\textwidth]{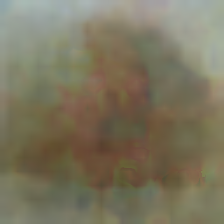} &
\includegraphics[width=0.15\textwidth]{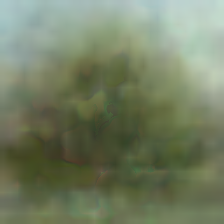} &
\includegraphics[width=0.15\textwidth]{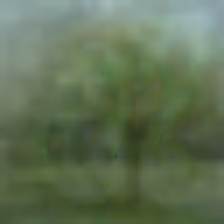} &
\includegraphics[width=0.15\textwidth]{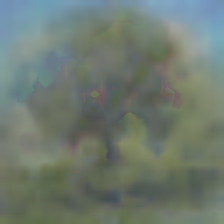} \\
\end{tabular}%
}
\end{adjustbox}
\caption{The first column displays uniformly averaged (souped) images from each CIFAR-100 class's training set. Subsequent columns show analogous outputs from AME, applied to 500 randomly initialized image tensors per class. Image synthesis was performed using AdamW-based AME with 40 optimization sweeps. The classifier--a ViT fine-tuned solely on the original CIFAR-100 dataset--was never exposed to the synthesized data. Percentages indicate the proportion of synthesized images per class correctly classified by the ViT (out of 500), where random-guessing accuracy is 1\%. For further visualizations and AME hyperparameter details, we refer to Appendix~\ref{Neural_Averaging_Images}.}
\label{AME_CIFAR100_MainText}
\end{figure}

\subsection{Neural Optimizers as General Algorithms for Online Statistical Estimation}

We now propose an alternative interpretation of deep learning optimizers: \emph{optimizers can be viewed as streaming algorithms for computing statistical estimators}. From this perspective, each update involves computing a distance vector between the current estimator and a new data point (or minibatch), adjusted according to rules defined by the optimizer. In AME, we have interpreted these objects as \textit{pseudogradients}, but this principle applies more generally for any type of problem convertible into an online inference setup. This framework naturally accommodates batching and allows us to define an entire family of estimators parameterized by the optimizer's hyperparameters. In the experiments below, we validate this interpretation empirically and show that AME can outperform the empirical average, especially in heavy-tailed or noisy regimes--linking back to the discussion in Section~\ref{understandingmodelsoup}. A more in-depth explanation with additional details is given in Appendix~\ref{StatisticalEstimators}.

\paragraph{Synthetic Experiment Setup.} We draw 60,000 samples from a heavy-tailed Cauchy distribution and treat this as the `population' from which observations can be drawn. From this, we subsample 300 points to simulate the ensemble ingredients, analogous to materializing 300 two-dimensional model weights. We then compute both the model soup average and the AME-ensembled result (using Adam). This trial is repeated 300 times, and the results are visualized in Figure~\ref{fig:three-by-two-main-text} (a-b), where (c) provides the result for a standard Gaussian distribution. 
\begin{figure}[htbp]
    \centering
    % First row
    \begin{subfigure}[t]{0.3\textwidth}
        \centering
        \includegraphics[width=\linewidth]{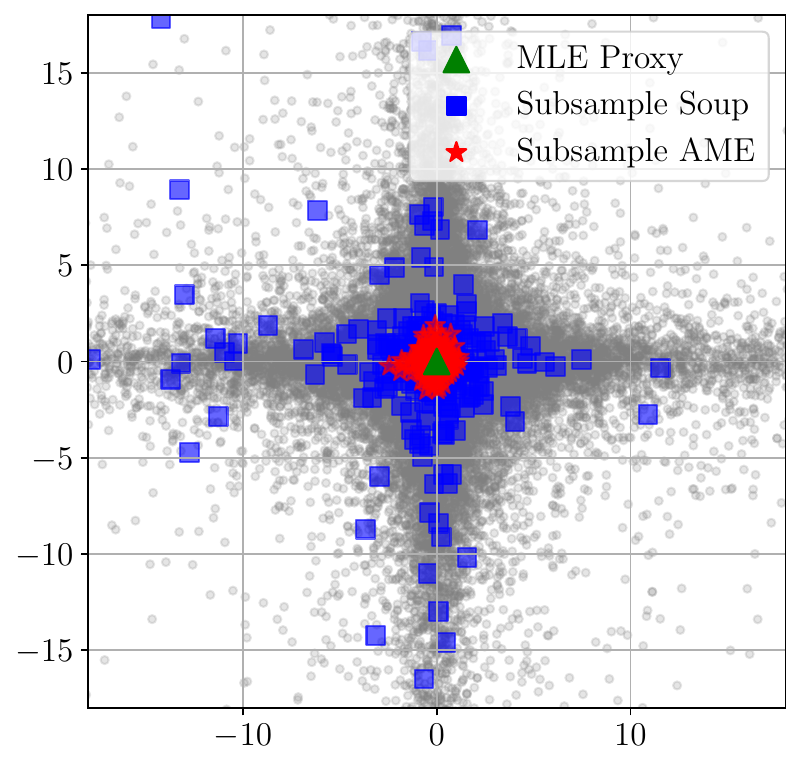}
        \caption{Adam-AME}
    \end{subfigure}%
    \hfill
    \begin{subfigure}[t]{0.3\textwidth}
        \centering
        \includegraphics[width=\linewidth]{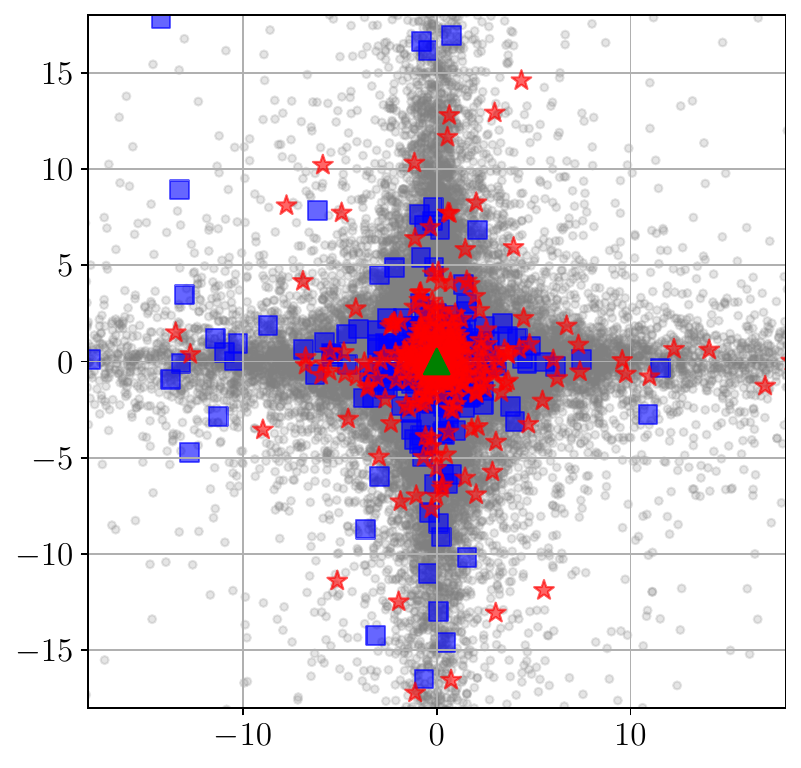}
        \caption{GD-AME}
    \end{subfigure}%
    \hfill
    \begin{subfigure}[t]{0.3\textwidth}
        \centering
        \includegraphics[width=\linewidth]{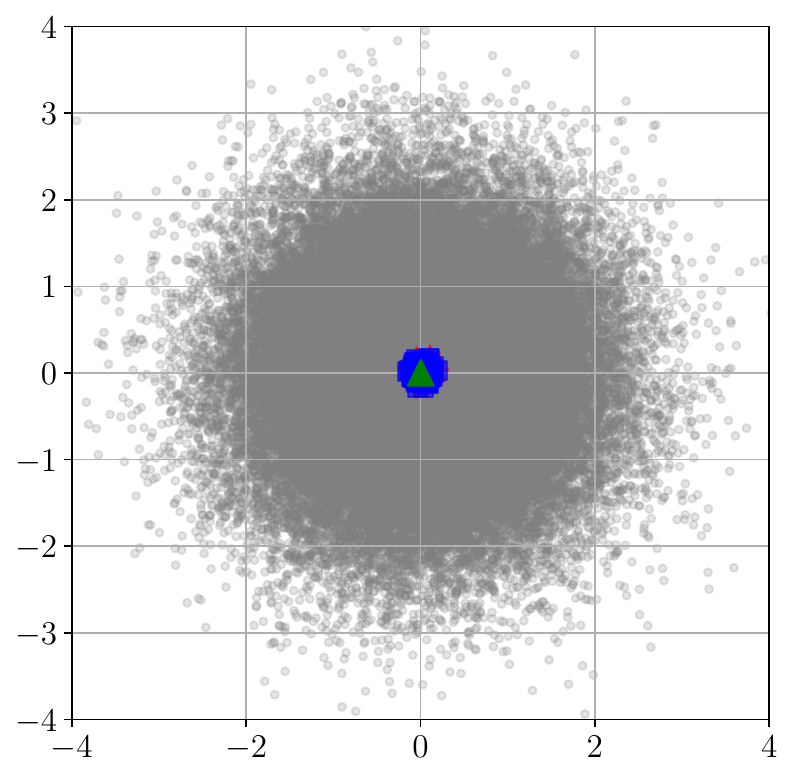}
        \caption{GD-AME}
    \end{subfigure}%
    \caption{Each panel compares AME ensembles (Adam or GD) against model soup using identical data samples per trial. In (a-b), we verify that under the heavy-tailed Cauchy distribution, Adam-AME demonstrates superior performance than GD-AME. For a non-heavy-tailed Gaussian (c), GD-AME ensembles (red) align closely with the  soups (blue) centered around the green MLE, and are therefore visually occluded. Full details and hyperparameter settings are described in Appendix~\ref{StatisticalEstimators}.}
    \label{fig:three-by-two-main-text}
    \vspace{-15pt}
\end{figure}

This reveals that model soup exhibits high variance and completely fails to converge toward the proxy MLE under subsampling in heavy-tailed regimes. In contrast, Adam-AME produces estimates that diverge from the empirical average but reliably converge near the proxy MLE. By contrast, GD-AME, which necessarily must only linearly interpolate between the model ingredients, demonstrates similar properties to model soup estimators. This confirms that AME functions as an optimizer-based statistical estimator rather than a simple average, and can yield improved performance depending on the underlying distribution when instantiated with an adaptive optimizer. 

The central goal of AME is to harness insights from optimization theory to construct a novel class of empirical estimators that go beyond simple linear averaging. Rather than merely approximating the empirical mean, AME leverages existing optimizers to synthesize ensemble models that encode richer statistical structure--structures often missed by methods such as model soup. This reinterpretation positions adaptive optimizers as numerical routines for computing robust, noise-aware estimators. By initializing at a fixed point and applying pseudogradient updates toward sampled models or datapoints, AME produces solutions that converge closer to the population mean than the empirical average. We believe this perspective motivates the development of future deep neural network optimization-inspired ensembling techniques as statistical estimators in their own right.

\section{Conclusion}\label{Conclusion}

What does it mean to perform a neural average--to distill a collection of models into a single network that faithfully reflects their collective knowledge? A principled formulation need not rely on access to any data during the averaging process, yet the resulting model should inherit properties of its ingredients, performing better on their respective training distributions than on unseen data despite never observing either. Drawing inspiration from model soup, we develop a notion of neural averaging through the idea of pseudogradients: directions in parameter space that implicitly encode minibatch information. While prior work interprets the out-of-domain gains of model soup through Bayesian lenses (e.g., maximum likelihood estimation), we offer a complementary optimization-based perspective: uniform model averaging is equivalent to a single epoch of gradient descent under a harmonic learning rate schedule. Extending this view, we demonstrate that adding meta-optimization epochs, batching models, and incorporating adaptive optimizers yields consistent performance gains. These insights culminate in the Amortized Model Ensembling (AME) framework, which reframes weight-space averaging as zero-data meta-optimization. Our formulation sheds light on the generalization behavior of weight-averaged deep networks and defines, in one sense, how to perform neural averaging. 

\bibliographystyle{unsrt}
{\small \bibliography{references.bib}}

\clearpage
\appendix

\section{Supplementary Material, Experiment Setups, and Datasets}\label{SetupandDatasetAppendix}

\subsection{Advantages of Amortized Model Ensembling as Meta-Optimization}\label{AdvantagesOfAME}
There is a vast and active literature on adaptive gradient algorithms, spanning both empirical and theoretical perspectives. Under the AME framework, this literature is now fully accessible to neural network weight-space aggregation, including convergence results and theoretical insights into optimizer selection for models demonstrating a range of characteristics. For instance, under appropriate conditions, we may show that AME can do no worse than linear ensembling or linear weight interpolation. We also have the following very well-known heuristic: 

\begin{proposition}\label{heuristicproposition}
(Informal) There exists a learning rate schedule $\eta_i$ and adaptivity parameter schedule $\varepsilon_i$ such that~\ref{Adagradensemble}$\approx$~\ref{GDensemble}.
\end{proposition}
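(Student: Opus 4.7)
The plan is to formalize the informal $\approx$ via an $\varepsilon$-$\delta$ statement: for any prescribed GD Ensembling schedule $\{\tilde\eta_i\}_{i=1}^N$ and any tolerance $\delta>0$, exhibit schedules $\{\eta_i\},\{\varepsilon_i\}$ so that the Adagrad and GD iterates, starting from the common $x_{pivot}$ and using the same ordered ingredients, stay within $\delta$ in norm at every step $i\in[N]$. The guiding heuristic is that the Adagrad rescaling factor acts coordinatewise as $(\sqrt{\sum_{j\le i} g_j[k]^2}+\varepsilon_i)^{-1}$, which becomes uniformly close to the scalar $1/\varepsilon_i$ across coordinates $k$ once $\varepsilon_i$ dominates $\|\sqrt{\sum_{j\le i}g_j^2}\|_\infty$. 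Consequently, rescaling by $\eta_i=\tilde\eta_i\,\varepsilon_i$ makes the effective Adagrad step approximately $\tilde\eta_i g_i$, i.e.\ the GD step.

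Concretely, I would proceed in three steps. First, bound the pseudogradients along the GD trajectory: since the ingredients $\{x_j\}$ and $x_{pivot}$ lie in a bounded set, an elementary induction on~\eqref{sequentialreinterpretation} (or its adaptive-pivot variant~\eqref{adaptivesequentialreinterpretation}) shows $\|w_i^{GD}\|$, and hence $\|g_i^{GD}\|$, is bounded by some $G=G(\{x_j\},x_{pivot},\{\tilde\eta_i\})$, giving $\|\sqrt{\sum_{j\le i}g_j^2}\|_\infty\le G\sqrt{N}$. Second, choose $\varepsilon_i$ large and set $\eta_i=\tilde\eta_i\,\varepsilon_i$; then coordinatewise
\begin{equation*}
\frac{\eta_i}{\sqrt{\sum_{j\le i}g_j[k]^2}+\varepsilon_i}=\tilde\eta_i\Bigl(1+\tfrac{\sqrt{\sum_{j\le i}g_j[k]^2}}{\varepsilon_i}\Bigr)^{-1}=\tilde\eta_i-O\bigl(\tilde\eta_i\,G\sqrt{N}/\varepsilon_i\bigr),
\end{equation*}
so the per-step update differs from the GD update by at most $O(\tilde\eta_i\,G^2\sqrt{N}/\varepsilon_i)$ in norm. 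Third, propagate this through $N$ steps: the pseudogradient map $w\mapsto \zeta_i(w-x_i)/N$ is Lipschitz with constant $|\zeta_i|/N$, so a discrete Gronwall/telescoping argument bounds $\|w_i^A-w_i^{GD}\|$ by a finite product depending only on $N,\tilde\eta_i,\zeta_i$, times $\max_i 1/\varepsilon_i$. Taking $\varepsilon_i$ uniformly large enough drives this below $\delta$.

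The main obstacle is the third step: because Adagrad's pseudogradient $g_i^A$ is computed at the drifted iterate $w_{i-1}^A$ under adaptive pivoting, the two trajectories diverge and the error can compound. The argument is not deep, but must be done carefully so that the Lipschitz constant used in the Gronwall recursion is independent of $\varepsilon_i$; otherwise one is tempted to close the bound with a quantity that itself depends on the schedule being chosen. The cleanest route is to first fix any compact neighborhood $K$ of the GD trajectory, verify via a continuity argument that once $\varepsilon_i$ exceeds a threshold the Adagrad trajectory also stays in $K$, and only then invoke the local Lipschitz constant of the pseudogradient map on $K$. A minor secondary subtlety is that exact equality (rather than approximate) is impossible with scalar $\eta_i,\varepsilon_i$ whenever the coordinates of $\sqrt{\sum g_j^2}$ differ, which is precisely why the proposition is stated with $\approx$ and why the $\varepsilon_i\to\infty$ limit is the right device.
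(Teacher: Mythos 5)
Your proposal is correct and rests on exactly the same device as the paper's proof: take $\varepsilon_i$ large enough to dominate the accumulated squared pseudogradients, thereby disabling adaptivity, and compensate by setting $\eta_i=\tilde\eta_i\,\varepsilon_i$ so the effective step reduces to the GD step. The paper stops at the per-step approximation and does not track the trajectory divergence you address with the Gronwall/compactness argument, but since the proposition is stated as informal with an $\approx$, your version is simply a more carefully quantified elaboration of the identical approach.
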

\begin{proof}
At each step $i$, let $\eta_i = \widetilde{\eta}_i\varepsilon_i \gg \sum_{j=1}^{i}  g_j^2$. Then, we have
\begin{equation*}
w_{i} = w_{i-1} - \frac{\eta_i}{\sqrt{\sum_{j=1}^{i} g_j^2} + \varepsilon_i} g_i \approx w_{i-1} - \widetilde{\eta}_i g_i,
\end{equation*}
where the rightmost side corresponds to souping via standard linear averaging, for learning rate and amplification schedules $\widetilde{\eta}_i = \zeta_i = 1$.
\end{proof}
In summary, this heuristic follows from the observation that in adaptive optimizers, adaptivity may effectively be turned off by selecting a large adaptivity parameter, then negating this effect by applying a learning rate of identical magnitude, scaled multiplicatively by $\mathcal{O}(1)$. Analogous results hold for~\ref{Adamensemble}, with appropriate momentum parameters. Appendix~\ref{theproof} contains more formal convergence results for AME. 

\paragraph{Customized model aggregation strategies.} In Section~\ref{understandingmodelsoup}, we saw that the previous paradigm of taking a linear model average is equivalent to non-adaptive gradient descent over a convex online objective with optimum centered at $\xi \sim \mathcal{D}$. Of particular interest is the case in which the distribution of $\mathcal{D}$ is heavy-tailed, where gradient descent is known to have significant convergence difficulties~\cite{heavytail1,EfficientAdaptiveFederatedOptimization,heavytail2,heavytail3}. Recent works have suggested a relationship between heavy-tailed stochastic gradients and the superior performance of Adam over SGD~\cite{HeavyTailedNoisePaper,linearattentionisallyouneed}, which intuits that Adam ensembling may be more advantageous than linear average aggregation if the distribution $\mathcal{D}$ of model ingredients is heavy-tailed. Furthermore, there are numerous adaptive or clipped optimizers developed within the literature to mitigate diverse challenges that arise over the optimization landscape~\cite{AdamNoConverge,HeavyTailedNoisePaper,Yogi,gupta2018shampoo,shazeer2018adafactor}. Each optimizer may be used to ensemble model weights within our framework, which is optimizer-agnostic.

\paragraph{Compute Efficiency.} Each ensembling step carries materially less computation complexity than a single adaptive gradient descent step during model training.  
Additionally, we note that our algorithm operates independently from coordinate correlation; that is, the weights of each layer can be loaded separately, then fused, to provide the weights of the output model. Therefore, we entirely sidestep the memory/computation bottleneck as ensembling may be applied to arbitrary partitions of model weights.

\subsection{Experiment Setups}\label{ExperimentSetupAppendix}

Data batch size was fixed to 64 during model training, and the loss was set to cross entropy. After conducting initial experiments comparing optimization dynamics under AdamW, Adagrad, Adadelta, and SGD, AdamW was selected to train all models. When any given optimizer was deployed during model training or at the ensemble stage, we consistently swept over the hyperparameter grid given in Table~\ref{tab:optimizer_configs}, where $\lambda$ represents the weight decay constant, $\eta$ the learning rate, $\varepsilon$ the adaptivity parameter or smoothing term, and $\beta/\rho$ the momentum parameters. 

\begin{table*}[ht]
\centering
\caption{Optimizer Hyperparameter Configurations}
\label{tab:optimizer_configs}
\begin{tabular}{cc}
\toprule
\textbf{GD} & \textbf{Adagrad} \\
\midrule
$\begin{aligned}
\lambda &\in \{0.0, 0.1\}, \\
\eta &\in \{\text{np.linspace}(10^{-7}, 2, 30)\}
\end{aligned}$ &
$\begin{aligned}
\lambda &\in \{0.0, 0.1\}, \\
\eta &\in \{0.0001, 0.001, 0.01, 0.1\}, \\
\varepsilon &\in \{10^{-9}, 10^{-7}, 10^{-5}, 10^{-3}\}, \\
\beta_1 &\in \{0.9\}
\end{aligned}$ \\
\midrule
\textbf{AdamW} & \textbf{Adadelta} \\
\midrule
$\begin{aligned}
\lambda &\in \{0.0, 0.01, 0.1\}, \\
\eta &\in \{0.0001, 0.001, 0.01, 0.1\}, \\
\varepsilon &\in \{10^{-9}, 10^{-7}, 10^{-5}\}, \\
\beta_1 &\in \{0.8, 0.9\}, \\
\beta_2 &\in \{0.9, 0.999\}
\end{aligned}$ &
$\begin{aligned}
\lambda &\in \{0.0, 0.1\}, \\
\eta &\in \{0.01, 0.1, 1, 2\}, \\
\varepsilon &\in \{10^{-8}, 10^{-6}, 10^{-4}\}, \\
\rho &\in \{0.8, 0.9, 0.999\}
\end{aligned}$ \\
\bottomrule
\end{tabular}
\end{table*}

\paragraph{Dataset Setup.} We mainly utilized two datasets in the empirical evaluation as follows.

\textbf{CIFAR-10}: The CIFAR-10 dataset~\cite{Krizhevsky2009LearningML} is composed of 60,000 color images of size 32 × 32 × 3, split into 10 classes. There are 50,000 training images and 10,000 test images, with labels representing objects such as airplanes, automobiles, birds, cats, and dogs. CIFAR-10 is widely used for benchmarking deep learning models, especially convolutional neural networks (CNNs).

\textbf{GLD-23K}: The GLD-23K dataset is a subset of the GLD-160K dataset introduced in~\cite{49052}. It contains 23,080 training images, 203 landmark labels, and 233 clients. Compared to CIFAR-10, the landmarks dataset consists of images of far higher quality and resolution, and therefore represents a more challenging learning task. Originally, GLD-23K is a real-world high-quality benchmark for federated learning algorithms. Due to the quality and resolution of its images, we converted this decentralized dataset into a central setting to evaluate ensemble performance, where the training/testing distributions are preserved.

\paragraph{General Setup.} In this work, we investigate the performance of amortized model ensembling using foundational image classification architectures. To rigorously assess the impact of ensembling on performance at the expense of memory/computational complexity, we saved all model weights after every single training epoch, typically across a range of 0 to 24 epochs, for each hyperparameter configuration in the grid. During the ensembling stage, again at each of the epochs between 0 to 24, we followed the procedure below.

\textbf{(1)} Evaluate the validation performance of all trained models for a fixed dataset and architecture, for the hyperparameter sweep detailed in Table~\ref{tab:optimizer_configs}.

\textbf{(2)} Rank the models hierarchically based on the validation metric, from best to worst or vice versa.

\textbf{(3)} Select the top $n$ models for ensembling.

After each descent step against the objective~\eqref{heavytailedobjective}, we decayed the learning rate harmonically (unless explicitly stated otherwise in the figure caption). To confer high statistical validity, this procedure was repeated for every epoch between 0 and 24, for all hyperparameter configurations listed in Table~\ref{tab:optimizer_configs} during the amortized ensembling phase. Per each ensemble conditional on a training epoch (e.g., 21) and ensemble optimizer hyperparameter configuration, the ensemble performance was evaluated. Therefore, most figures in the paper, including those in the Appendix, each showcase the performance of at least tens of thousands of ensembled models across these settings.

\paragraph{Setup for Table~\ref{tab:oodtable}.} We now describe in more detail the dataset partitioning and exclusion within the experiment setup. 6 different scenarios were evaluated, corresponding to 10\%, 50\%, 60\%, 70\%, 80\%, 90\% out-of-domain data given any singular model. Each setting shared a similar training structure, where ten dataloaders were instantiated, ranging from class 0 to class 9. In the case of 10\% OOD, the class $n$ dataloader excluded all training data from CIFAR class $n$. In the case of $50\%$ OOD, the class $n$ dataloader excluded all datapoints of classes $n$, $n+1, \dots,$ $n+4$, where the excluded class was calculated up to modulo 10. The other extremes followed an identical setup; for instance, the class 1 dataloader for the $80\%$-OOD setting trained on only classes 0 and 9 from the CIFAR-10 training data. Each ViT was optimized via the best performing AdamW hyperparameter configuration discovered during the grid search in Table~\ref{tab:optimizer_configs}, which corresponded to $\eta = 1e$-$4$, $\varepsilon = 1e$-$5$, $\beta_1 = 0.8$, $\beta_2 = 0.9$, and $\lambda = 0.0$. Each individual model per OOD setting used a unique Pytorch and dataloader generator random seed combination, impacting the initialization of the 10-class linear probe classifier as well as dataloader batch ordering. In total, 30 ensemble ingredients were uniformly trained, 3 optimized on each class dataloader in the manner detailed above.

\subsection{Convergence Results for Amortized Model Ensembling}\label{theproof}

Recasting model weight-space averaging as optimization renders classical convergence results applicable to amortized model ensembling. This implies the practicability of a wealth of meta-optimization strategies and heuristics that draw from well-established literature in adaptive or clipped optimization. For instance, we present the following proposition. 
\begin{proposition}\label{thetheorem}
Let \(\xi_{i} \overset{\text{iid}}{\sim} \mathcal{D}_{train}\) for $\|\mathbb{E}[\xi]\| < \infty$, where $i \in \{1,\dots, N \}$ and $\mathcal{D}_{train}$ is the (possibly mixture) distribution of trained model weights, where the pseudogradient amplification schedule is bounded, $|\zeta_t|\le \zeta $. Then, the following statements hold. 

\textbf{(i)} Given a fixed learning rate schedule, quadratic ensembles instantiated under gradient descent need not converge.

\textbf{(ii)} Let the learning rate decay schedule satisfy $\eta_t = \min\{\mathcal{O}(t^{\alpha}),1/(2\operatorname{diam}(\mathcal{X}\zeta)\}$ and $\alpha < -1$, where $t$ is the number of backpropagation timesteps. Here, $\mathcal{X}$ is a closed ball centered at the origin which subsumes the $N$ ensemble ingredients as well as the initialization. Then, amortized ensembling converges under GD, Adagrad optimizer strategies, where convergence is also attained for Adam for $\beta_1^2 < \beta_2$.

\textbf{(iii)} Model soups converge in probability to the first moment of $\mathcal{D}_{train}$, that is, 
\begin{equation*}
\lim_{n \to \infty} \mathbb{P} \left(\left|x_{soup, n} - \mathbb{E}[\xi] \right| <\varepsilon \right) = 1
\end{equation*}
for any $\varepsilon > 0$. 
\end{proposition}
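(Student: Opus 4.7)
The plan is to dispatch the three items in roughly the order (iii), (i), (ii), with the main effort in (ii). For (iii), the soup iterate $x_{soup,n}=\frac{1}{n}\sum_{i=1}^{n}\xi_{i}$ is the sample mean of i.i.d.\ draws from $\mathcal{D}_{train}$ with finite first moment, so Khintchine's weak law of large numbers, applied coordinatewise, yields convergence in probability to $\mathbb{E}[\xi]$. For (i), I would exhibit a one-dimensional counterexample: take $\mathcal{D}=\mathrm{Unif}\{-1,+1\}$, $\zeta_{t}=N=1$, and fix $\eta_{t}\equiv\eta\in(0,1)$ under adaptive pivoting. The iterates then satisfy the AR(1) recursion $w_{t}=(1-\eta)w_{t-1}+\eta\xi_{t}$, which admits a nondegenerate stationary law on $[-1,1]$ and whose sample paths are perpetually perturbed by a nondegenerate i.i.d.\ innovation, so $\{w_{t}\}$ converges neither almost surely nor in probability.

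For (ii), the common engine is summability: the hypothesis $\alpha<-1$ gives $S:=\sum_{t\ge 1}\eta_{t}<\infty$. First I would bound the raw pseudogradient: provided $w_{t-1},\xi_{t}\in\mathcal{X}$, one has $\|g_{t}\|\le \zeta\,\operatorname{diam}(\mathcal{X})/N =: M$. A short inductive argument, leveraging the cap $\eta_{t}\le 1/(2\operatorname{diam}(\mathcal{X})\zeta)$ together with summability, shows that the iterates remain in an inflated ball $\mathcal{X}'\supset\mathcal{X}$ of radius $R_{0}+MS$, so that the bound $\|g_{t}\|\le M$ persists for all $t$. Next, for each optimizer I would verify a uniform per-step bound $\|w_{t}-w_{t-1}\|\le C\eta_{t}$: for \ref{GDensemble} this is immediate; for \ref{Adagradensemble}, coordinatewise $|g_{t,j}|/\bigl(\sqrt{\sum_{s\le t} g_{s,j}^{2}}+\epsilon\bigr)\le 1$, so the step is bounded by $\eta_{t}$; for \ref{Adamensemble}, under $\beta_{1}^{2}<\beta_{2}$ the standard moment-ratio estimate $|\hat{m}_{t,j}|/\sqrt{\hat{v}_{t,j}}\le \sqrt{(1-\beta_{1})^{2}/(1-\beta_{1}^{2}/\beta_{2})}$, obtained via a Cauchy--Schwarz-type inequality on the exponential moving averages (the technique familiar from the AMSGrad line of work), yields the same scaling. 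A telescoping estimate then gives $\|w_{t+s}-w_{t}\|\le C\sum_{k=t+1}^{t+s}\eta_{k}\to 0$, so $\{w_{t}\}$ is Cauchy in $\mathbb{R}^{d}$ and therefore converges.

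The main obstacle is the Adam case. Once a uniform per-step bound of the form $\|w_{t}-w_{t-1}\|\le C\eta_{t}$ is in hand, summability closes the argument trivially, but establishing this bound requires the coupling hypothesis $\beta_{1}^{2}<\beta_{2}$ and a careful inequality controlling the exponentially-weighted first moment by the square root of the exponentially-weighted second moment; without this condition, the bias-corrected ratio can spike even when the raw pseudogradient is small. I would emphasize in closing that the conclusion in (ii) is convergence of the iterates, not convergence to any particular minimizer: indeed, $\sum\eta_{t}<\infty$ violates the classical Robbins--Monro persistence condition, so the limit generally depends on the sample path, consistent with the non-convergence mechanism exposed in (i) for non-decaying step sizes.
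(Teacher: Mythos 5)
Your treatments of (iii) and (ii) are essentially the paper's own: (iii) is the weak law of large numbers, and for (ii) the paper likewise reduces everything to summability of $\sum_t \eta_t$ (via $\alpha<-1$), a uniform bound on the pseudogradient from confinement to $\mathcal{X}$ (enforced by the step-size cap, plus an explicit projection onto $\mathcal{X}$ in the Adam case rather than your inflated-ball induction), a per-step bound of order $\eta_t$ for Adagrad from the normalized ratio, and for Adam a Cauchy--Schwarz-type bound on $|\hat m|/\sqrt{\hat v}$ whose finiteness is exactly where $\beta_1^2<\beta_2$ enters -- the paper's tail sum $C_n$ collapses to $\int_n^\infty \mathcal{O}(t^\alpha)\sqrt{m_0^2/v_0+\tfrac{(1-\beta_1)^2}{1-\beta_2}\tfrac{\beta_1}{\beta_2-\beta_1^2}}\,\mathrm{d}t$, matching your moment-ratio estimate. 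Your closing observation that the conclusion is convergence of iterates (Cauchy in $\mathbb{R}^d$) rather than convergence to a minimizer, since $\sum_t\eta_t<\infty$ violates Robbins--Monro persistence, is correct and consistent with the paper's reading.

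Part (i) is where you genuinely diverge, and where there is a small gap. The paper gives a deterministic counterexample: four explicitly placed ingredients in $\mathbb{R}^2$ with fixed $\eta=1/(k+1)$, initialized at $(\omega,0)$, producing a perpetual cycle on the $\ell_1$ ball. Your proposed AR(1) construction takes $N=1$ with fresh i.i.d.\ draws $\xi_t\sim\mathrm{Unif}\{-1,+1\}$ at every step; but in the setting of the proposition the $N$ ingredients are drawn once and then cycled over during ensembling, so with $N=1$ the recursion is $w_t=(1-\eta)w_{t-1}+\eta x_1$ with the \emph{same} $x_1$ each step, which contracts to $x_1$ and converges. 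The non-convergence mechanism you want (persistent nondegenerate innovation) requires either an infinite stream of fresh samples, which is not the finite-$N$ regime of the statement, or at least $N\ge 2$ distinct ingredients cycled periodically, in which case the iterates approach a nondegenerate periodic orbit and indeed fail to converge. The fix is one line (take $N=2$, $x_1=1$, $x_2=-1$, fixed $\eta\in(0,1)$), and once repaired your example is arguably more elementary than the paper's two-dimensional rotation-style construction, at the cost of being less vivid geometrically.
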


\begin{proof} Firstly, note that the proof of \textbf{(iii)} is clear by the weak law of large numbers. It remains to prove the other two propositions, which is straightforward. \\
\\
Proof of \textbf{(i)}.\quad We provide a class of counterexamples via explicit construction, which can be easily generalized to higher dimensions. For readability and ease of notation, we work in the $\mathbb{R}^2$ plane. For $k, \omega \in \mathbb{R}_{>0}$, set $\eta = 1/(k+1)$ where the $N=4$ ensemble ingredients are given by 
\begin{equation*}
x_1 = (-k\omega, (k+1)\omega), \quad x_2 = (-(k+1)\omega, -k\omega), \quad x_3 = (k\omega,-(k+1)\omega), \quad x_4 = ((k+1)\omega, k \omega).
\end{equation*}
Then, it is clear that initialization on $(\omega,0)$ induces perpetual cyclic dynamics on the $\ell_1$ unit ball. \\
\\
Proof of \textbf{(ii)}.\quad We now show that decaying the learning rate as above must guarantee convergence for an arbitrary set of ensemble ingredients. As $N < \infty$, there must exist a closed ball $\mathcal{X}$ such that $\{\xi_0, x_1,\dots,x_N \} \subset \mathcal{X}$ where $\xi_0$ is the initialization of the ensemble. This implies that the tail ends of GD ensemble updates must decay, that is, 
\begin{equation*}
A_n = \left\|\sum_{t = n}^\infty \frac{\eta_t \zeta_t (x_t - \xi_t)}{N}\right\| \le \sum_{t = n}^\infty \eta_t \frac{\zeta\operatorname{diam}(\mathcal{X})}{N} \approx \int_{n}^\infty \mathcal{O}(t^{\alpha}) \ \mathrm{d} t \to 0
\end{equation*}
as $n \to \infty$, where $\xi_t$ is the intermediary ensemble at step $t$. Here, we used that the maximum distance between any two points within $\mathcal{X}$ is $2 \operatorname{diam}(\mathcal{X})$, and that $\eta_t \cdot 2 \operatorname{diam}(\mathcal{X}) \zeta < 1$ implies that the ensemble $\xi_t$ remains within the convex set $\mathcal{X}$. In the case of Adagrad, we have for $g_i$ the amplified pseudogradient updates 
\begin{equation*}
B_n = \left\|\sum_{t = n}^\infty \frac{\eta_t \zeta_t (x_t - \xi_t)}{N\left(\sqrt{\sum_{i=1}^t g_i^2}+ \varepsilon\right)}\right\| \lessapprox \sum_{t = n}^\infty \eta_t  \to 0
\end{equation*}
as above. In short, this result comes from the automatic clipping of the update lengths that occurs via adaptivity. Similarly for Adam, we project the model after every update to the closed ball $\mathcal{X}$. We then have
\begin{equation*}
\begin{aligned}
C_n &= \left|\sum_{t=n}^\infty \eta_t \frac{1}{1-\beta_1^t} \cdot \frac{\beta_1^t m_0 + (1-\beta_1)\sum_{r=1}^{t-1} \beta_1^{t-r}g_r + (1-\beta_1)g_t}{(\sqrt{1-\beta_2^t})^{-1}\sqrt{\beta_2^t v_0 + (1-\beta_2)\sum_{r=1}^{t-1} \beta_2^{t-r}g_r^2 + (1-\beta_2)g_t^2} + \varepsilon} \right|\\
&\le \sum_{t=n}^\infty \eta_t \frac{\sqrt{1-\beta_2^t}}{1-\beta_1^t} \left| \frac{\beta_1^t m_0 + (1-\beta_1)\sum_{r=1}^{t-1} \beta_1^{t-r}g_r + (1-\beta_1)g_t}{\sqrt{\beta_2^t v_0 + (1-\beta_2)\sum_{r=1}^{t-1} \beta_2^{t-r}g_r^2 + (1-\beta_2)g_t^2}} \right| \\
&\le \sum_{t=n}^\infty \eta_t \frac{\sqrt{1-\beta_2^t}}{1-\beta_1^t} \sqrt{\frac{\beta_1^{2t} m_0^2}{\beta_2^t v_0} + \frac{(1-\beta_1)^2}{1-\beta_2}\sum_{r=1}^{t-1} \frac{\beta_1^{2(t-r)}}{\beta_2^{t-r}} + \frac{(1-\beta_1)^2}{1-\beta_2}} \\
&\lessapprox \int_{t=n}^\infty \mathcal{O}(t^\alpha) \sqrt{\frac{m_0^2}{v_0} + \frac{(1-\beta_1)^2}{(1-\beta_2)}\frac{\beta_1^2}{\beta_2-\beta_1^2}} \ \mathrm{d} t < \infty.
\end{aligned}
\end{equation*}
We note that $\beta_1^2 < \beta_2$ often works well in practice, e.g., $\beta_1 = 0.8$ or $0.9$ and $\beta_2 = 0.99$.
\end{proof}

\subsection{On Greedy Soups and Greedy Model Ensembling}\label{GreedyAppendixSection}

\paragraph{Remark on Greedy Ensembles.} We may generalize the greedy souping approach given in~\cite{modelsoup} to greedy amortized ensembling. In prior works~\cite{greedy1,greedy2}, greedy aggregation refers to the paradigm of ordered, iterative model averaging, where each newly formed average is kept if and only if it demonstrates enhanced performance on a held-out validation set. This often provides a marginal, yet statistically significant improvement in the resulting model performance. Analogously, in greedy amortized ensembling, we reject a new ensemble at each step and revert to the immediately preceding model/optimizer state when performance enhancement was not detected. However, we note that our definition of `greedy' may differ from those of some papers (e.g.,~\cite{modelsoup,greedy3}), which often may implicitly initiate the averaging process from the best-performing model ingredient. Such a setup definitionally ensures that the end result can be no worse than the pre-existing optimal, fully-trained model, even if all other ingredients are greedily dropped.

Therefore, in our work, we evaluated whether a greedy strategy can indeed enhance model comprehension or generalization capabilities, or whether it leverages statistical tautologies such as a lower bound on the accuracy by starting with the best-performing trained model. In our greedy ensembling approach, consistent with the interpretation of pseudo-gradient descent in Section~\ref{AdapMS}, we started with the weakest performing model out of the ensemble ingredients and employ a greedy ensemble strategy. We observed that by hastily rejecting models due to not immediately attaining marginal improvements on the held-out validation set, a greedy framework often performed significantly worse than a fairer, all-encompassing ensemble strategy which does not neglect to ensemble weights based on ephemeral or immediate held-out performance. Therefore, these results were generally not included in the paper, due to performing \textit{worse} than the standard ensemble. However, we included the performance of the greedy ensemble initialized at the best, as well as worst performing models in the setting of Section~\ref{DomainKnowledge}. Table~\ref{tab:oodtable} indicates that the ordering of the ingredients matters for greedy souping; starting from the strong model on a held-out set, then adding weaker ingredients sequentially tends to dominate over starting with the worst performing model and then adding stronger ones.

\newpage
\section{Gradient Descent Ensembles for GLD-23K}\label{GDEnsembleAppendix}

\begin{figure}[H]
\begin{subfigure}[b]{0.965\textwidth}
\centering
\includegraphics[width=\textwidth]{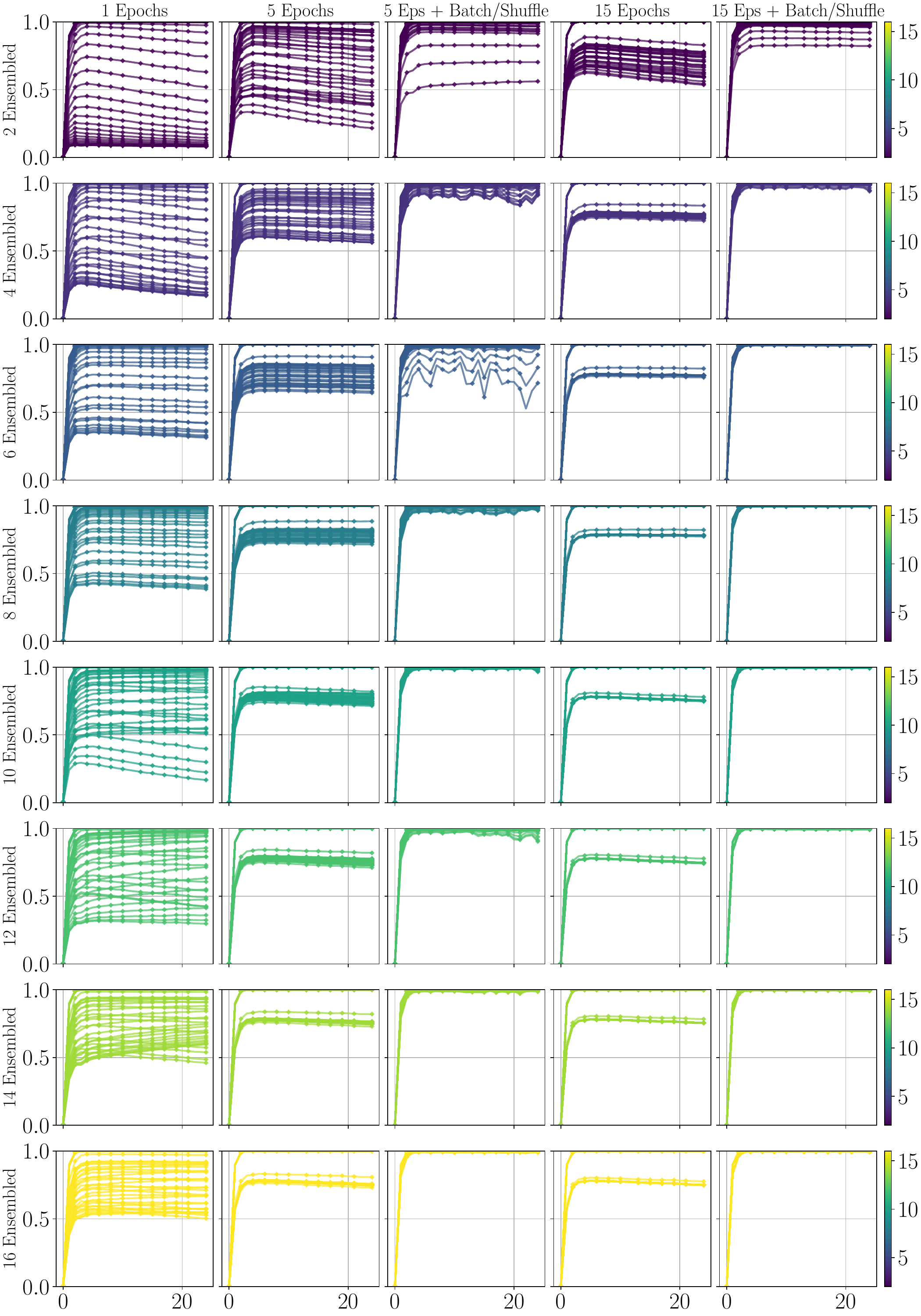}
\end{subfigure}
\hfill
    \caption{ViT-S ensemble performance on GLD-23K training data (training accuracy), where the x-axis for each plot is a training epoch. The title states the number of ensemble epochs used, per each training epoch ranging from $0$ to $24$. The y-axis is set to $(0, 1)$. Compared to Figure~\ref{example_val_figure}, which gives analogous results for a held-out validation set, the best performing ensembles reach near $1$ accuracy. It can be seen that additional ensemble epochs as well as Batching/Shuffling can greatly assist in enhancing ensemble performance. All batching used two ingredient models per batch in this section. }
    \label{example_train_figure}
\end{figure}

\begin{figure}[H]
\begin{subfigure}[b]{1\textwidth}
\centering
\includegraphics[width=\textwidth]{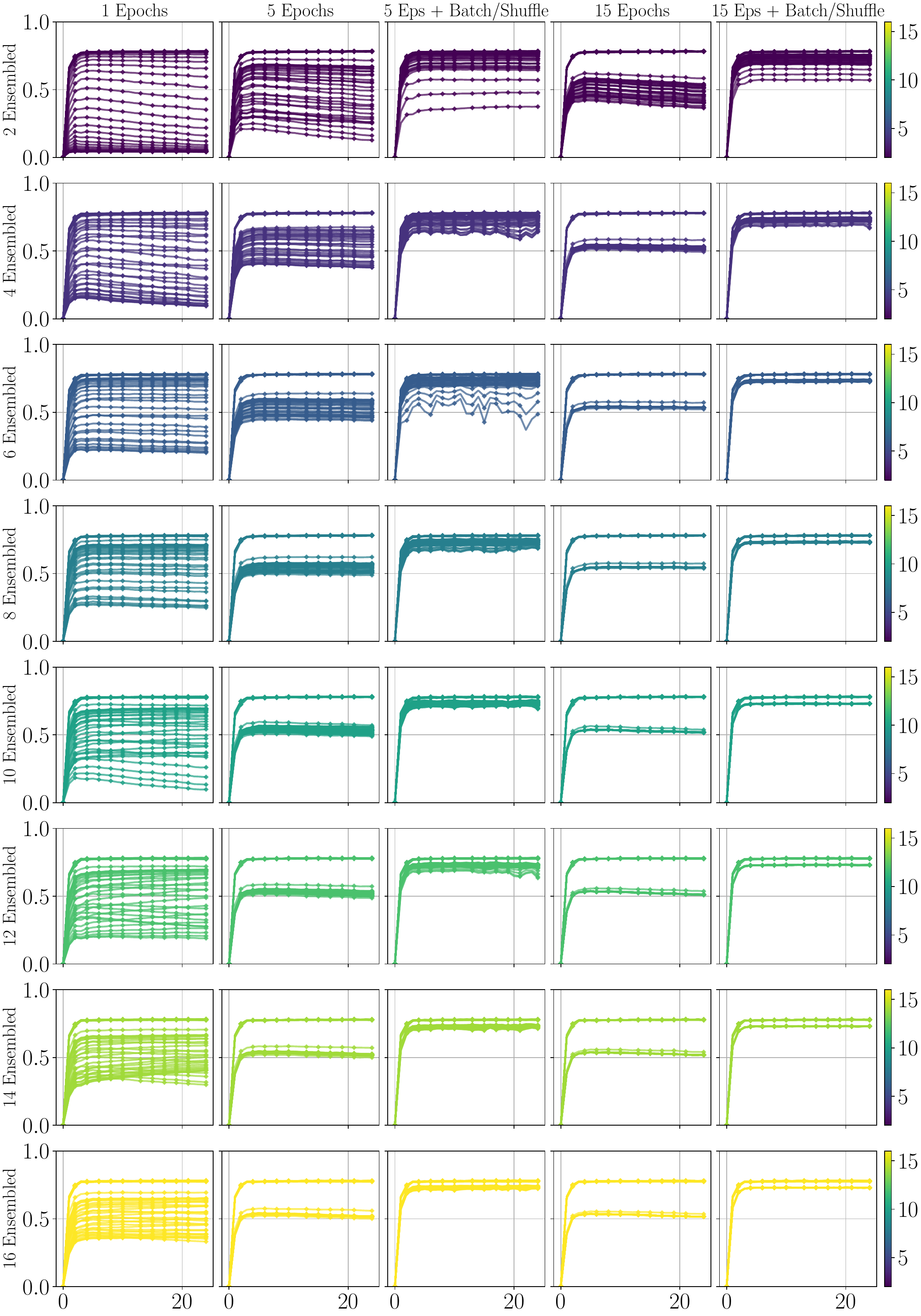}
\end{subfigure}
\hfill
    \caption{Analogous setup to Figure~\ref{example_train_figure}, where the validation accuracy is plotted. The validation set was disjointly partitioned from the GLD-23K dataset training distribution, and unseen during training. $y$-axis is artificially fixed to $(0,1)$ for clearer comparison. We note the similarities in performance on the GLD testing distribution, given in Figure~\ref{example_test_figure}. }
    \label{example_val_figure}
\end{figure}

\begin{figure}[H]
\begin{subfigure}[b]{1\textwidth}
\centering
\includegraphics[width=\textwidth]{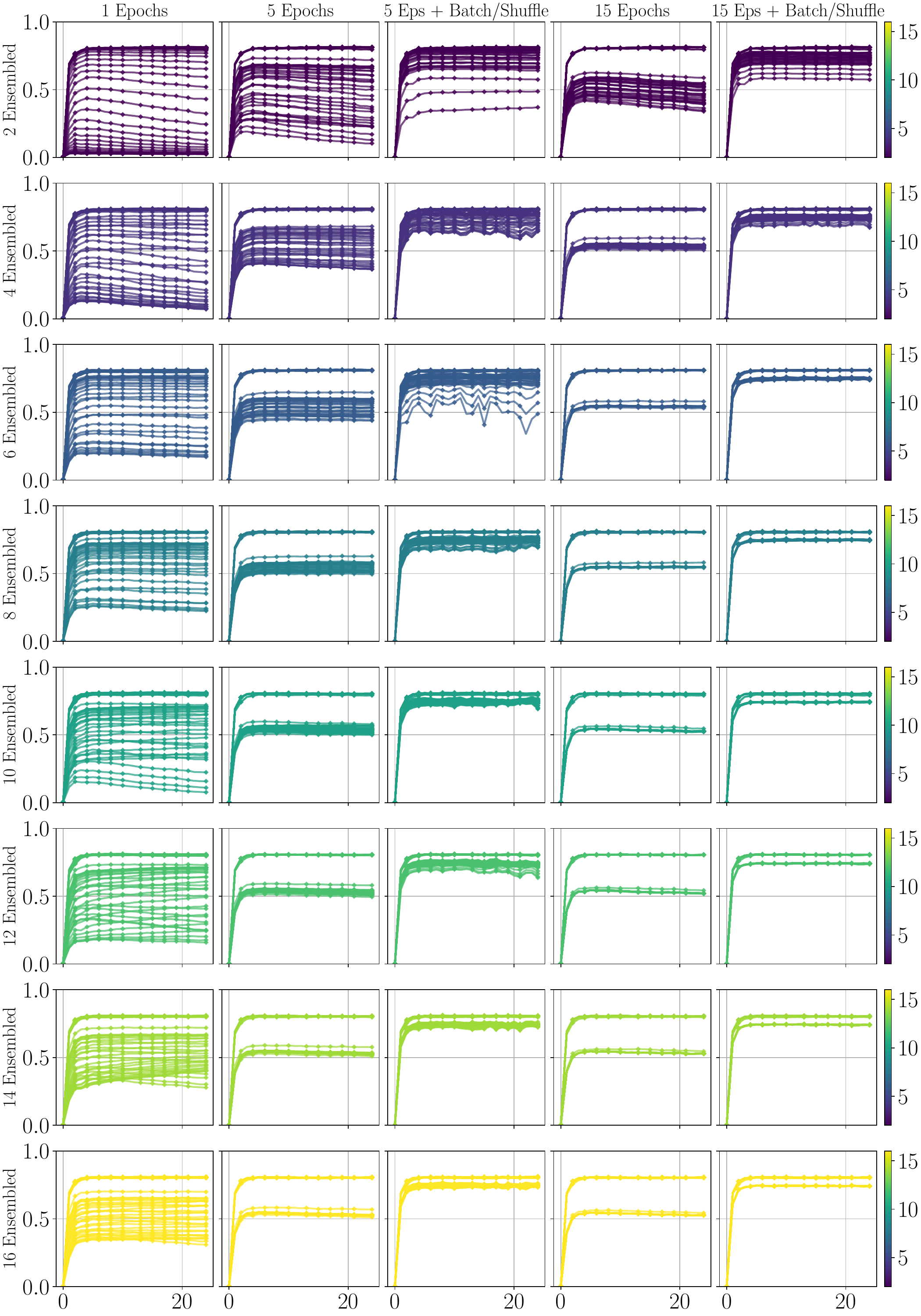}
\end{subfigure}
\hfill
    \caption{This figure provides the counterpart to Figure~\ref{example_val_figure}, where accuracy is computed across the testing distribution rather than a held-out validation distribution. Both testing and validation distributions attain comparatively reduced performance in the ensemble, compared to the training distribution which was seen during fine-tuning (Figure~\ref{example_train_figure}). }
    \label{example_test_figure}
\end{figure}

\begin{figure}[H]
\begin{subfigure}[b]{1\textwidth}
\centering
\includegraphics[width=\textwidth]{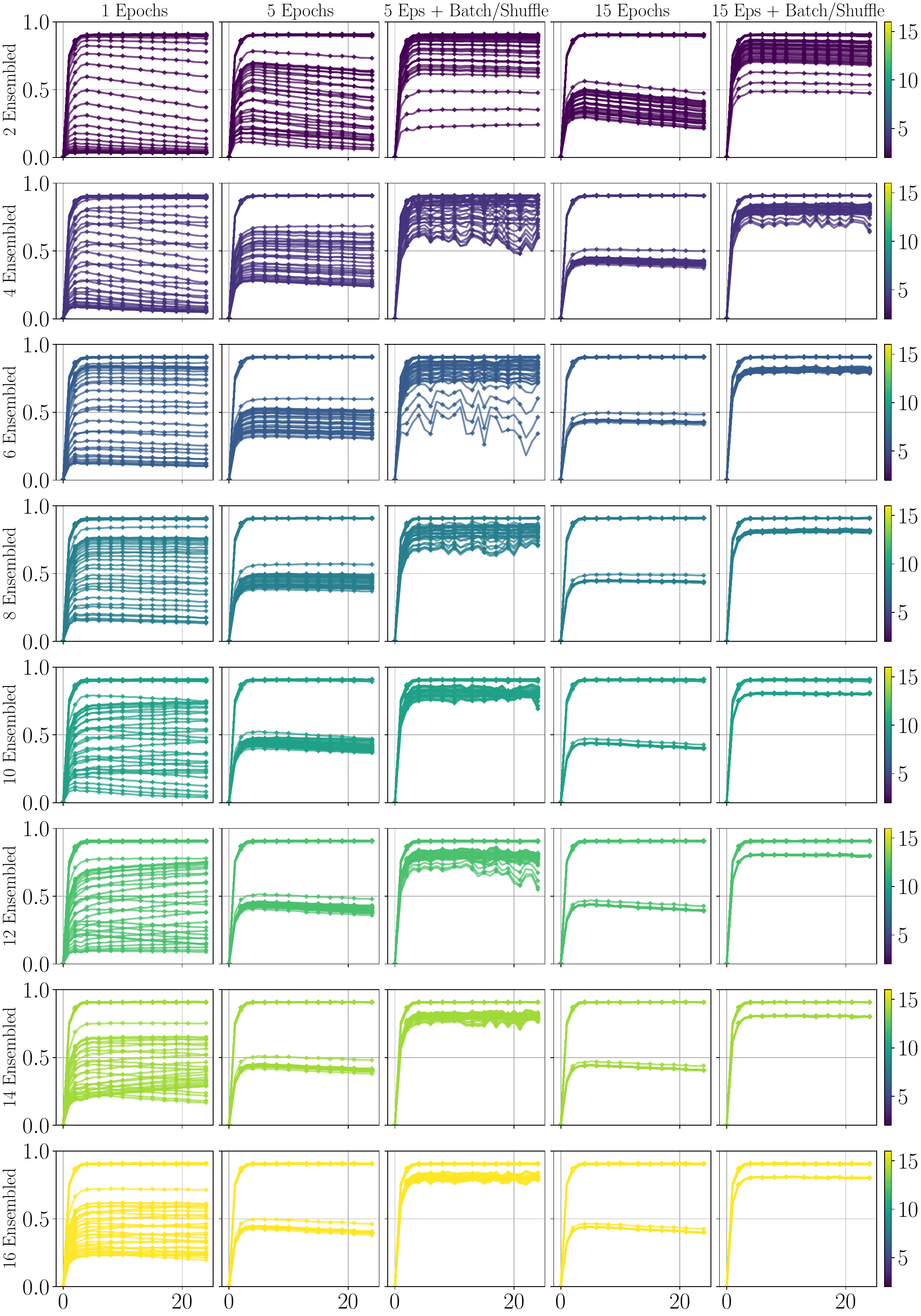}
\end{subfigure}
\hfill
    \caption{In Figures~\ref{example_gaussiannoise_entireGLD_accuracy},~\ref{example_gaussiannoise_entireGLD_loss}, the entire GLD-23K dataset is adulterated with Gaussian noise with standard deviation 1/4~\cite{lp1,lp2,gaussian}. The accuracy is plotted in Figure~\ref{example_gaussiannoise_entireGLD_accuracy}, across the entire noised dataset to evaluate for robustness to covariate shift. The y-axis is artificially set to $(0,1)$ for easier comparison, and no significant performance degradation is observed. We observe that even under this adversarial setting, deploying more ensemble epochs as well as batching/shuffling confers significant adversarial robustness, confirming zero-data training-like effects during amortized model ensembling.}
\label{example_gaussiannoise_entireGLD_accuracy}
\end{figure}

\begin{figure}[H]
\begin{subfigure}[b]{1\textwidth}
\centering
\includegraphics[width=\textwidth]{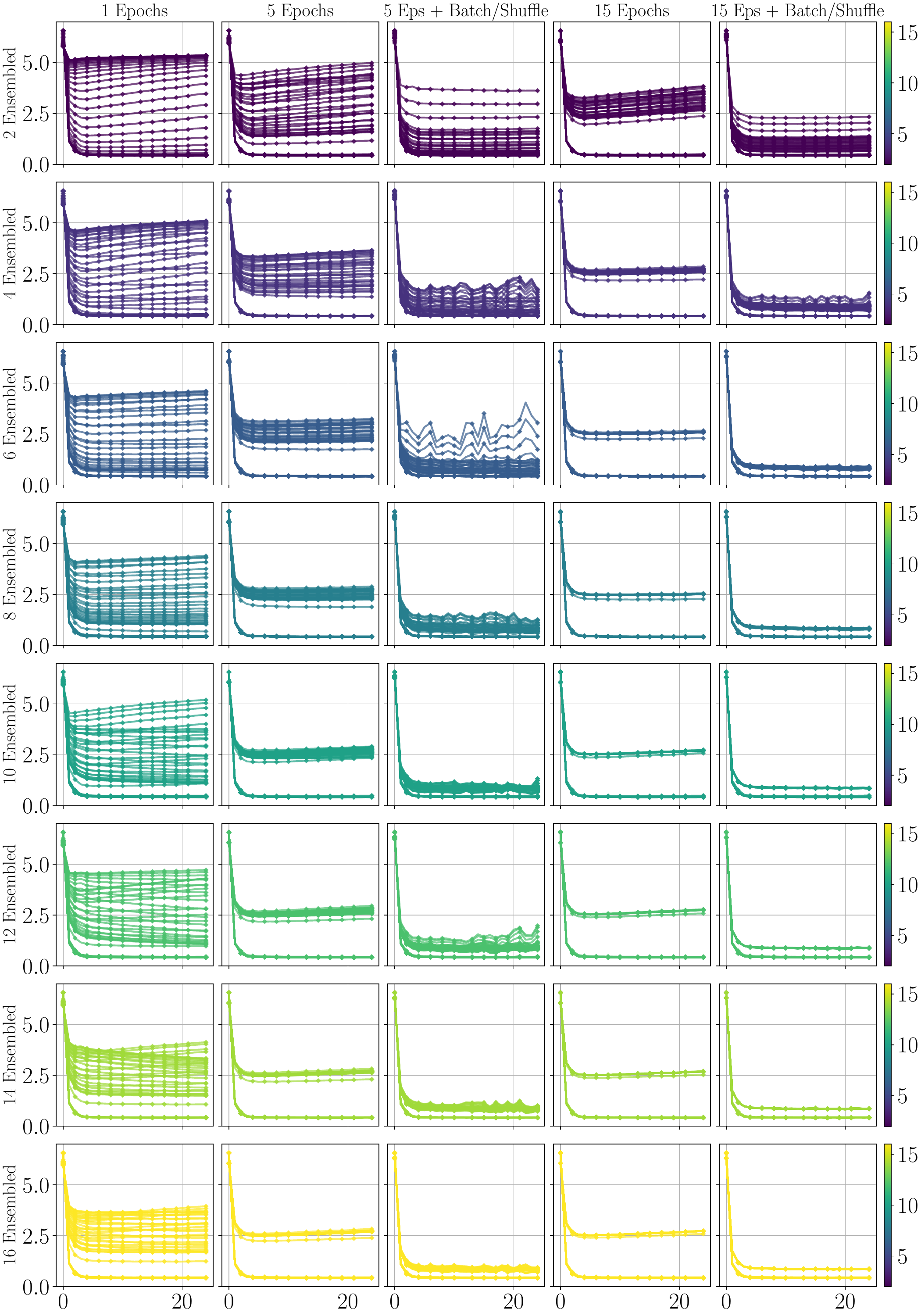}
\end{subfigure}
\hfill
    \caption{We use the identical setup as in Figure~\ref{example_gaussiannoise_entireGLD_accuracy}, where the loss is plotted instead of accuracy. We observe that higher accuracy performance of quadratic ensembles are visible as being inversely proportional to loss, indicating that ensembles are calibrated. This is in contrast to model soups, where miscalibration is a noted issue~\cite{modelsoup}. The y-axis is fixed to $(0, 7)$ for better visualization.}
\label{example_gaussiannoise_entireGLD_loss}
\end{figure}

\clearpage
\section{Supporting Plots for Out-of-Domain Amortized Ensembling (Section~\ref{DomainKnowledge})}\label{DomainKnowledgeAppendixPlot}

\begin{figure}[H]
    \begin{subfigure}[b]{0.24\textwidth}
        \centering
        \includegraphics[width=\textwidth]{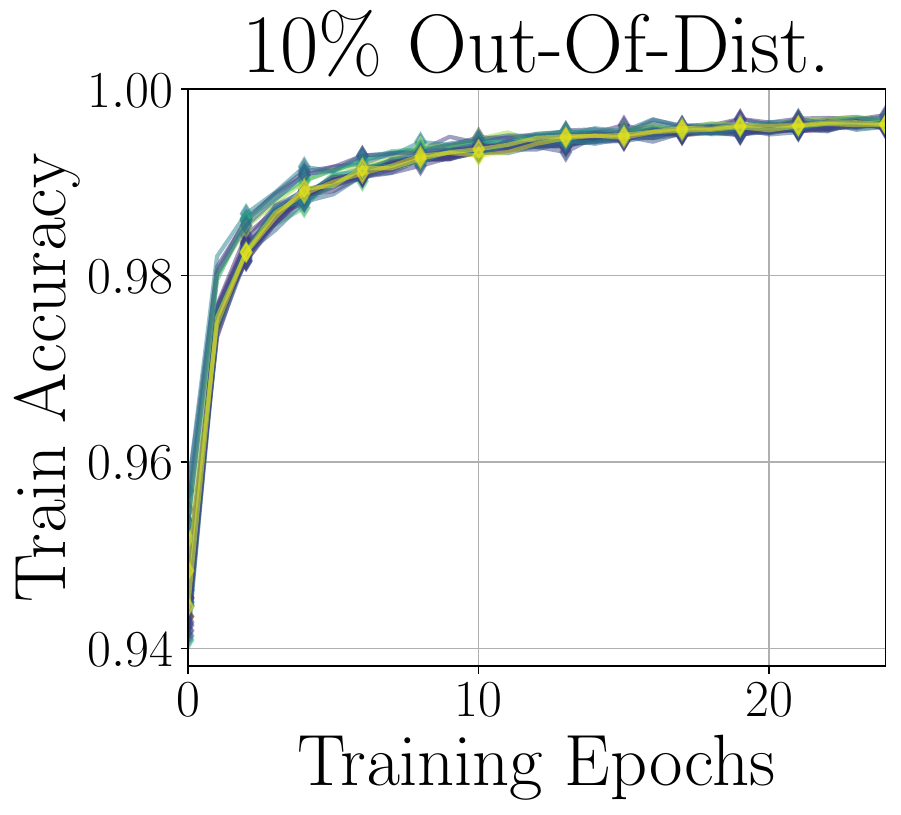}
    \end{subfigure}
    \hfill
    \begin{subfigure}[b]{0.24\textwidth}
        \centering
        \includegraphics[width=\textwidth]{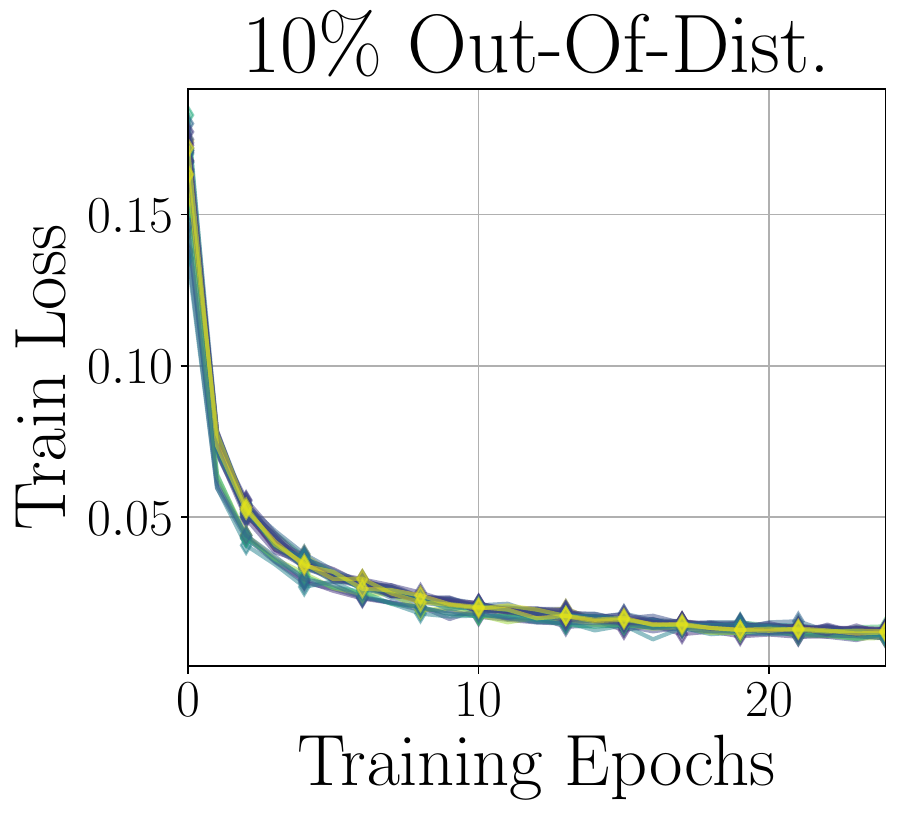}
    \end{subfigure}
    \hfill
    \begin{subfigure}[b]{0.24\textwidth}
        \centering
        \includegraphics[width=\textwidth]{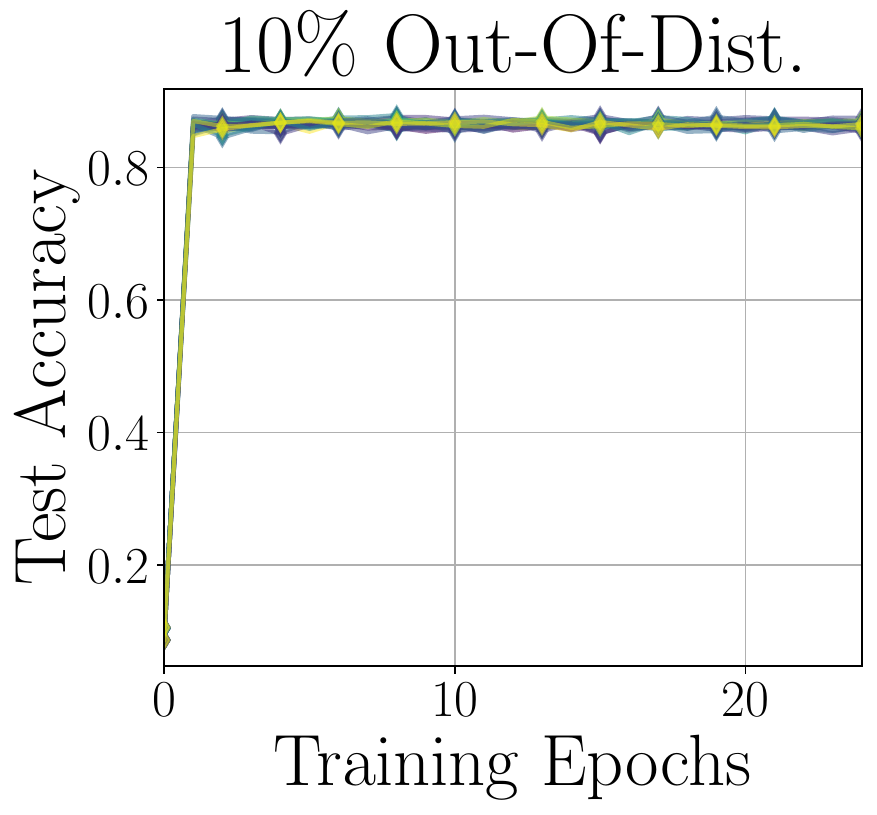}
    \end{subfigure}
    \hfill
    \begin{subfigure}[b]{0.24\textwidth}
        \centering
        \includegraphics[width=\textwidth]{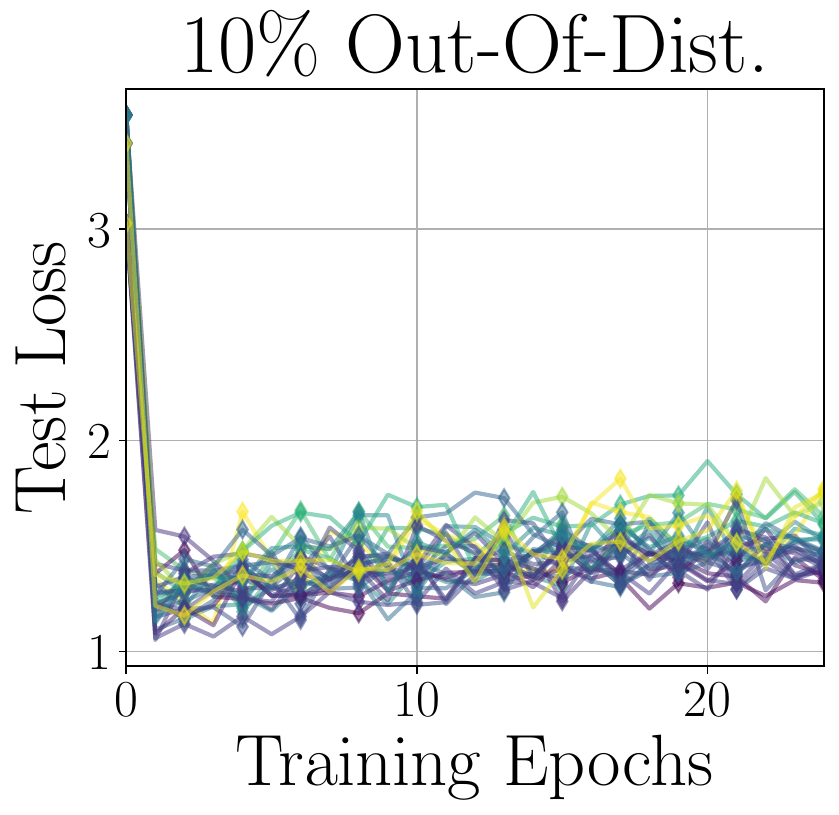}
    \end{subfigure}
    \hfill
        \begin{subfigure}[b]{0.24\textwidth}
        \centering
        \includegraphics[width=\textwidth]{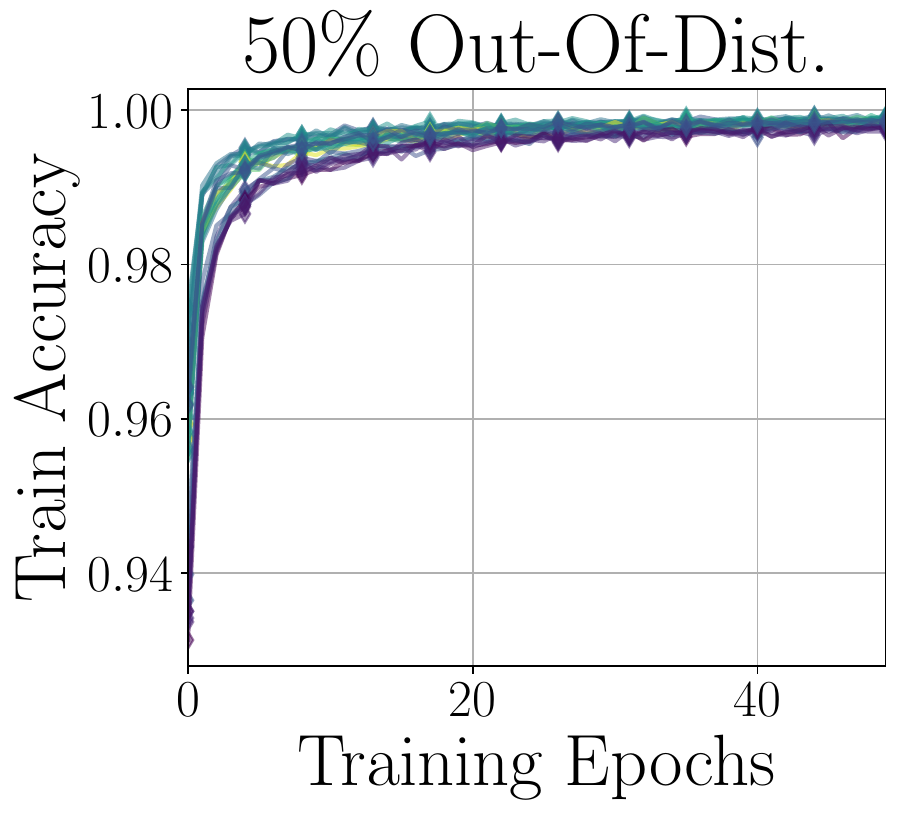}
    \end{subfigure}
    \hfill
    \begin{subfigure}[b]{0.24\textwidth}
        \centering
        \includegraphics[width=\textwidth]{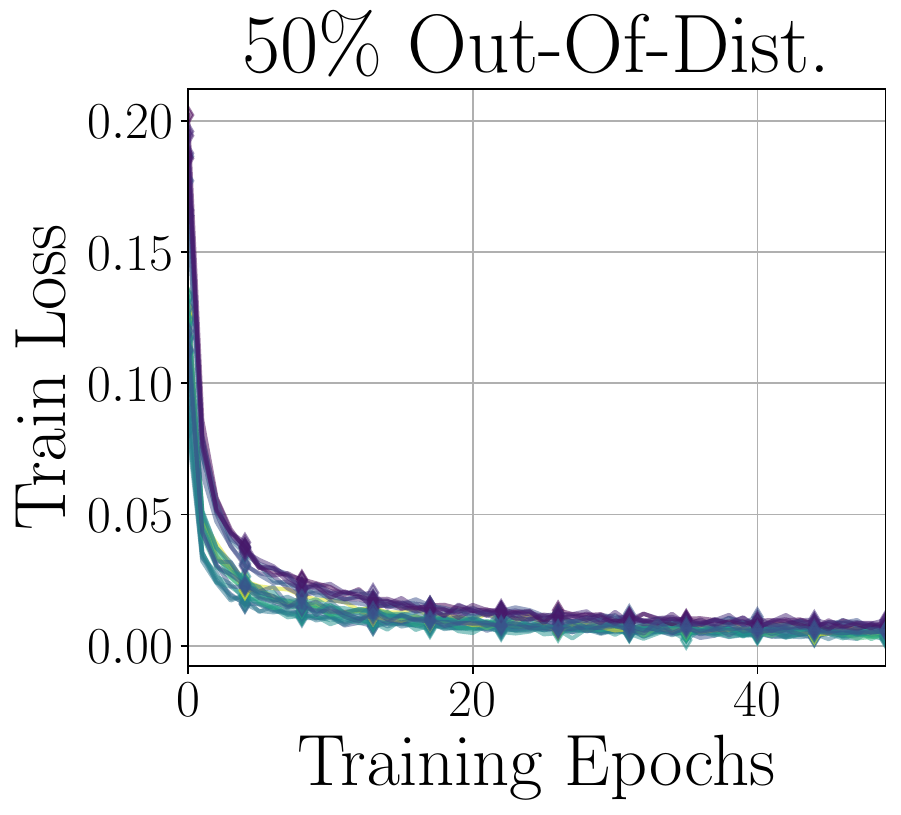}
    \end{subfigure}
    \hfill
    \begin{subfigure}[b]{0.24\textwidth}
        \centering
        \includegraphics[width=\textwidth]{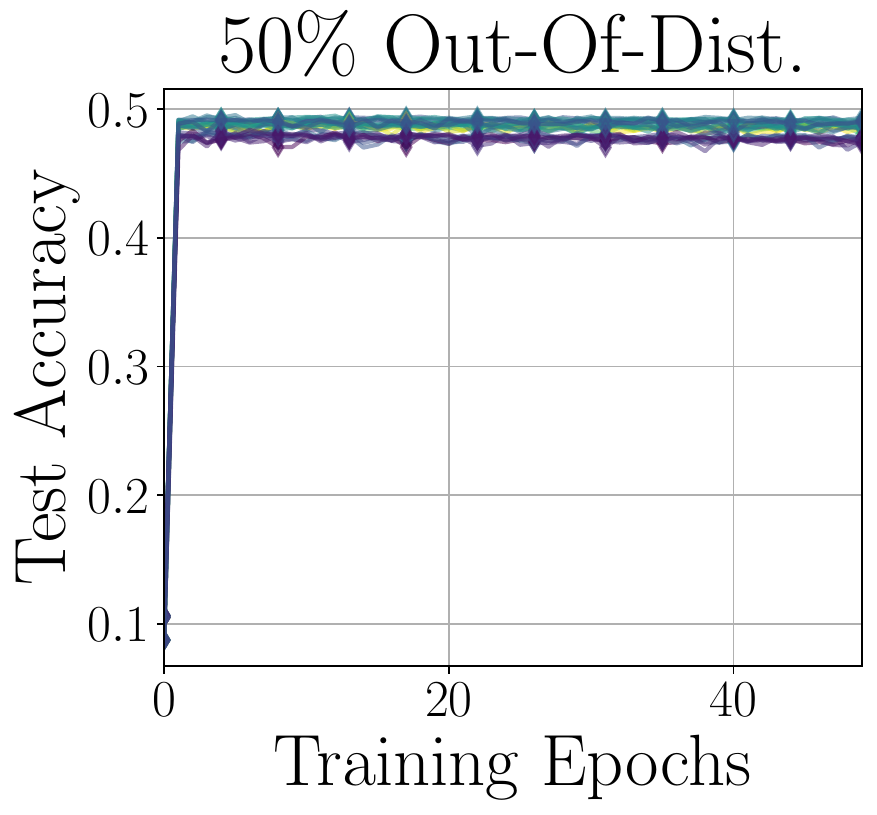}
    \end{subfigure}
    \hfill
    \begin{subfigure}[b]{0.24\textwidth}
        \centering
        \includegraphics[width=\textwidth]{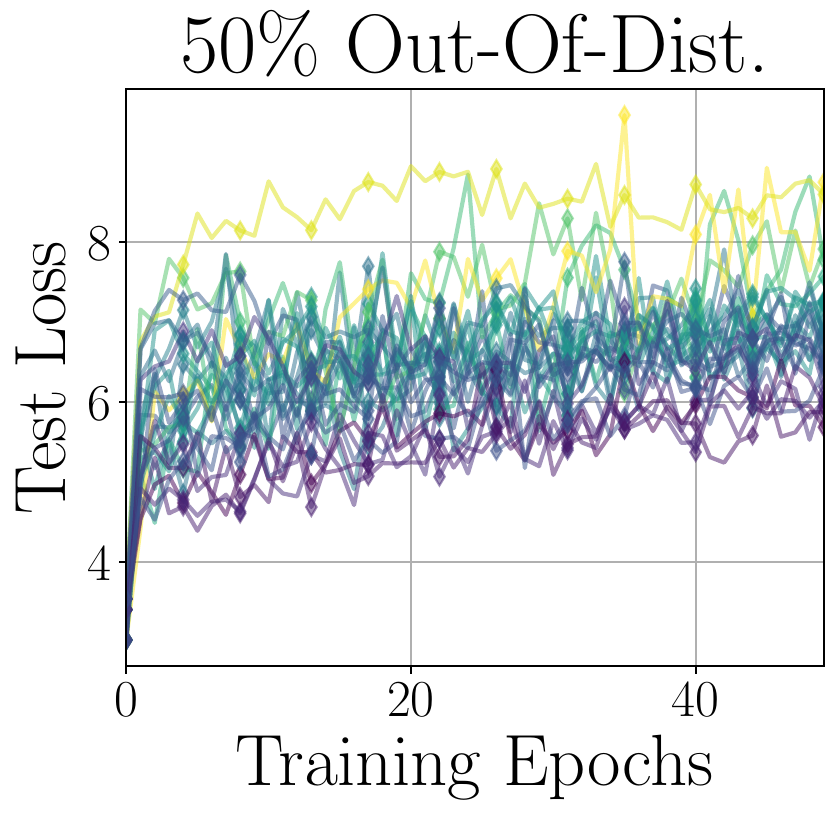}
    \end{subfigure}
    \hfill
        \begin{subfigure}[b]{0.24\textwidth}
        \centering
        \includegraphics[width=\textwidth]{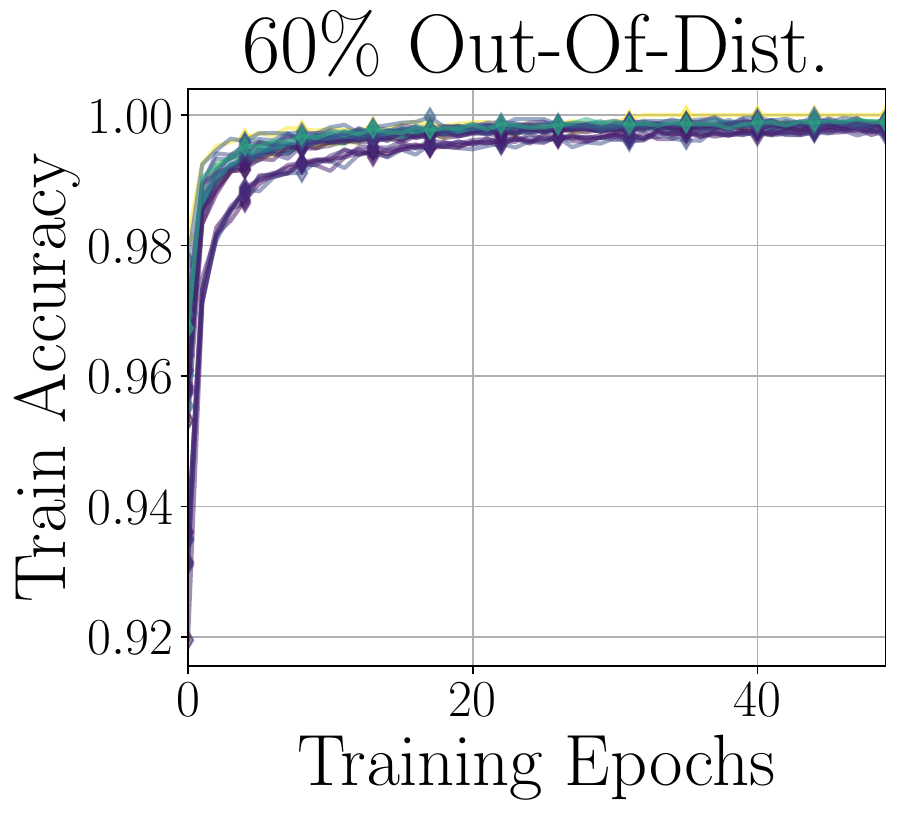}
    \end{subfigure}
    \hfill
    \begin{subfigure}[b]{0.24\textwidth}
        \centering
        \includegraphics[width=\textwidth]{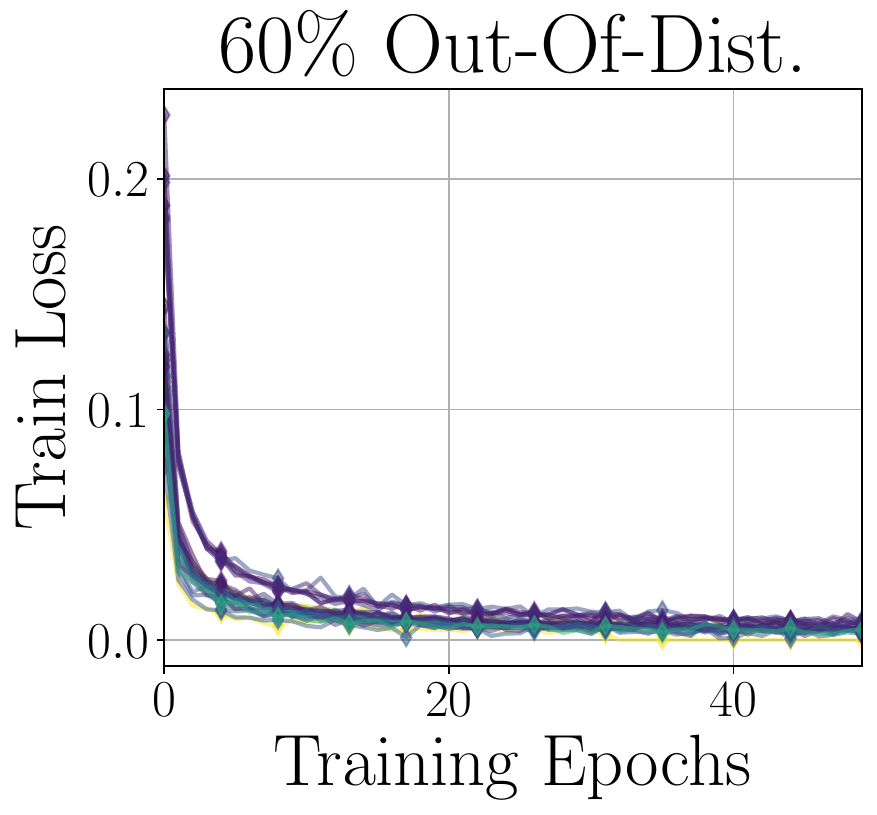}
    \end{subfigure}
    \hfill
    \begin{subfigure}[b]{0.24\textwidth}
        \centering
        \includegraphics[width=\textwidth]{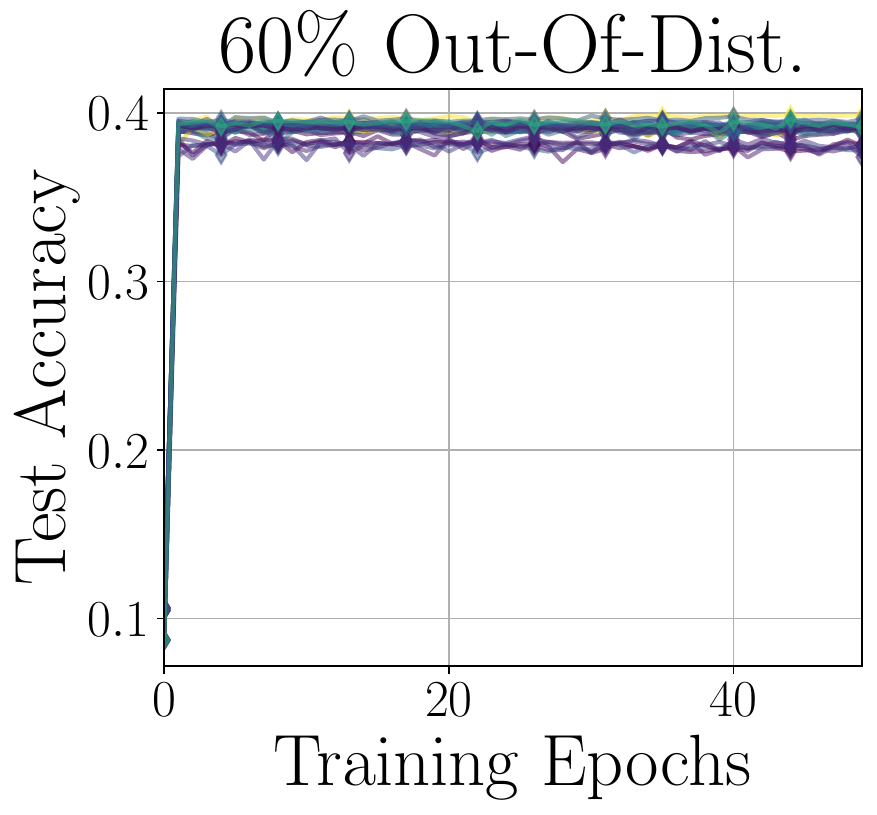}
    \end{subfigure}
    \hfill
    \begin{subfigure}[b]{0.24\textwidth}
        \centering
        \includegraphics[width=\textwidth]{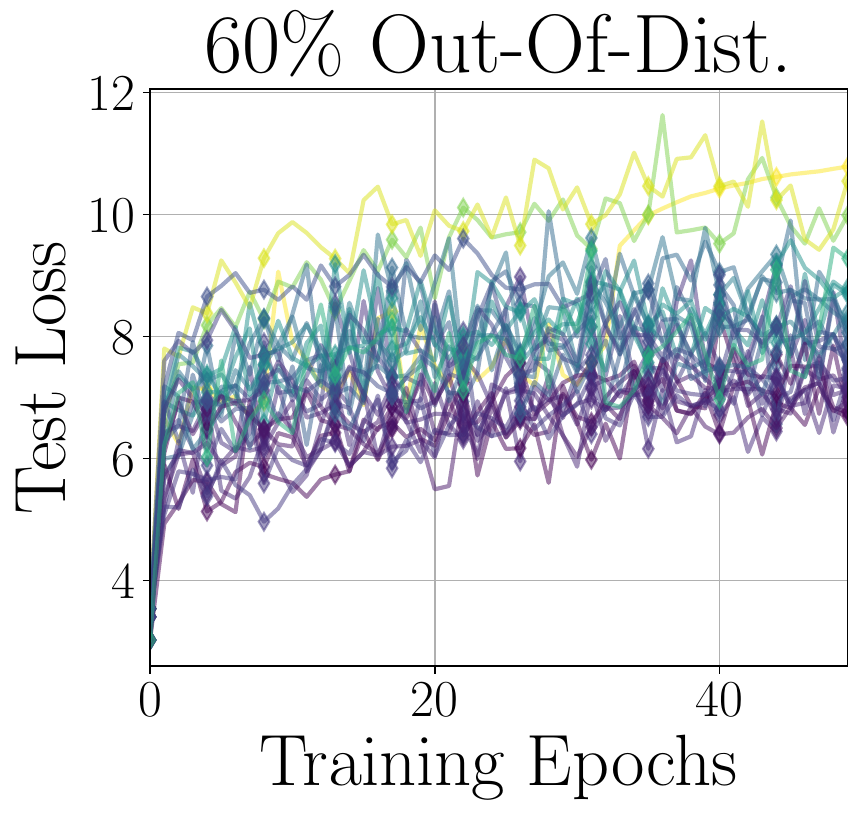}
    \end{subfigure}
    \hfill
            \begin{subfigure}[b]{0.24\textwidth}
        \centering
        \includegraphics[width=\textwidth]{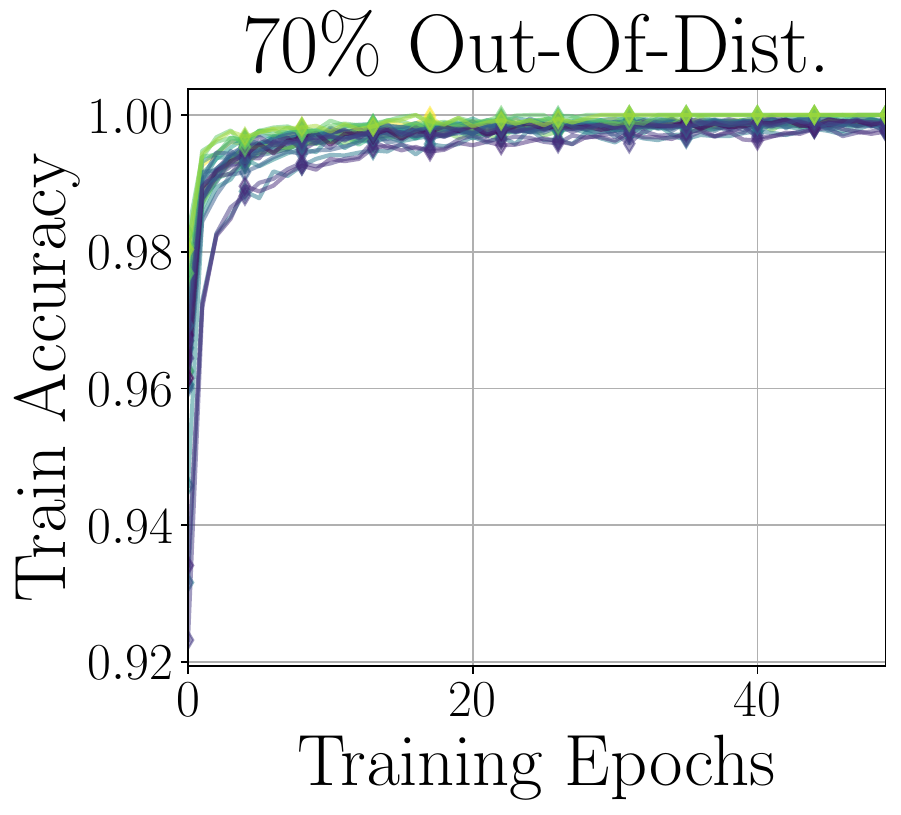}
    \end{subfigure}
    \hfill
    \begin{subfigure}[b]{0.24\textwidth}
        \centering
        \includegraphics[width=\textwidth]{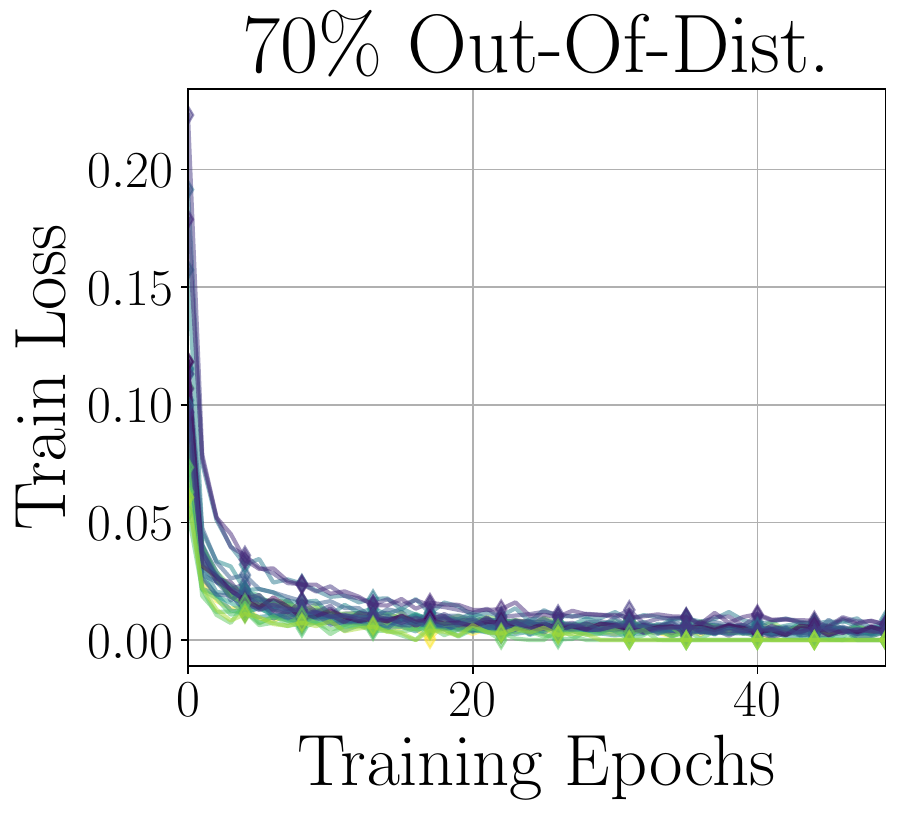}
    \end{subfigure}
    \hfill
    \begin{subfigure}[b]{0.24\textwidth}
        \centering
        \includegraphics[width=\textwidth]{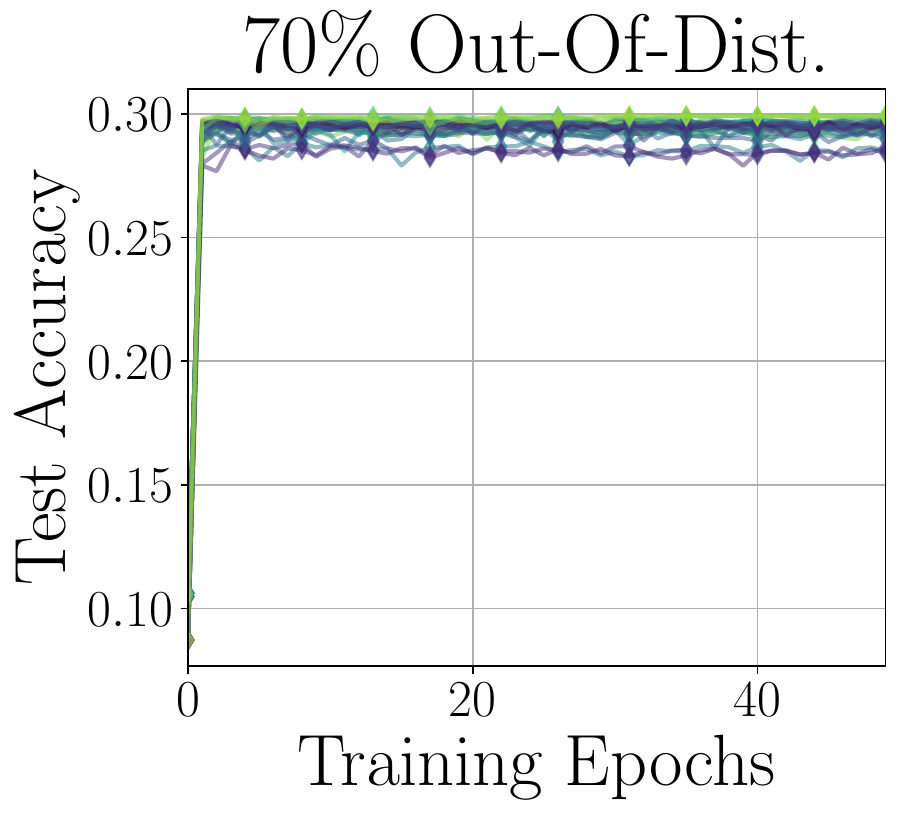}
    \end{subfigure}
    \hfill
    \begin{subfigure}[b]{0.24\textwidth}
        \centering
        \includegraphics[width=\textwidth]{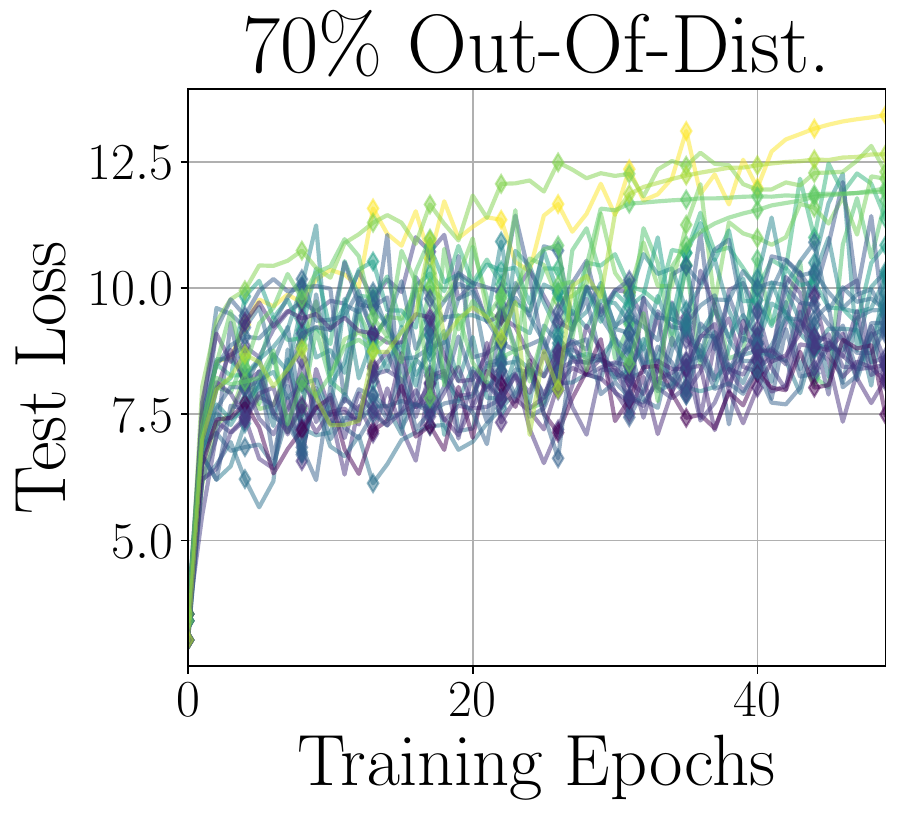}
    \end{subfigure}
    \hfill
            \begin{subfigure}[b]{0.24\textwidth}
        \centering
        \includegraphics[width=\textwidth]{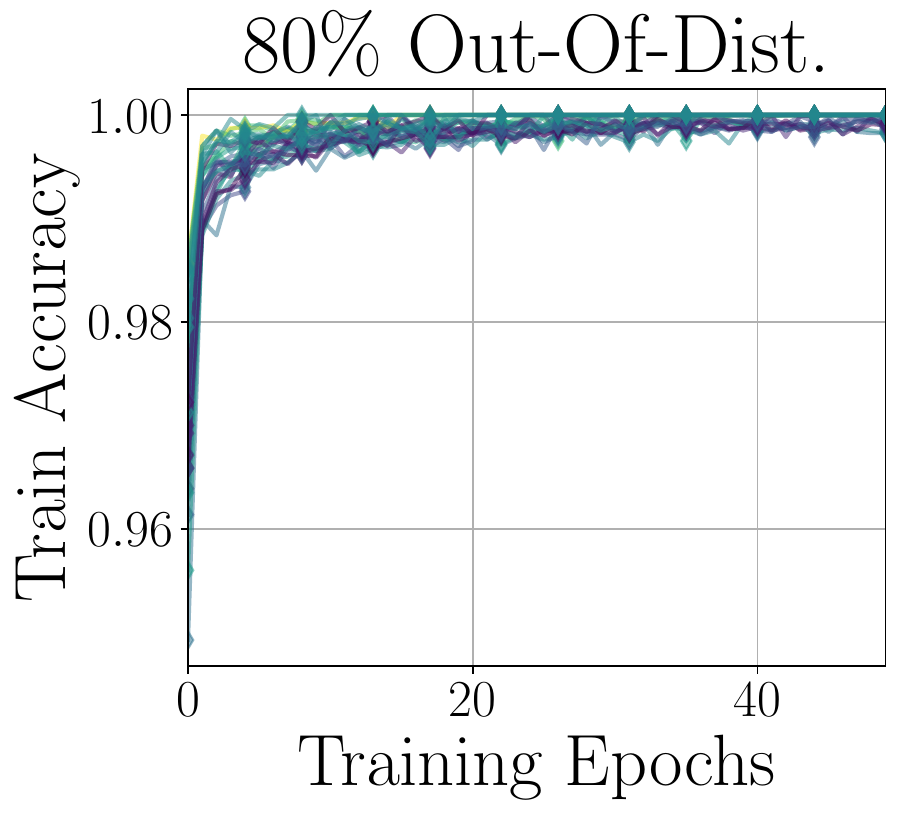}
    \end{subfigure}
    \hfill
    \begin{subfigure}[b]{0.24\textwidth}
        \centering
        \includegraphics[width=\textwidth]{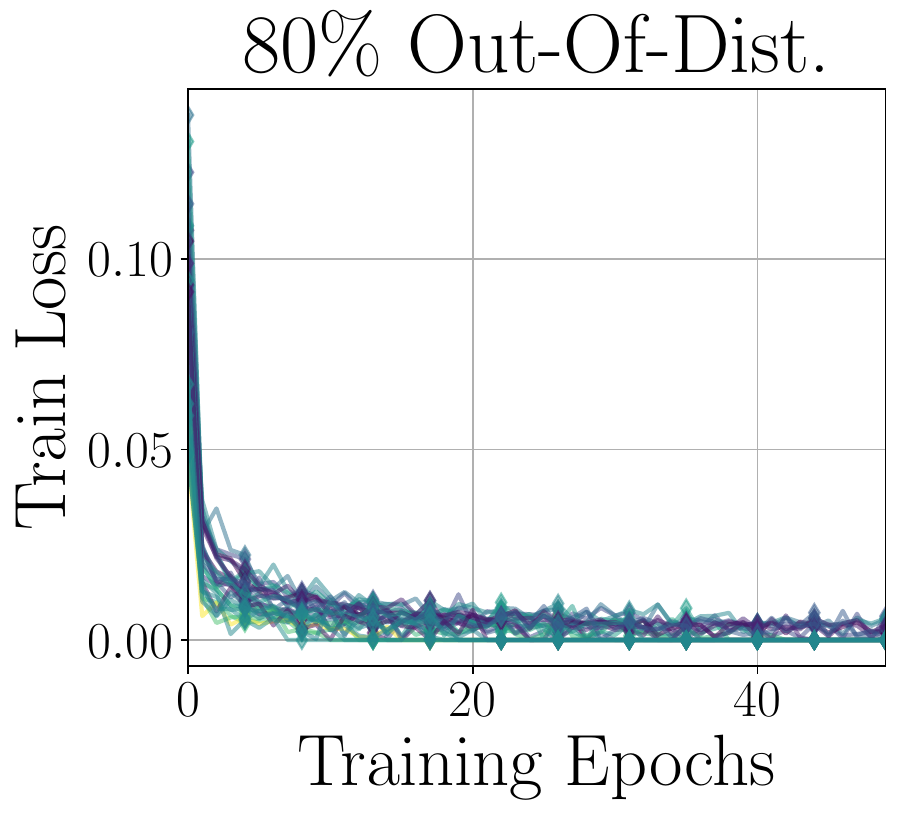}
    \end{subfigure}
    \hfill
    \begin{subfigure}[b]{0.24\textwidth}
        \centering
        \includegraphics[width=\textwidth]{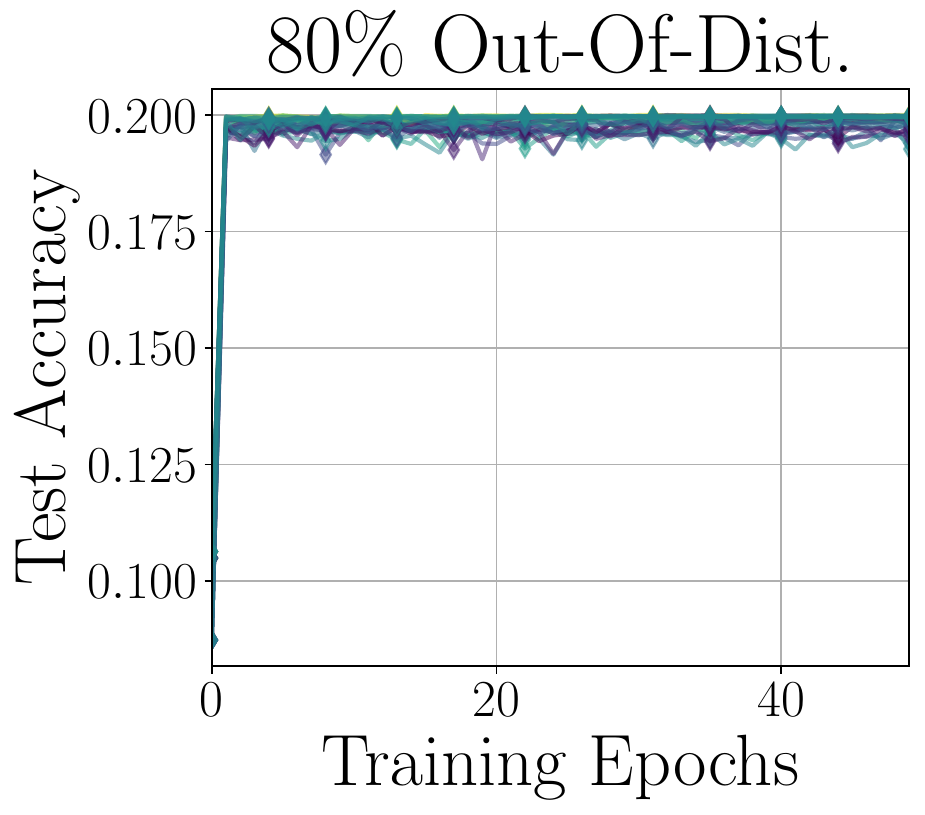}
    \end{subfigure}
    \hfill
    \begin{subfigure}[b]{0.24\textwidth}
        \centering
        \includegraphics[width=\textwidth]{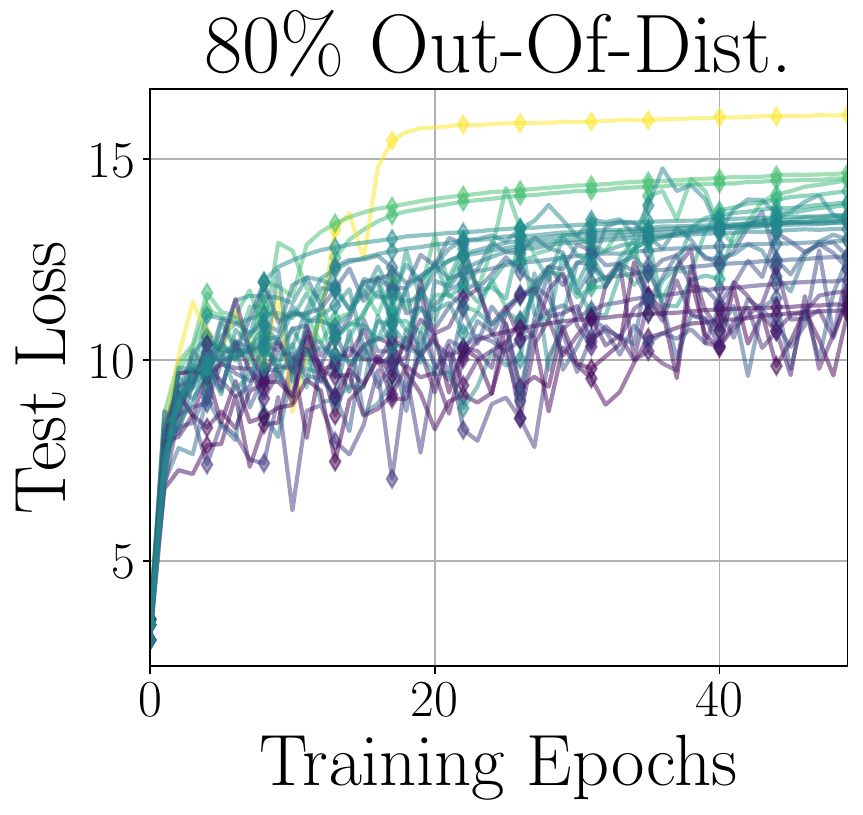}
    \end{subfigure}
    \hfill
            \begin{subfigure}[b]{0.24\textwidth}
        \centering
        \includegraphics[width=\textwidth]{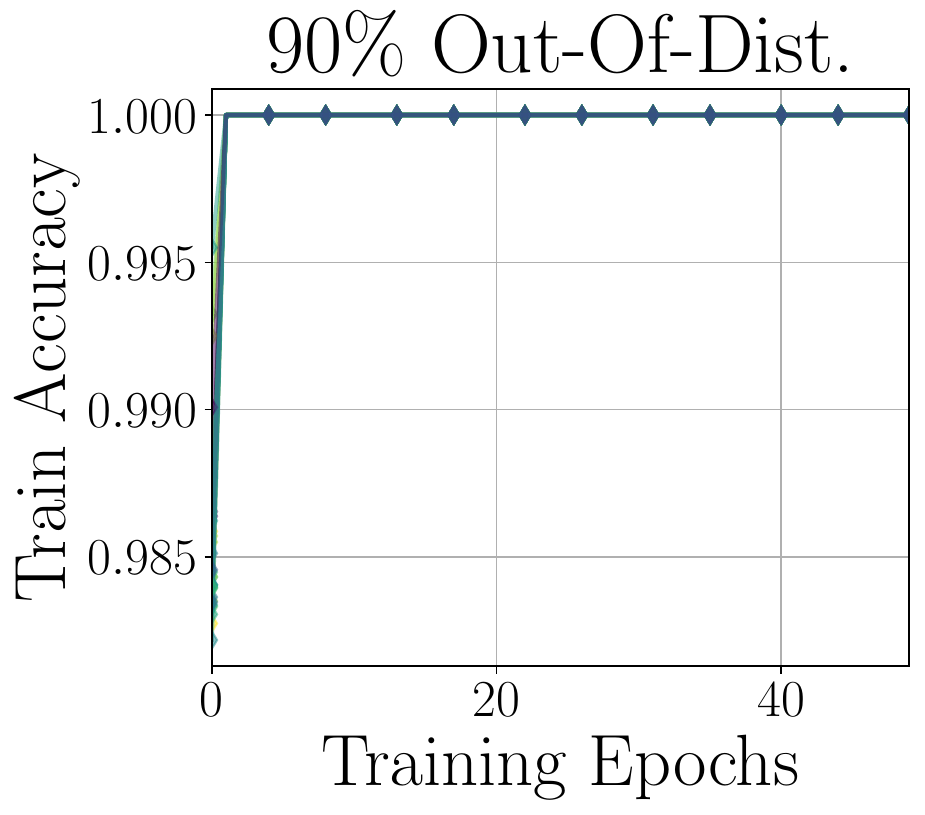}
    \end{subfigure}
    \hfill
    \begin{subfigure}[b]{0.24\textwidth}
        \centering
        \includegraphics[width=\textwidth]{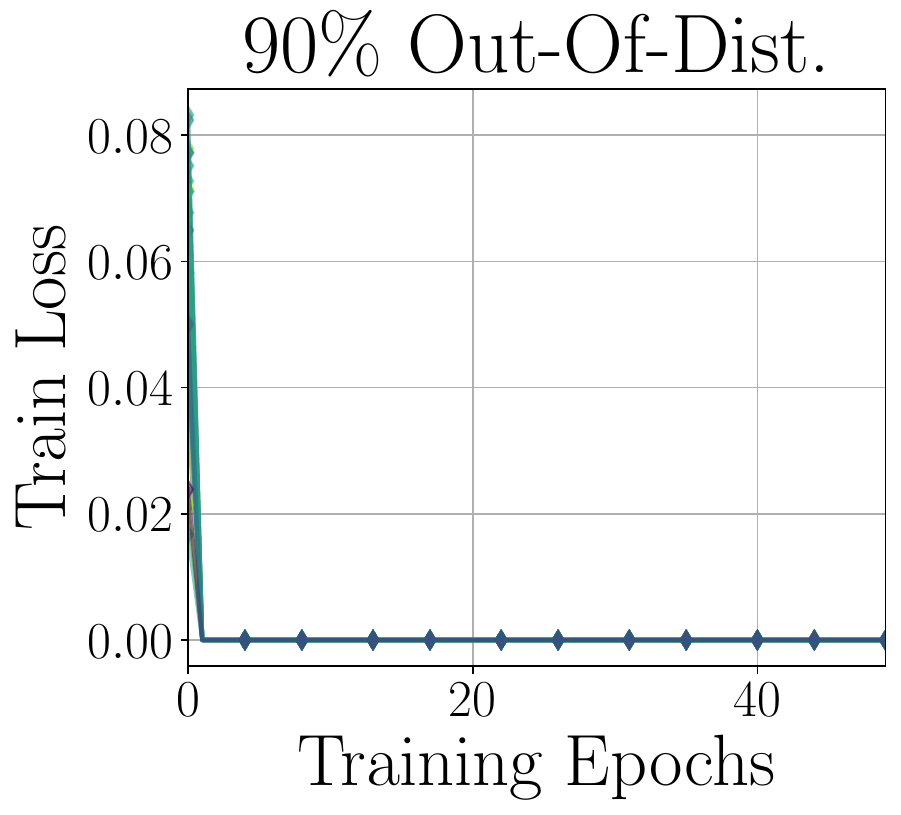}
    \end{subfigure}
    \hfill
    \begin{subfigure}[b]{0.24\textwidth}
        \centering
        \includegraphics[width=\textwidth]{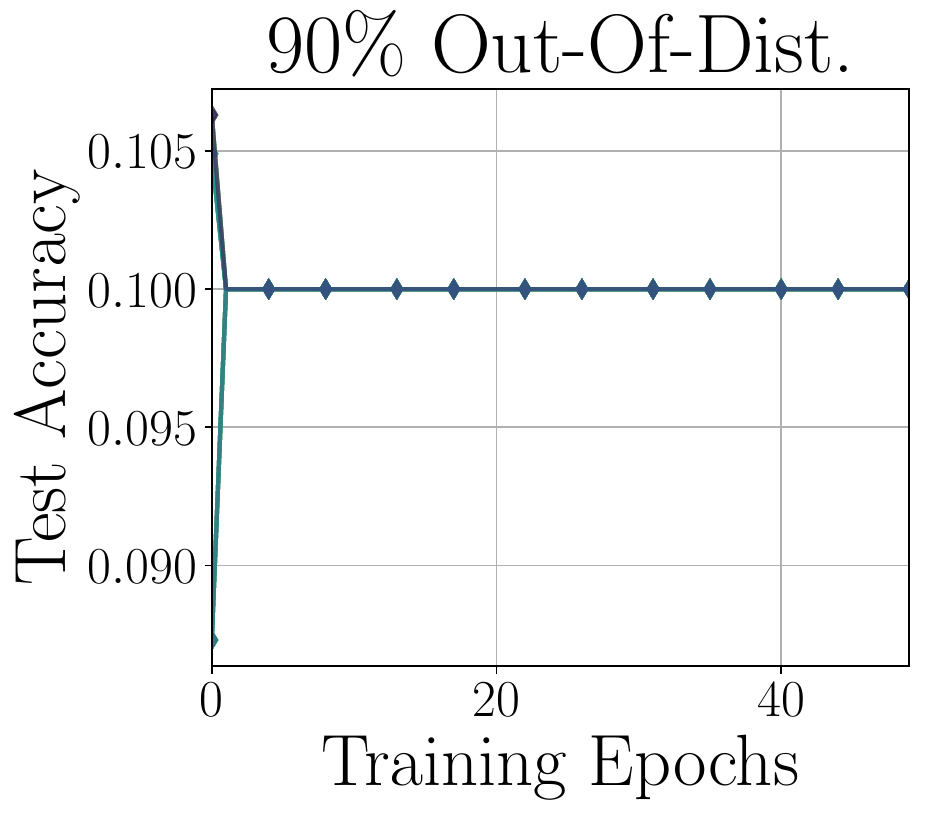}
    \end{subfigure}
    \hfill
    \begin{subfigure}[b]{0.24\textwidth}
        \centering
        \includegraphics[width=\textwidth]{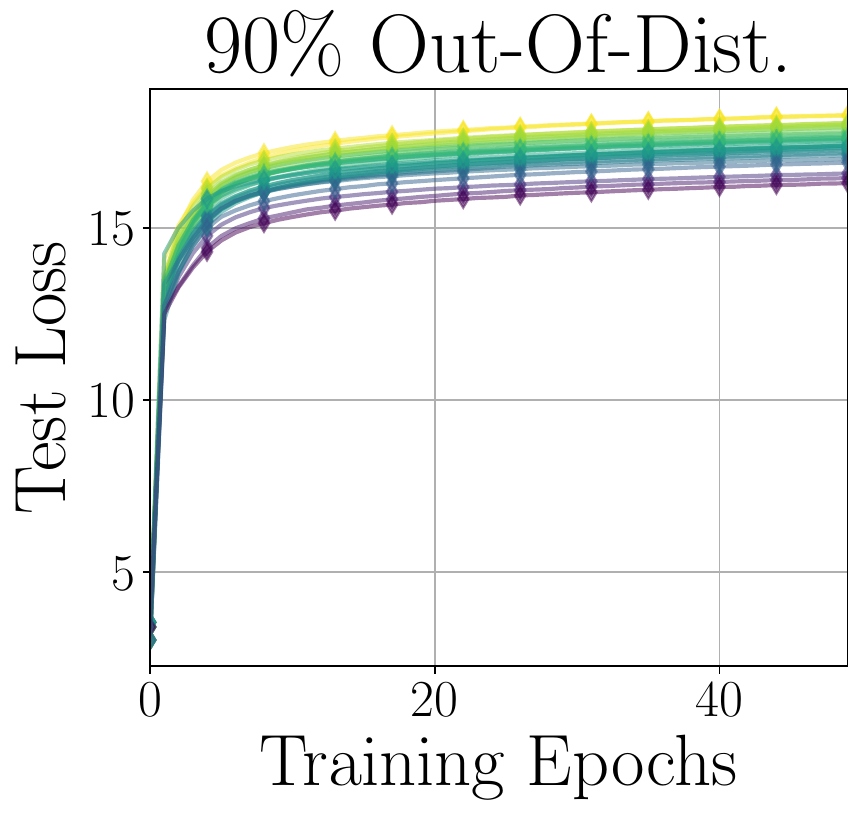}
    \end{subfigure}
    %\hfill
    \caption{Train/Test performance for model ingredients, where time series data are organized hierarchically based on the final test loss at epoch 50, before applying a colormap with a gradient transitioning from dark to light. Lighter colors correspond to higher test loss values. The test loss for all OOD settings (rightmost column), evaluated on the entire CIFAR-10 test data partition, increases with each additional training epoch, indicating overfitting to the training data. Due to viewing limited classes during training, each model ingredient is overconfident in their capabilities (leftmost column).
    }
    \label{OOD_Appendix_Figure_traininfo}
\end{figure}

\clearpage
Over a sweep of optimizer hyperparameter configurations, the time series accuracy data for AME ensembles appear to separate into four distinct phases as training epochs progress: (a) low loss with high accuracy, (b) low loss with unstable accuracy, (c) medium loss with unstable accuracy, similar to that observed in model soup, and (d) high loss with model soup-level accuracy. Notably, prior work has indicated that model soup exists in such a miscalibrated regime~\cite{modelsoup}. Typically, the test-time ensemble loss splits into three distinct branches clearly visible in the color coding over varying hyperparameter configurations, a pattern also observed in settings with lower percentages of OOD data (Figure~\ref{OOD_Appendix_Figure_ensemble9C}). We leave an explanation of this curious phenomena for future research.

\begin{figure}[H]
    \begin{subfigure}[b]{0.24\textwidth}
        \centering
        \includegraphics[width=\textwidth]{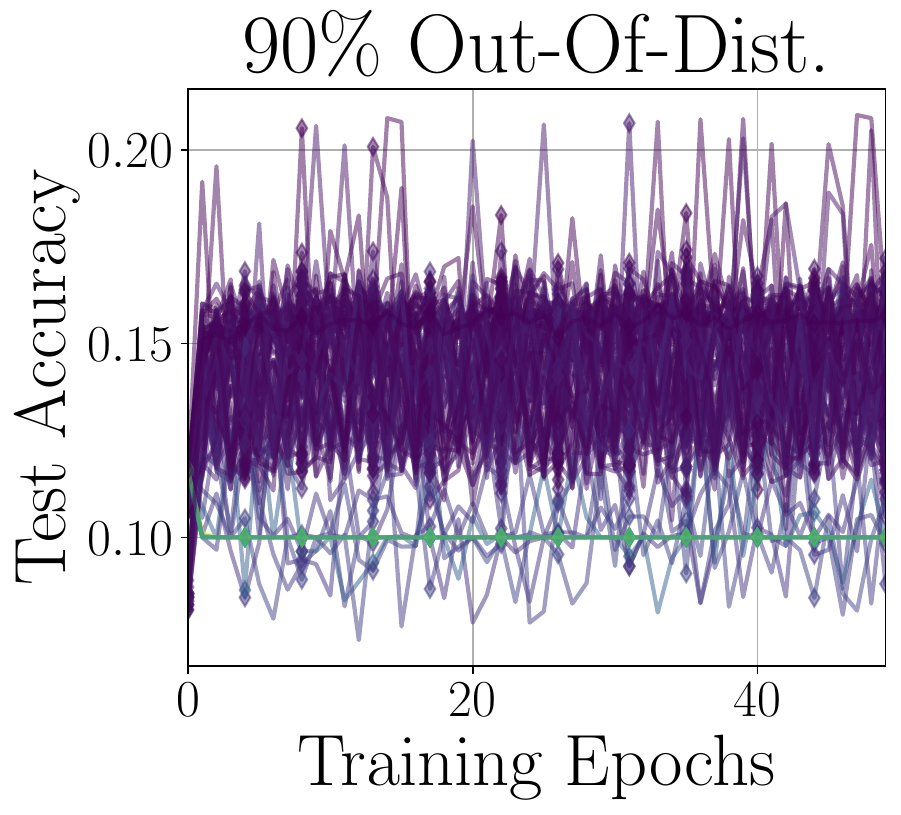}
    \end{subfigure}
    \hfill
    \begin{subfigure}[b]{0.23\textwidth}
        \centering
        \includegraphics[width=\textwidth]{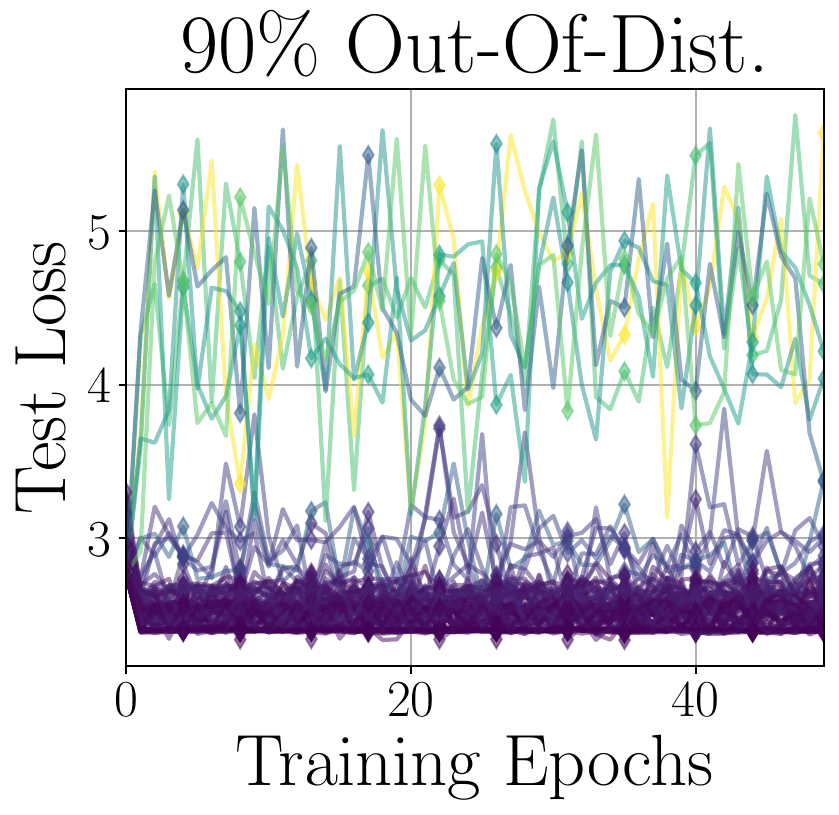}
    \end{subfigure}
    \hfill
    \begin{subfigure}[b]{0.24\textwidth}
        \centering
        \includegraphics[width=\textwidth]{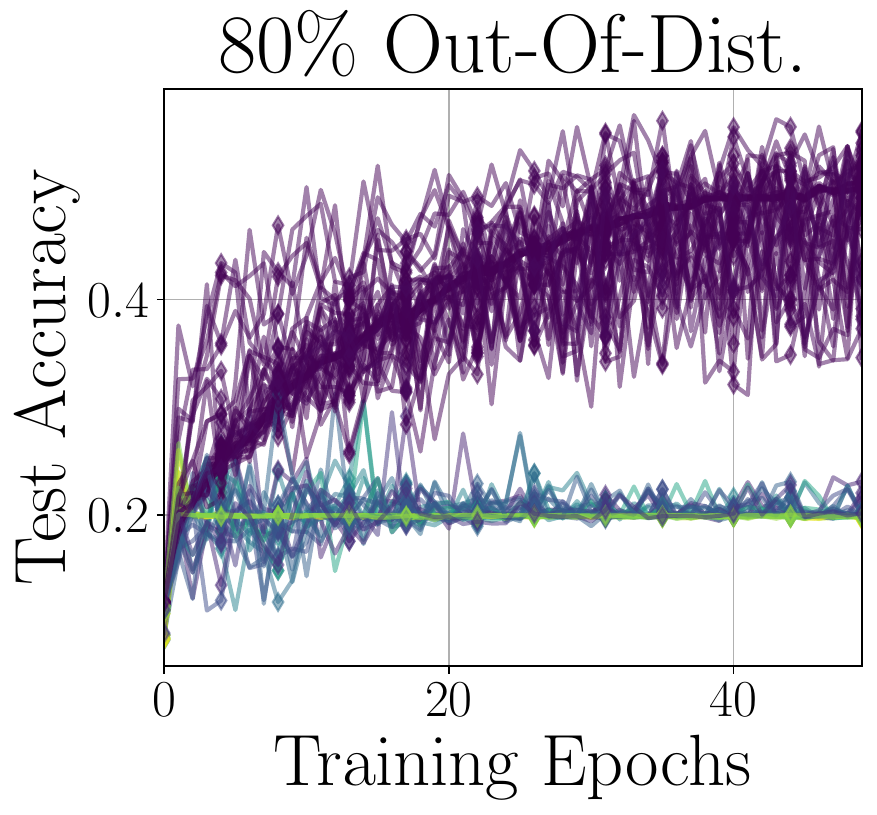}
    \end{subfigure}
    \hfill
    \begin{subfigure}[b]{0.23\textwidth}
        \centering
        \includegraphics[width=\textwidth]{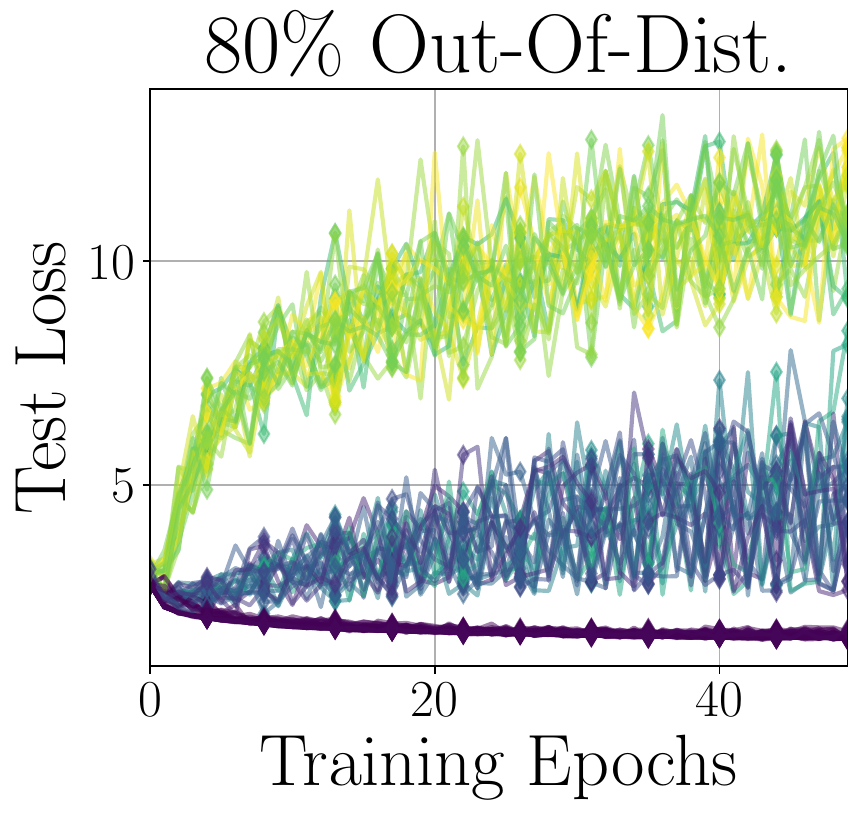}
    \end{subfigure}
    \hfill
        \begin{subfigure}[b]{0.24\textwidth}
        \centering
        \includegraphics[width=\textwidth]{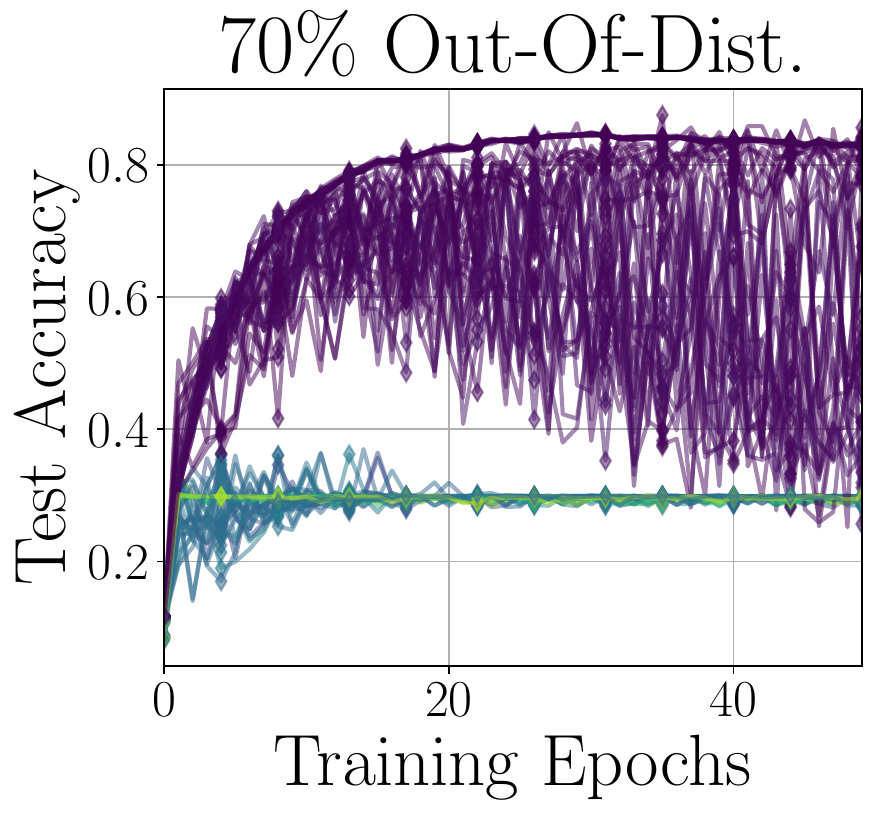}
    \end{subfigure}
    \hfill
    \begin{subfigure}[b]{0.24\textwidth}
        \centering
        \includegraphics[width=\textwidth]{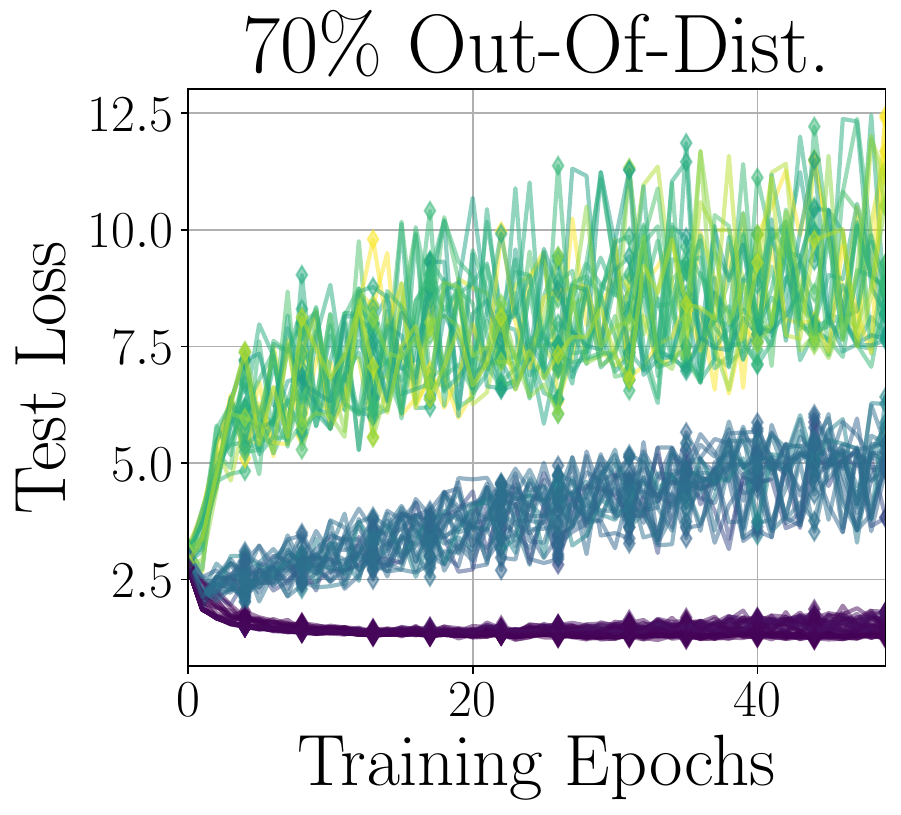}
    \end{subfigure}
    \hfill
    \begin{subfigure}[b]{0.24\textwidth}
        \centering
        \includegraphics[width=\textwidth]{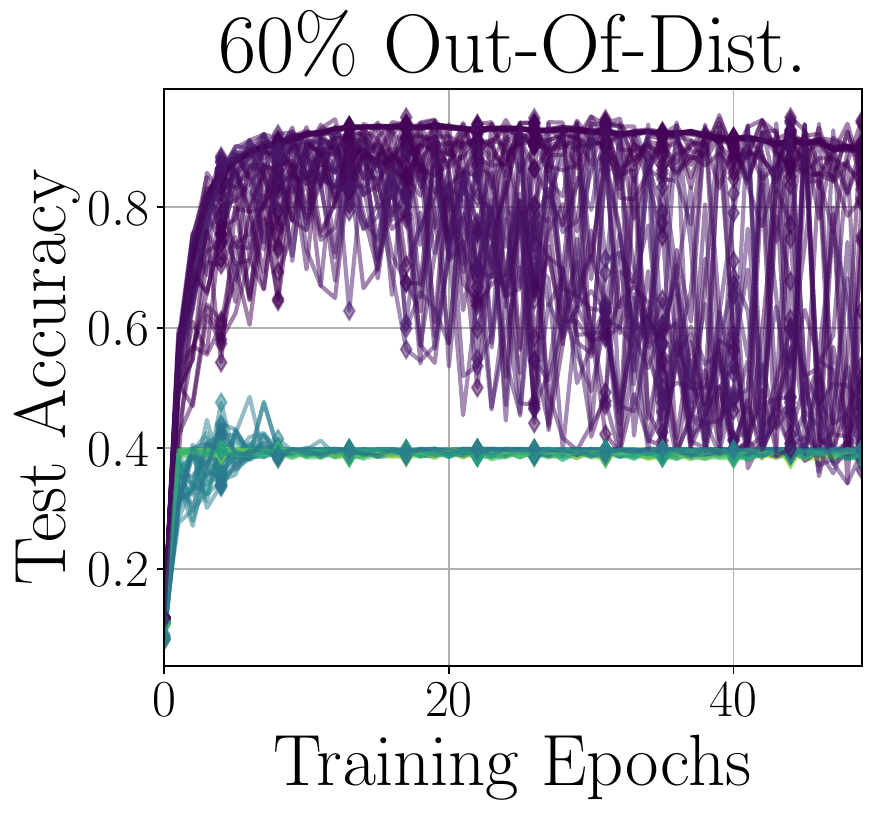}
    \end{subfigure}
    \hfill
    \begin{subfigure}[b]{0.23\textwidth}
        \centering
        \includegraphics[width=\textwidth]{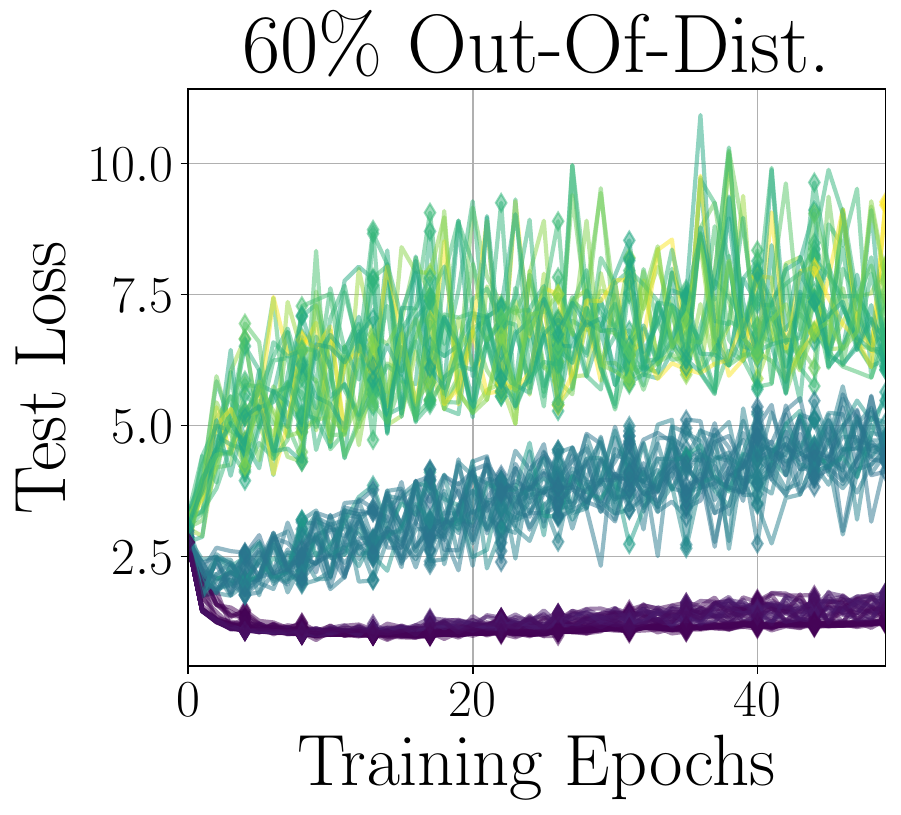}
    \end{subfigure}
    %\hfill
    \caption{Models are ensembled for 35 epochs using a batch size of 15. To assess calibration, the test accuracies are hierarchically organized by final test loss and visualized using color coding, where darker colors represent lower test loss. Test accuracy is plotted next to corresponding test loss, sharing the identical color coding. Intriguingly, the worst performing ensembles with respect to test loss for the $70\%$-OOD setting attain $\sim 30.00\%$ test accuracy, which has significantly higher loss than the worst performing ensembles for the $90\%$-OOD setting with $\sim 10.00\%$ test accuracy. 
    }
    \label{OOD_Appendix_Figure_ensembleinfo}
\end{figure}

\begin{figure}[H]
    \begin{subfigure}[b]{0.24\textwidth}
        \centering
        \includegraphics[width=\textwidth]{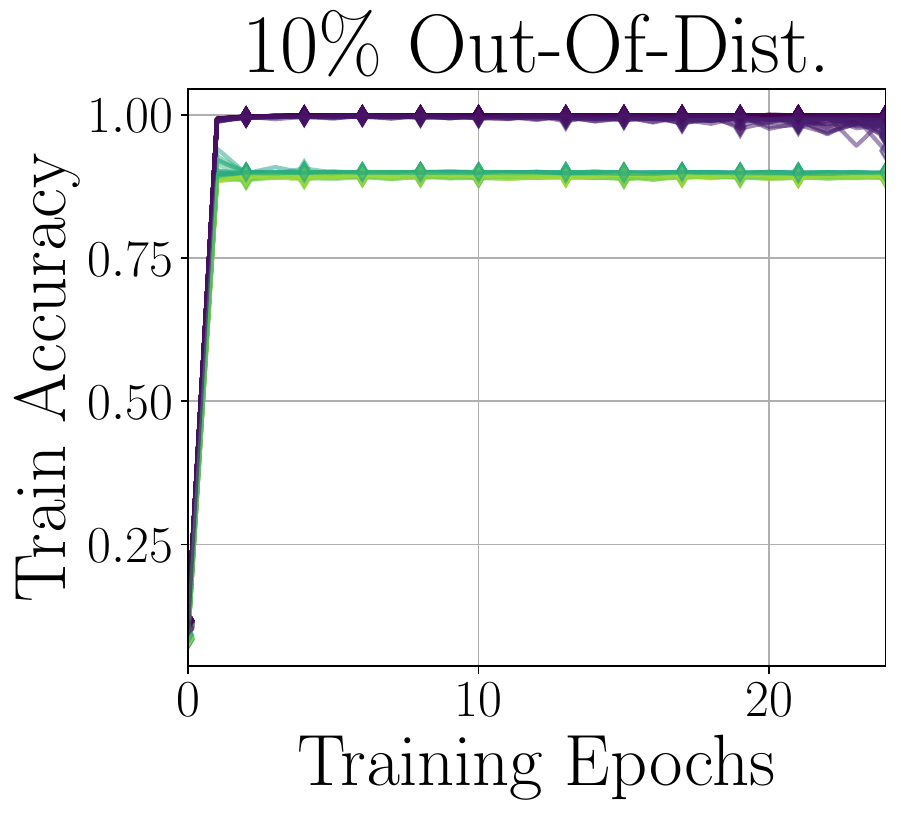}
    \end{subfigure}
    \hfill
    \begin{subfigure}[b]{0.23\textwidth}
        \centering
        \includegraphics[width=\textwidth]{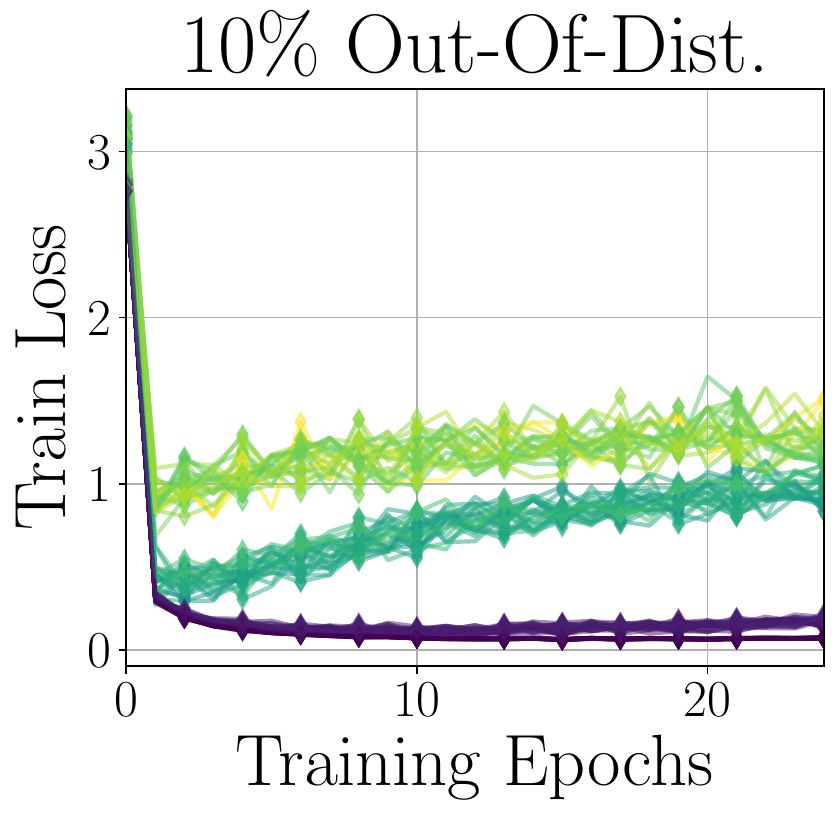}
    \end{subfigure}
    \hfill
    \begin{subfigure}[b]{0.24\textwidth}
        \centering
        \includegraphics[width=\textwidth]{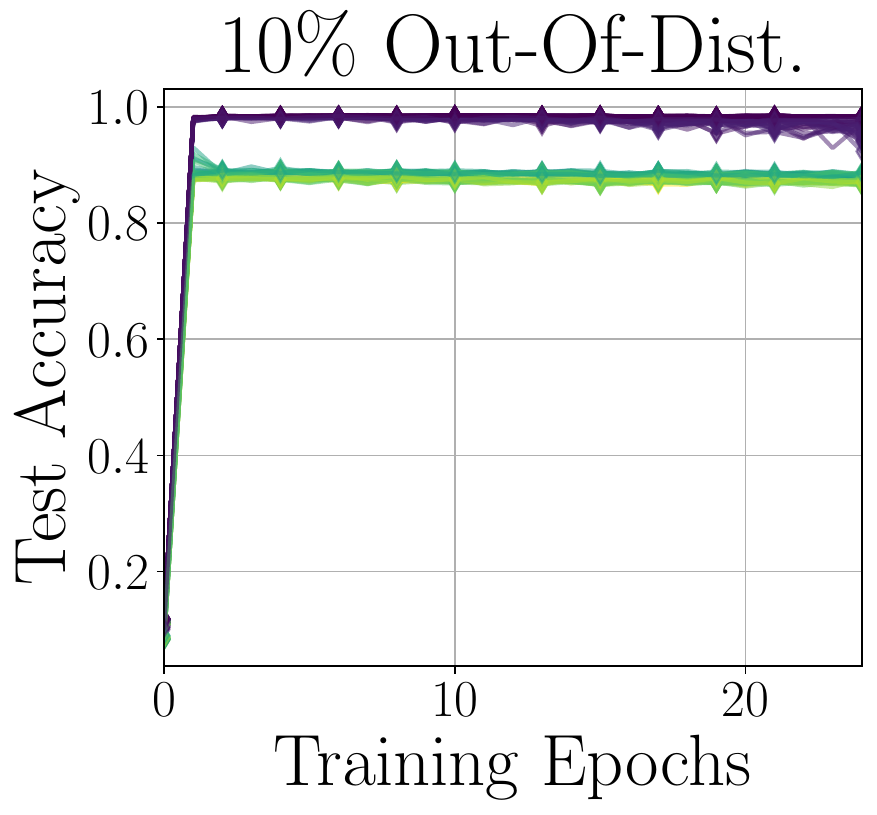}
    \end{subfigure}
    \hfill
    \begin{subfigure}[b]{0.23\textwidth}
        \centering
        \includegraphics[width=\textwidth]{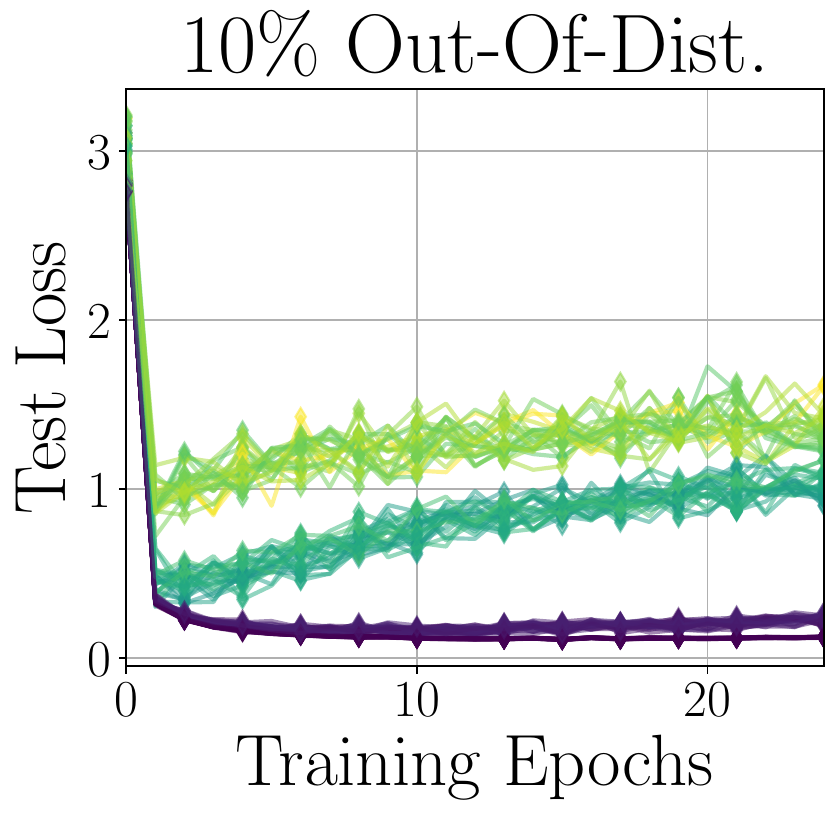}
    \end{subfigure}
    %\hfill
    \caption{Ensemble performance over 25 training epochs. We note that the best performing training and testing accuracies are 99.99 \% vs 98.88 \%, where a higher accuracy is attained for data seen by the ensemble ingredients during training. Similar trends are observed for other ensembles (Table~\ref{tab:train_test_accuracies}), which strongly suggests that the enhanced generalization performance of model soups are attained via zero-data meta-training during the single epoch of gradient descent when viewed as GD instantiation of AME.
    }
    \label{OOD_Appendix_Figure_ensemble9C}
\end{figure}

\begin{table}[H]
    \centering
    \caption{Train and test accuracies across different OOD percentages}\label{tab:train_test_accuracies}
    \resizebox{\textwidth}{!}{%
    \begin{tabular}{lcccccc}
        \toprule
        & \textbf{90\% OOD} & \textbf{80\% OOD} & \textbf{70\% OOD} & \textbf{60\% OOD} & \textbf{50\% OOD} & \textbf{10\% OOD} \\
        \midrule
        \textbf{Best Ingredient Train Accuracy} & 10.00\% & 20.00\% & 29.98\% & 39.96\% & 49.95\% & 89.63\% \\
        \textbf{Best Ingredient Test Accuracy} & 10.00\% & 20.00\% & 29.89\% & 39.78\% & 49.65\% & 87.99\% \\
        \textbf{Best Ensemble Train Accuracy} & \textbf{\textcolor{blue}{23.50\%}} & \textbf{\textcolor{blue}{66.85\%}} & \textbf{\textcolor{blue}{89.64\%}} & \textbf{\textcolor{blue}{96.75\%}} & \textbf{\textcolor{blue}{98.58\%}} & \textbf{\textcolor{blue}{99.99\%}} \\
        \textbf{Best Ensemble Test Accuracy} & \textbf{\textcolor{red}{23.22\%}} & \textbf{\textcolor{red}{66.16\%}} & \textbf{\textcolor{red}{88.28\%}} & \textbf{\textcolor{red}{95.46\%}} & \textbf{\textcolor{red}{97.09\%}} & \textbf{\textcolor{red}{98.88\%}} \\
        \bottomrule
    \end{tabular}
    }
\end{table}

\begin{figure}[H]
    \begin{subfigure}[b]{0.22\textwidth}
        \centering
        \includegraphics[width=\textwidth]{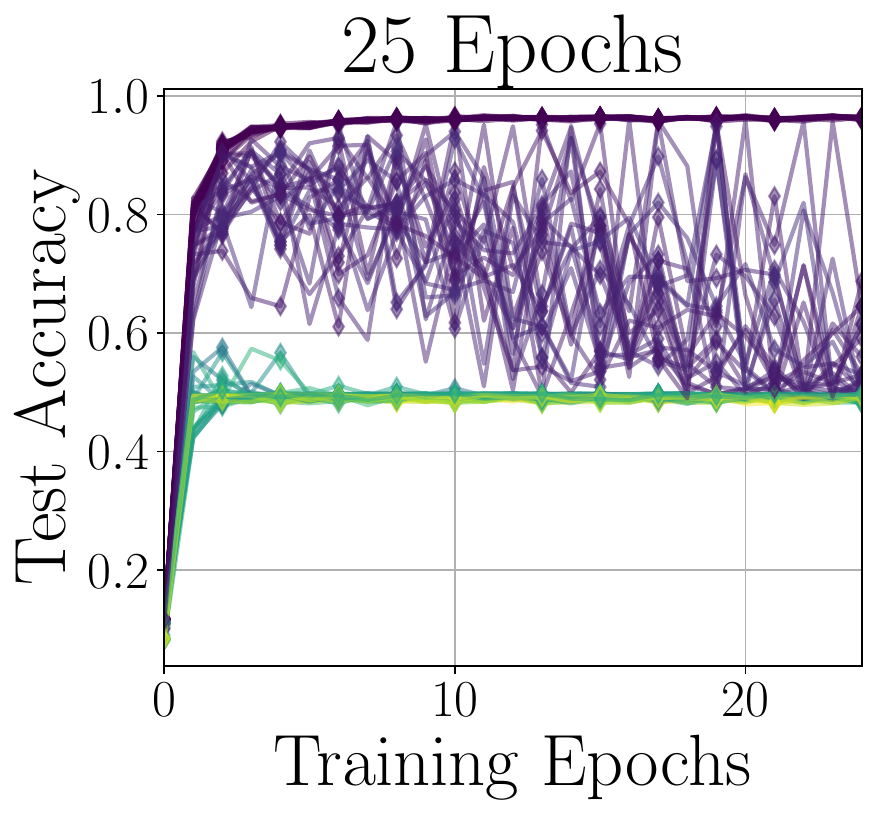}
    \end{subfigure}
    \hfill
    \begin{subfigure}[b]{0.24\textwidth}
        \centering
        \includegraphics[width=\textwidth]{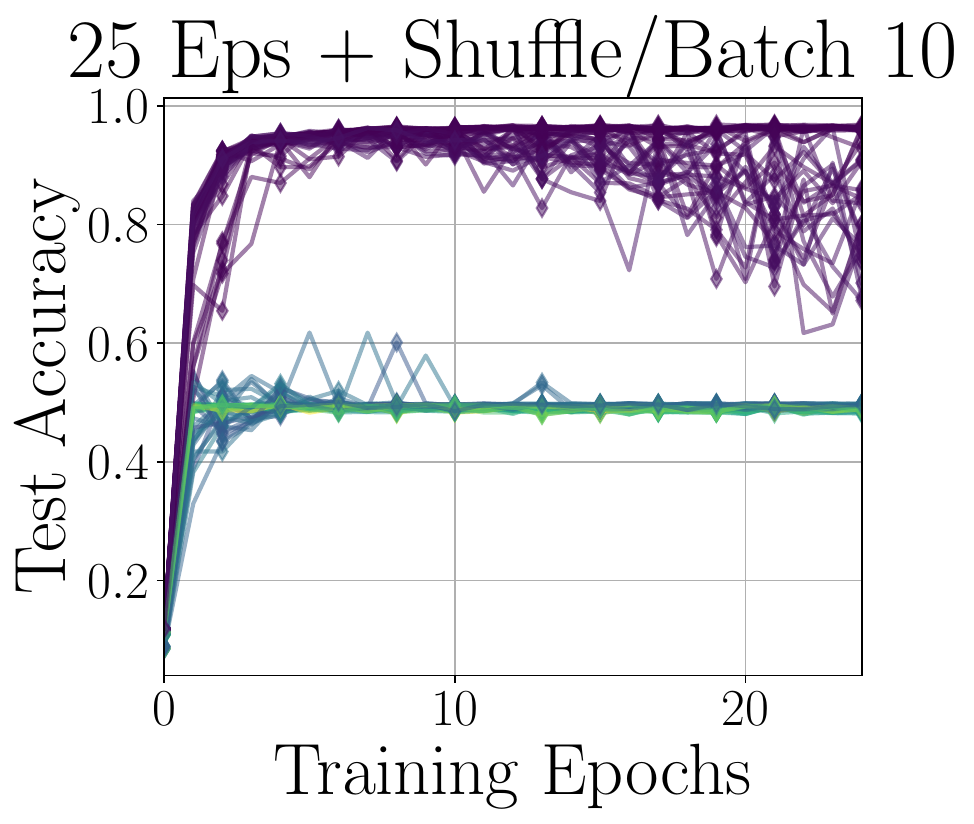}
    \end{subfigure}
    \hfill
    \begin{subfigure}[b]{0.24\textwidth}
        \centering
        \includegraphics[width=\textwidth]{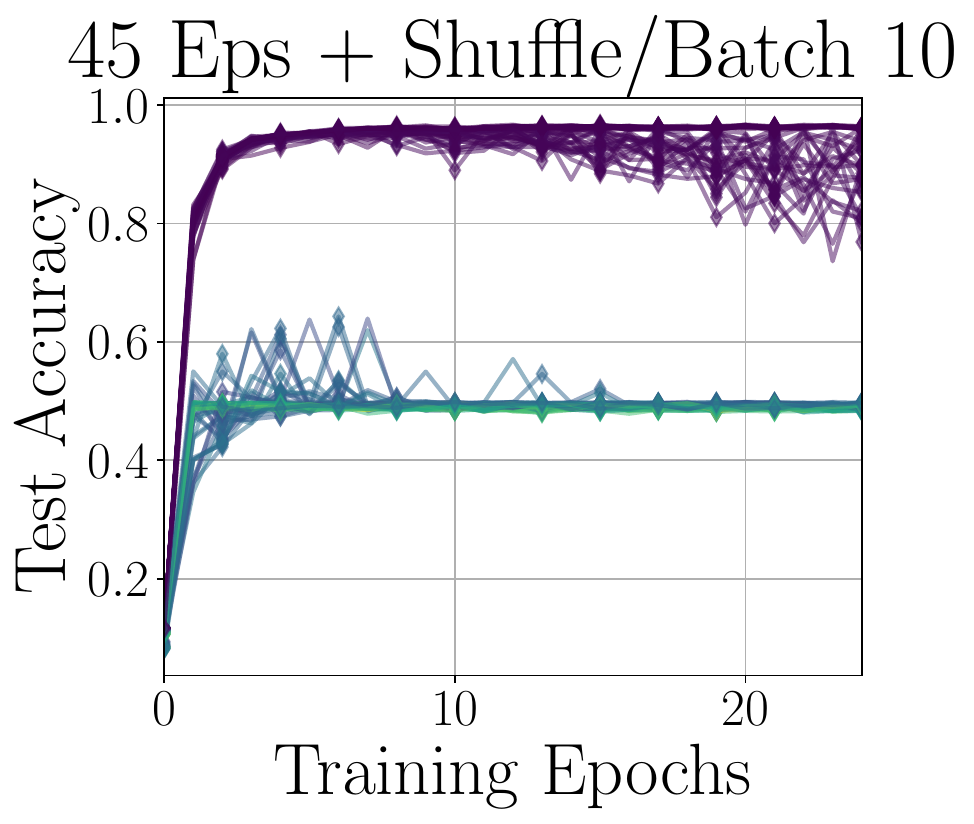}
    \end{subfigure}
    \hfill
    \begin{subfigure}[b]{0.22\textwidth}
        \centering
        \includegraphics[width=\textwidth]{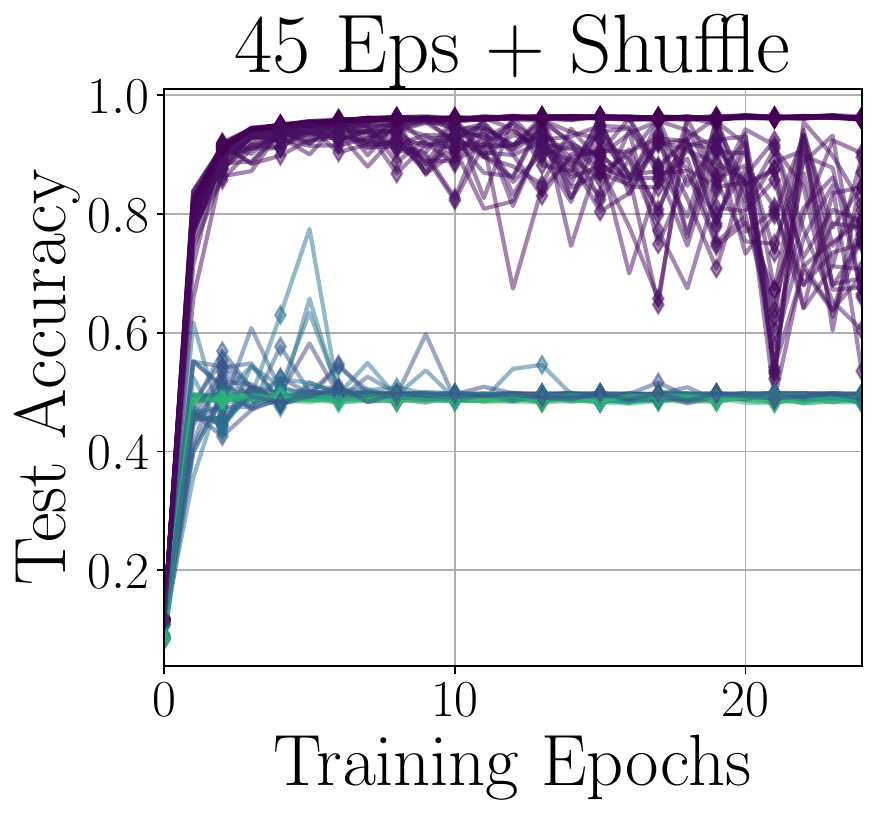}
    \end{subfigure}
    %\hfill
    \caption{In the 50\%-OOD setting, we ensemble for either 25 or 45 epochs. Introducing shuffling with a batch size of 10 improves performance, and increasing the number of ensemble epochs further enhances test-time accuracy. As shown in the rightmost figure, removing batching results in a performance degradation, performing similarly to the 25 Epochs + Shuffle/Batch 10 configuration. This undermines the gains achieved by extending the number of ensemble epochs.
    }
    \label{OOD_Appendix_Figure_ensemble5C}
\end{figure}

\section{Sketching Potential Applications of Amortized Model Ensembling}\label{Potential Future Directions}

\begin{tcolorbox}[colback=white!95!blue, colframe=blue!75!black, fonttitle=\bfseries, title=FedOPT (Algorithm~\ref{FedOPT}) as Nested Ensembling, rounded corners, boxrule=0.5mm, width=\textwidth, label={fedoptverbal}]
\textbf{(Step 0)} Initialize the global model $x_0$, and distribute to $S$ participating clients for federated learning. Repeat the following two steps until termination, were we assume full participation purely for clarity of exposition.
\vspace{10pt}

\textbf{(Step 1)} After each client has trained for $K$ local epochs, view each model $x_{i,K}^t$ as an ensemble ingredient. Here, $i \in [S]$ is client index and $t$ is the global communication round timestep. Choose the pivot to be the (informative) latest global model prior to update, $x_{pivot} = x_{t-1}$. Perform a \textit{linear} souping with all received client models $\{x_{1,K}^t,\dots,x_{S,K}^t\}$, i.e. unamplified~\ref{GDensemble} with learning rate $1$: 
\begin{equation*}
    w_t = x_{t-1} - \frac{1}{S}\sum_{i \in [S]}(x_{t-1} - x_{i,K}^t).
\end{equation*}

\textbf{(Step 2)} Now, we recursively layer on another ensembling step. Once again, we choose the pivot to be $x_{pivot} = x_{t-1}$, but consider the ordered global model ingredients $x_0,w_1, \dots, w_t$. We have saved all past pivoted server-side pseudogradients for adaptive optimization, so we shall focus on obtaining the new server pseudogradient. We further time-weight the importance of newly obtained pivoted pseudogradient updates by selecting an amplification schedule $\zeta_t = t+1$. Then at step $t$, we have for the pivoted pseudogradient
\begin{equation*}
\Delta_t:= g_{server}^t = \frac{\zeta_t}{t+1}(w_t-x_{t-1}) = \left(\frac{1}{S}\sum_{i \in [S]}x_{i,K}^t\right) - x_{t-1},
\end{equation*}
which is processed by the server-side ensembling optimizer to perform an adaptive update such as~\ref{Adagradensemble}. This gives $x_t$, the newest global model ensemble. Afterwards, redistribute this ensemble to participating clients for the next round. 
We may treat server-side ensembling as an ongoing procedure where a new ingredient $w_t$ is added at each timestep $t$, from which a new pivoted pseudogradient is calculated and amplified.
To reflect the ongoing nature of this process, 
we call this \textit{stewing}. 
\end{tcolorbox}

There are many applications of amortized model ensembling, and any setting which may benefit from aggregating or averaging DNN model parameters in the weight-space is relevant. We note that DNN training generally involves a hyperparameter sweep over an extensive parameter grid, which will necessarily output a variety of mature learners drawn from the distribution $\mathcal{D}$ of trained models which can be used as ensembling ingredients. Thus, various meta-optimization strategies can be developed to combine the residual models after the hyperparameter sweep. 

As noted in Section~\ref{DNNEnsembleRelatedWorks}, previous works have shown that models sharing a portion of the optimization trajectory are highly amenable to linear model weight averaging. Some approaches to induce (partially) shared trajectories include taking multiple snapshots of state dictionaries along a single training run~\cite{SnapshotEnsemble,SWA}, starting training from the same random initialization~\cite{Neyshabur,linearmodeconnectivity}, and/or initializing with the identical pre-trained model for fine-tuning or transfer learning~\cite{modelsoup,WARM,LinearInterpolationSimplex}. Another technique to ensure model optimization trajectory is shared is federated learning~\cite{mcmahan2017communication}, where multiple client models are aggregated in the server into a singular learner, then redistributed to all participating clients in the following round using various model update methodologies~\cite{AdaAlter,FedAvgConvOnNonIID,FedNOVA,FedAMS,sun2023fedlalr,TailOptimization}. In particular, each client may have data shards from diverse, heterogeneous private distributions, use different random seeds, or even alternative optimizer strategies (e.g., Federated Blended Optimization~\cite{EfficientAdaptiveFederatedOptimization}). Thus, in order to provide a more specialized usage setting, we briefly discuss applications in federated learning.

\paragraph{Distributed Learning. } 

Distributed learning often involves combining multiple trained models from clients optimizing over localized data shards, sampled from variant data distributions~\cite{TianReviewPaper,DiLoCo,OpenDiLoCo,AsynchronousDiLiCo}. The aggregation of the client models are typically performed in a central server with substantial computational resources, which is then redistributed to clients for further training. The standard aggregation step in the server is a simple average in popular algorithms such as FedAvg~\cite{mcmahan2017communication}, which corresponds to souping the client models prior to redistribution. As we show in Section~\ref{DomainKnowledge}, utilizing more amortized model ensembling epochs during server-side model aggregation can enhance performance compared to standard souping, possibly closing the existing performance gap between federated learning and centralized training in practice~\cite{MIMEpaper}. We especially expect improvements when each model is being trained on isolated data shards prior to aggregation, as is the case in client-diverse distributed DNN optimization scenarios such as cross-device personalized mobile models~\cite{TianReviewPaper}. Additional applications for the architecture and optimizer-agnostic zero-data amortized model ensembling framework may be found in large-scale DNN training~\cite{touvron2023llama,LMLargeMemory} or privacy-sensitive applications such as healthcare~\cite{healthcare1,healthcare2,hospitalmajor}, where preserving the domain knowledge of rare classes is essential, and retraining on data incurs significant computational costs as well as data-leakage risks. 

\paragraph{Federated Learning (FL).} In more typical FL setups, while there are far more clients, the participation rate is considerably lower. Furthermore, due to local resource limitations, each transmitted client model after local training is a weak learner during early communication rounds, that has only been educated through a small number of gradient updates. After each communication round, client weights are sent to the server for aggregation, where model ensembling may be applied. In particular, the state-of-the-art FedOPT~\cite{AdaptiveFederatedOptimization} framework (Algorithm~\ref{FedOPT}) may be understood as a nested adapative-linear ensembling strategy, which we have verbally described above for added clarity.

Viewed from this perspective, we may modify any layer of the nested ensembling detailed in FedOPT \textbf{(Steps 1-2)} to derive a more general form of adaptive federated optimization. For example, in \textbf{(Step~{ 1)}}, we may consider an amortized ensembling strategy instead of linear souping, and in \textbf{(Step 2)} we may consider using a different amplification schedule $\zeta_t$ or an alternative pivot (e.g., ensembled pivot). This gives rise to a very diverse range of new algorithms (Algorithm~\ref{FedSoup}), which to our knowledge have not been previously explored in the literature. For ease of notation, we informally refer to amortized ensembling in Algorithm~\ref{FedSoup} as `souping', for consistency with the algorithm name FedSoup. We leave the evaluation of such derived algorithms for future work in distributed learning.

\begin{minipage}[t]{0.49\textwidth}
\centering
\begin{algorithm}[H]
\caption{FedOPT (Simplified)}\label{FedOPT}
\begin{algorithmic}[1]
\REQUIRE Initialize $x_0$, $Client/ServerOPT$
\FOR{$t = 1, \dots, T$}
    \STATE Sample subset $\mathcal{S}^t \subset [N]$ of clients
    \FOR{each client $i \in \mathcal{S}^t$ (in parallel)}
        \STATE $x_{i,K}^t = ClientOPT(x_{t-1}, K \text{ epochs})$
        \STATE $\Delta_i^t = x_{i,K}^{t} - x_{t-1}$
    \ENDFOR
    \STATE $\Delta_t = \frac{1}{|\mathcal{S}^t|}\sum_{i \in \mathcal{S}^t} \Delta_i^t$
    \STATE $x_t = ServerOPT(\Delta_1,\dots,\Delta_t)$
\ENDFOR
\end{algorithmic}
\end{algorithm}
\end{minipage}
\hfill
\begin{minipage}[t]{0.49\textwidth}
\centering
\begin{algorithm}[H]
\caption{FedSoup (Simplified)}\label{FedSoup}
\begin{algorithmic}[1]
\REQUIRE Initialize $x_0$, $ClientOPT$ \\$ClientSoup$, $ServerStew$
\FOR{$t = 1, \dots, T$}
    \STATE Sample subset $\mathcal{S}^t \subset [N]$ of clients
    \FOR{each client $i \in \mathcal{S}^t$ (in parallel)}
        \STATE $x_{i,K}^t = ClientOPT(x_{t-1}, K \text{ epochs})$
    \ENDFOR
    \STATE $w_t = ClientSoup(\{x_{i,K}^t\}_{i \in \mathcal{S}^t})$
    \STATE $x_t = ServerStew(x_0,w_1,\dots,w_t)$
\ENDFOR
\end{algorithmic}
\end{algorithm}
\end{minipage}

\section{Neural Averages over CIFAR-100 Images}\label{Neural_Averaging_Images}

Images were synthesized using AME with 500 distinct random seeds, optimized via AdamW with learning rate $\eta = 5 \times 10^{-3}$ and momentum parameters $\beta_1 = 0.9$, $\beta_2 = 0.99$. The classifier ViT was fine-tuned exclusively on CIFAR-100 using AdamW with $\eta = 10^{-4}$, weight decay $\lambda = 0.05$, and the same momentum parameters. Notably, in this setting, uniformly averaged (souped) images were consistently blurry and visually uninformative (see, e.g., Figure~\ref{AME_CIFAR100_MainText}), and were therefore omitted from the main visualizations.

\begin{figure}[H]
\centering
\begin{adjustbox}{width=\textwidth}
% remove all inter‑cell padding just for this table
{\setlength{\tabcolsep}{0pt}\renewcommand{\arraystretch}{0}%
\begin{tabular}{@{}c*{6}{@{}c@{}}@{}}
% Row 2
\rotatebox{90}{\fontsize{6}{7}\selectfont\ \quad \ \ Bicycle (48.2\%)} &
\includegraphics[width=0.15\textwidth]{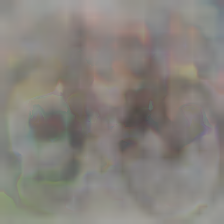} &
\includegraphics[width=0.15\textwidth]{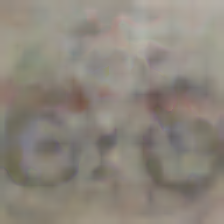} &
\includegraphics[width=0.15\textwidth]{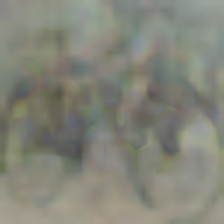} &
\includegraphics[width=0.15\textwidth]{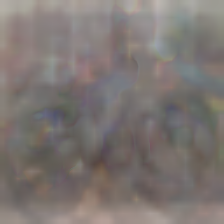} &
\includegraphics[width=0.15\textwidth]{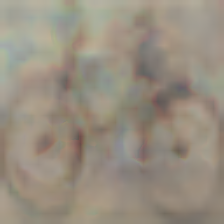} &
\includegraphics[width=0.15\textwidth]{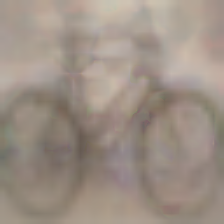} \\
% Row 3
\rotatebox{90}{\fontsize{6}{7}\selectfont\ \quad \ \ Bottle (53.4\%)} &
\includegraphics[width=0.15\textwidth]{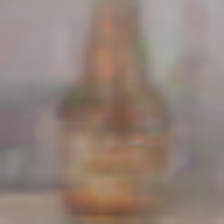} &
\includegraphics[width=0.15\textwidth]{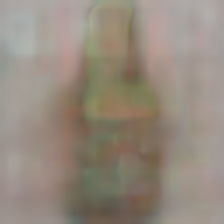} &
\includegraphics[width=0.15\textwidth]{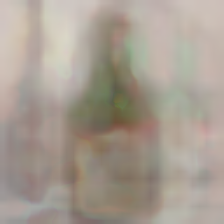} &
\includegraphics[width=0.15\textwidth]{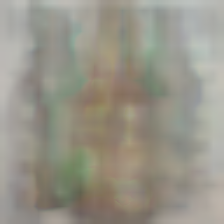} &
\includegraphics[width=0.15\textwidth]{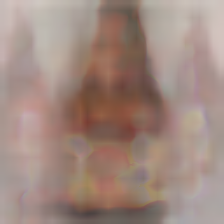} &
\includegraphics[width=0.15\textwidth]{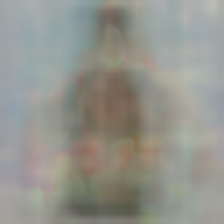} \\
% Row 4
\rotatebox{90}{\fontsize{6}{7}\selectfont\ \quad \ \  Bowl (95.4\%)} &
\includegraphics[width=0.15\textwidth]{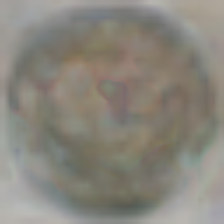} &
\includegraphics[width=0.15\textwidth]{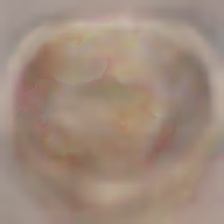} &
\includegraphics[width=0.15\textwidth]{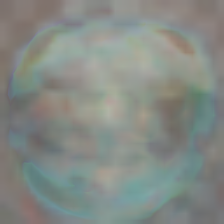} &
\includegraphics[width=0.15\textwidth]{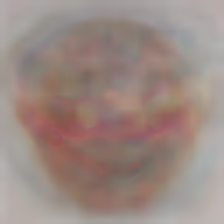} &
\includegraphics[width=0.15\textwidth]{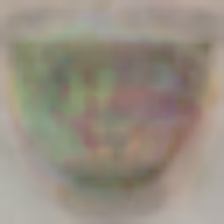} &
\includegraphics[width=0.15\textwidth]{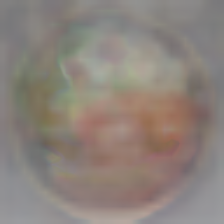} \\
% Row 5
\rotatebox{90}{\fontsize{6}{7}\selectfont\ \quad  \ \ Cloud (97.8\%)} &
\includegraphics[width=0.15\textwidth]{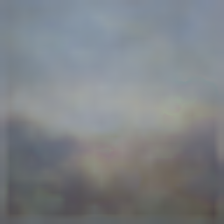} &
\includegraphics[width=0.15\textwidth]{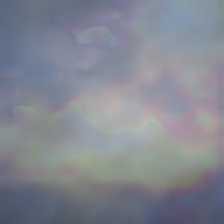} &
\includegraphics[width=0.15\textwidth]{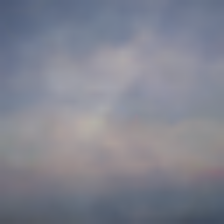} &
\includegraphics[width=0.15\textwidth]{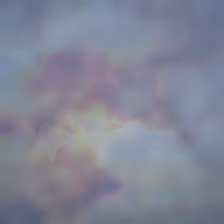} &
\includegraphics[width=0.15\textwidth]{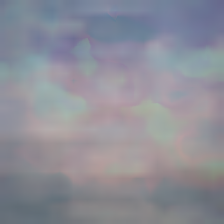} &
\includegraphics[width=0.15\textwidth]{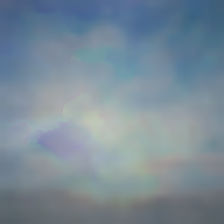} \\
% Row 6
\rotatebox{90}{\fontsize{6}{7}\selectfont\ \quad \ \ Crab (64.2\%)} &
\includegraphics[width=0.15\textwidth]{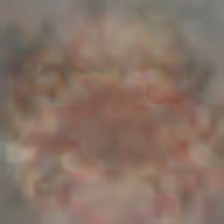} &
\includegraphics[width=0.15\textwidth]{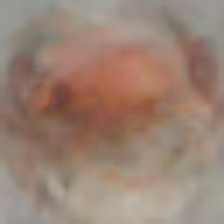} &
\includegraphics[width=0.15\textwidth]{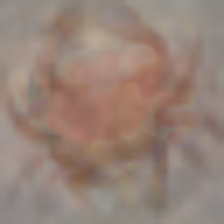} &
\includegraphics[width=0.15\textwidth]{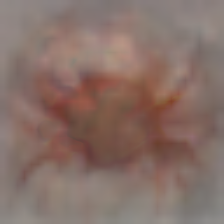} &
\includegraphics[width=0.15\textwidth]{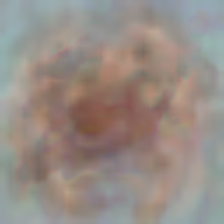} &
\includegraphics[width=0.15\textwidth]{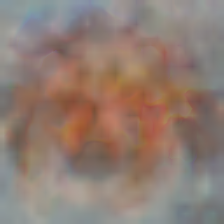} \\
% Row 7
\rotatebox{90}{\fontsize{6}{7}\selectfont\ \quad \ \ Cup (89.0\%)} &
\includegraphics[width=0.15\textwidth]{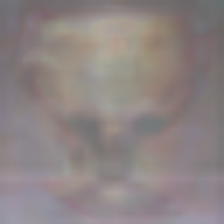} &
\includegraphics[width=0.15\textwidth]{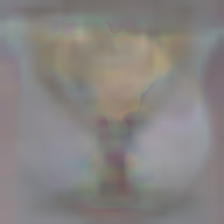} &
\includegraphics[width=0.15\textwidth]{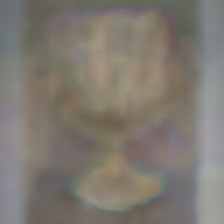} &
\includegraphics[width=0.15\textwidth]{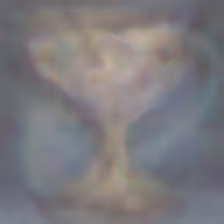} &
\includegraphics[width=0.15\textwidth]{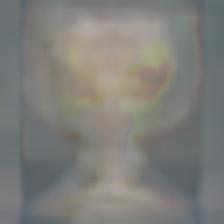} &
\includegraphics[width=0.15\textwidth]{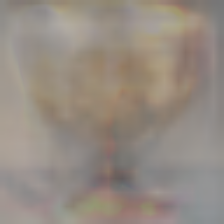} \\
% Row 8
\rotatebox{90}{\fontsize{6}{7}\selectfont\ \quad \ \ Forest (84.8\%)} &
\includegraphics[width=0.15\textwidth]{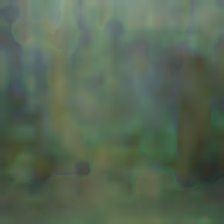} &
\includegraphics[width=0.15\textwidth]{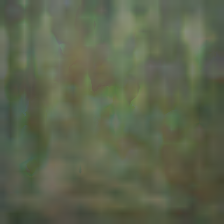} &
\includegraphics[width=0.15\textwidth]{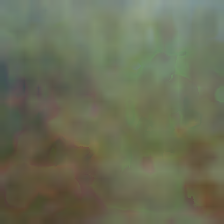} &
\includegraphics[width=0.15\textwidth]{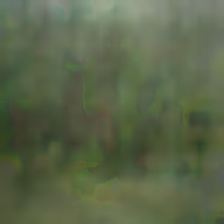} &
\includegraphics[width=0.15\textwidth]{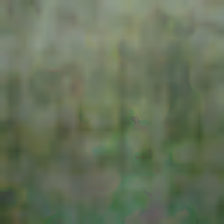} &
\includegraphics[width=0.15\textwidth]{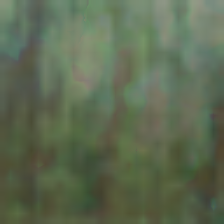} \\
\end{tabular}%
}
\end{adjustbox}
\caption{Neural averages of CIFAR-100 classes synthesized via AME. Percentages next to each class label indicate the proportion of AME-generated images (out of 500 per class) that were correctly classified by a ViT trained solely on CIFAR-100. This setup is essentially equivalent to ensembling a set of neural nets that give the same classification for the identity image in the weight space. For additional details, see Section~\ref{Neural_Averaging_CIFAR100}. }
\label{fig:grid-tight-10x6}
\end{figure}

\begin{figure}[H]
\centering
\begin{adjustbox}{width=\textwidth}
% remove all inter‑cell padding just for this table
{\setlength{\tabcolsep}{0pt}\renewcommand{\arraystretch}{0}%
\begin{tabular}{@{}c*{6}{@{}c@{}}@{}}
% Row 1
\rotatebox{90}{\fontsize{6}{7}\selectfont\ \quad Palm Tree (52.6\%)} &
\includegraphics[width=0.15\textwidth]{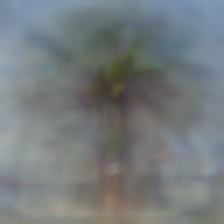} &
\includegraphics[width=0.15\textwidth]{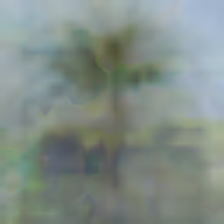} &
\includegraphics[width=0.15\textwidth]{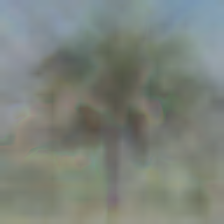} &
\includegraphics[width=0.15\textwidth]{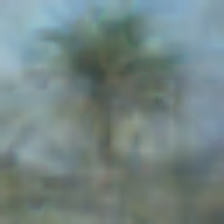} &
\includegraphics[width=0.15\textwidth]{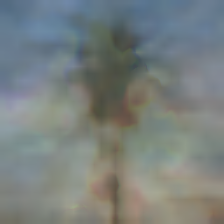} &
\includegraphics[width=0.15\textwidth]{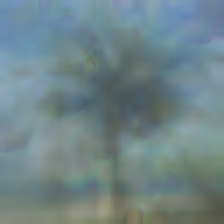} \\
% Row 2
\rotatebox{90}{\fontsize{6}{7}\selectfont\ \quad Pine Tree (49.8\%)} &
\includegraphics[width=0.15\textwidth]{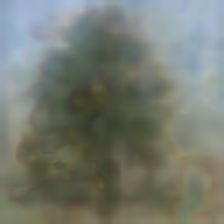} &
\includegraphics[width=0.15\textwidth]{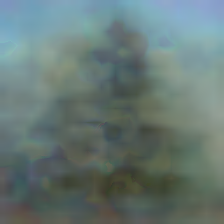} &
\includegraphics[width=0.15\textwidth]{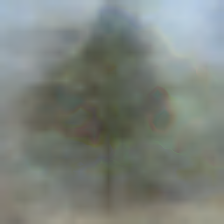} &
\includegraphics[width=0.15\textwidth]{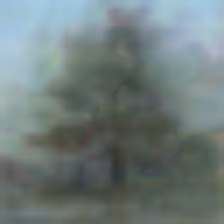} &
\includegraphics[width=0.15\textwidth]{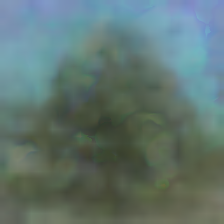} &
\includegraphics[width=0.15\textwidth]{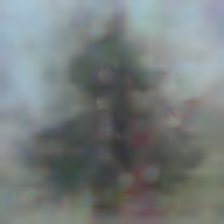} \\
% Row 3
\rotatebox{90}{\fontsize{6}{7}\selectfont\ \quad \ \ Plain (98.8\%)} &
\includegraphics[width=0.15\textwidth]{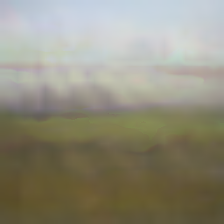} &
\includegraphics[width=0.15\textwidth]{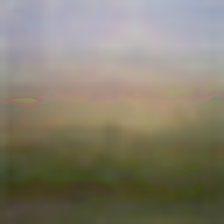} &
\includegraphics[width=0.15\textwidth]{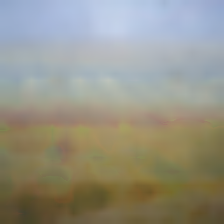} &
\includegraphics[width=0.15\textwidth]{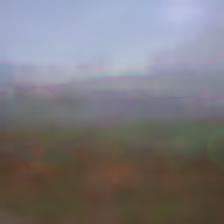} &
\includegraphics[width=0.15\textwidth]{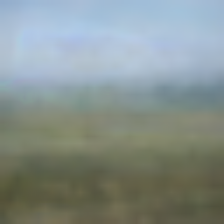} &
\includegraphics[width=0.15\textwidth]{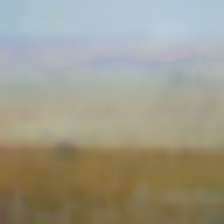} \\
% Row 4
\rotatebox{90}{\fontsize{6}{7}\selectfont\ \quad \ \  Plate (70.80\%)} &
\includegraphics[width=0.15\textwidth]{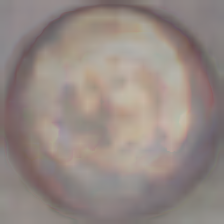} &
\includegraphics[width=0.15\textwidth]{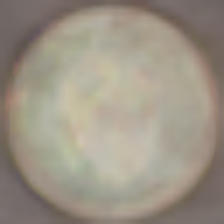} &
\includegraphics[width=0.15\textwidth]{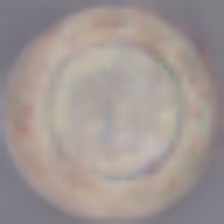} &
\includegraphics[width=0.15\textwidth]{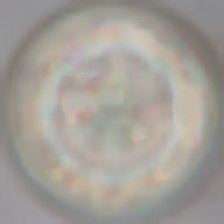} &
\includegraphics[width=0.15\textwidth]{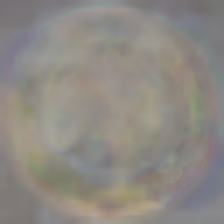} &
\includegraphics[width=0.15\textwidth]{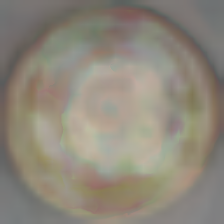} \\
% Row 5
\rotatebox{90}{\fontsize{6}{7}\selectfont\ \quad  \quad Ray (86.0\%)} &
\includegraphics[width=0.15\textwidth]{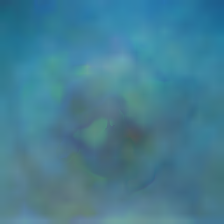} &
\includegraphics[width=0.15\textwidth]{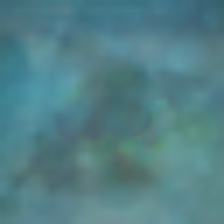} &
\includegraphics[width=0.15\textwidth]{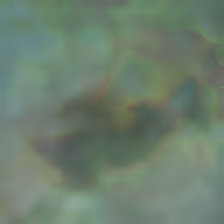} &
\includegraphics[width=0.15\textwidth]{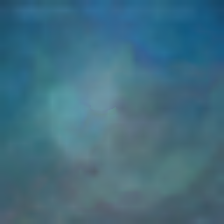} &
\includegraphics[width=0.15\textwidth]{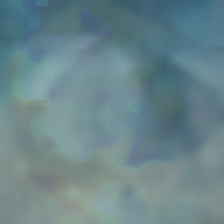} &
\includegraphics[width=0.15\textwidth]{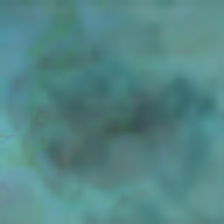} \\
% Row 6
\rotatebox{90}{\fontsize{6}{7}\selectfont\ \quad \ \ \ Rose (49.4\%)} &
\includegraphics[width=0.15\textwidth]{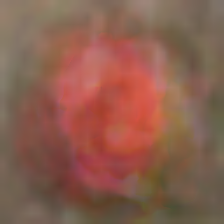} &
\includegraphics[width=0.15\textwidth]{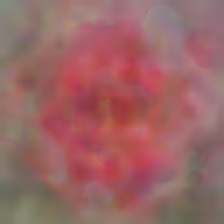} &
\includegraphics[width=0.15\textwidth]{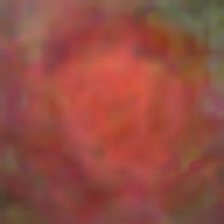} &
\includegraphics[width=0.15\textwidth]{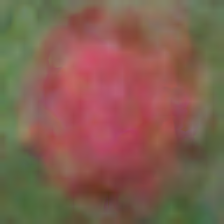} &
\includegraphics[width=0.15\textwidth]{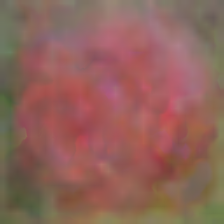} &
\includegraphics[width=0.15\textwidth]{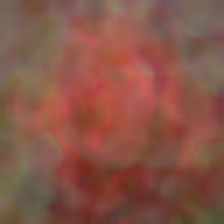} \\
% Row 9
\rotatebox{90}{\fontsize{6}{7}\selectfont\ \ \ \ Sunflower (42.6\%)} &
\includegraphics[width=0.15\textwidth]{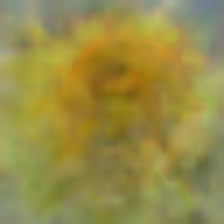} &
\includegraphics[width=0.15\textwidth]{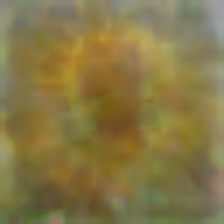} &
\includegraphics[width=0.15\textwidth]{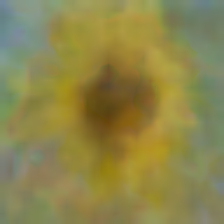} &
\includegraphics[width=0.15\textwidth]{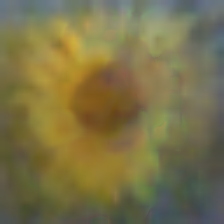} &
\includegraphics[width=0.15\textwidth]{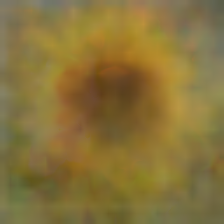} &
\includegraphics[width=0.15\textwidth]{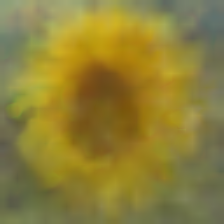} \\
% Row 10
\rotatebox{90}{\fontsize{6}{7}\selectfont\ \quad \ \ \ Trout (75.0\%)} &
\includegraphics[width=0.15\textwidth]{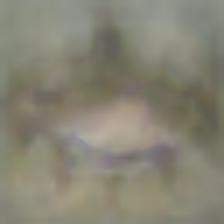} &
\includegraphics[width=0.15\textwidth]{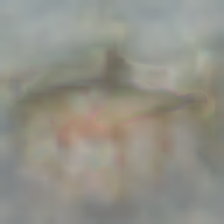} &
\includegraphics[width=0.15\textwidth]{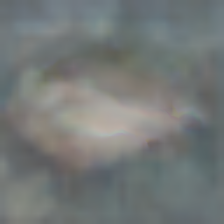} &
\includegraphics[width=0.15\textwidth]{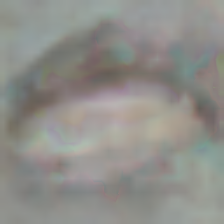} &
\includegraphics[width=0.15\textwidth]{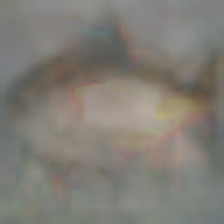} &
\includegraphics[width=0.15\textwidth]{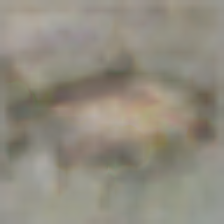} \\
% Row 8
\rotatebox{90}{\fontsize{6}{7}\selectfont\ \quad \ \ Spider (54.8\%)} &
\includegraphics[width=0.15\textwidth]{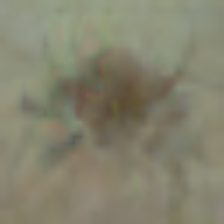} &
\includegraphics[width=0.15\textwidth]{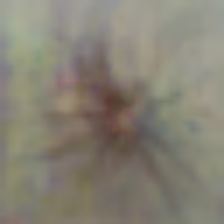} &
\includegraphics[width=0.15\textwidth]{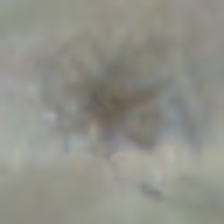} &
\includegraphics[width=0.15\textwidth]{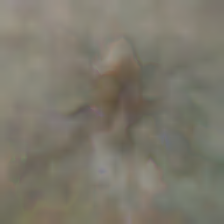} &
\includegraphics[width=0.15\textwidth]{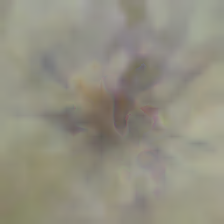} &
\includegraphics[width=0.15\textwidth]{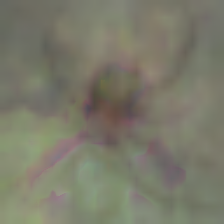} \\
\end{tabular}%
}
\end{adjustbox}
\caption{Neural averages of CIFAR-100 (continued). The setup is identical to Figure~\ref{fig:grid-tight-10x6}.}
% \label{fig:grid-tight-10x6}
\end{figure}

\begin{figure}[H]
\centering
\begin{adjustbox}{width=\textwidth}
% remove all inter‑cell padding just for this table
{\setlength{\tabcolsep}{0pt}\renewcommand{\arraystretch}{0}%
\begin{tabular}{@{}c*{6}{@{}c@{}}@{}}
% Row 1
\rotatebox{90}{\fontsize{6}{7}\selectfont\ \quad \ \ Apple (31.6\%)} &
\includegraphics[width=0.15\textwidth]{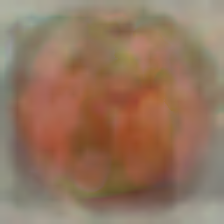} &
\includegraphics[width=0.15\textwidth]{Class0_2.png} &
\includegraphics[width=0.15\textwidth]{Class0_3.png} &
\includegraphics[width=0.15\textwidth]{Class0_4.png} &
\includegraphics[width=0.15\textwidth]{Class0_5.png} &
\includegraphics[width=0.15\textwidth]{Class0_6.png} \\
% Row 9
\rotatebox{90}{\fontsize{6}{7}\selectfont\ \ \ \ \ Wardrobe (98.2\%)} &
\includegraphics[width=0.15\textwidth]{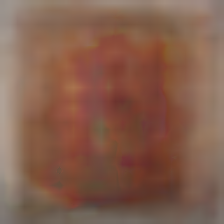} &
\includegraphics[width=0.15\textwidth]{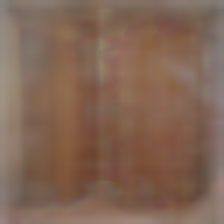} &
\includegraphics[width=0.15\textwidth]{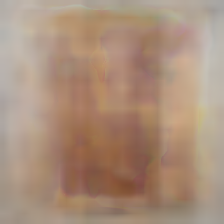} &
\includegraphics[width=0.15\textwidth]{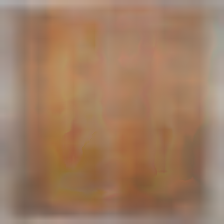} &
\includegraphics[width=0.15\textwidth]{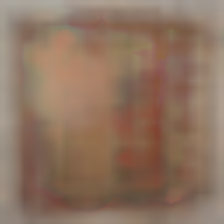} &
\includegraphics[width=0.15\textwidth]{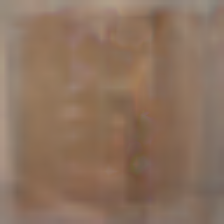} \\
% Row 10
\rotatebox{90}{\fontsize{6}{7}\selectfont\ \ \ Willow Tree (95.0\%)} &
\includegraphics[width=0.15\textwidth]{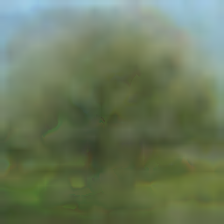} &
\includegraphics[width=0.15\textwidth]{Class96_2.png} &
\includegraphics[width=0.15\textwidth]{Class96_3.png} &
\includegraphics[width=0.15\textwidth]{Class96_4.png} &
\includegraphics[width=0.15\textwidth]{Class96_5.png} &
\includegraphics[width=0.15\textwidth]{Class96_6.png} \\
% Row 9
\rotatebox{90}{\fontsize{6}{7}\selectfont\ \quad \ \ Lobster (73.8\%)} &
\includegraphics[width=0.15\textwidth]{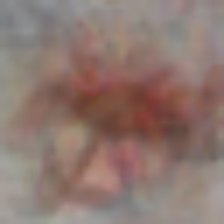} &
\includegraphics[width=0.15\textwidth]{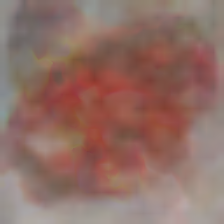} &
\includegraphics[width=0.15\textwidth]{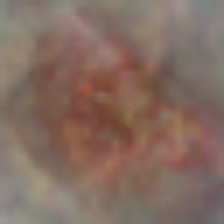} &
\includegraphics[width=0.15\textwidth]{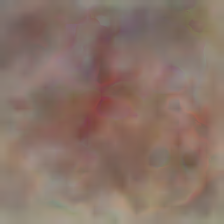} &
\includegraphics[width=0.15\textwidth]{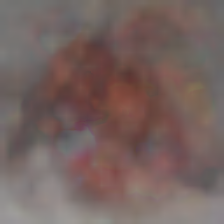} &
\includegraphics[width=0.15\textwidth]{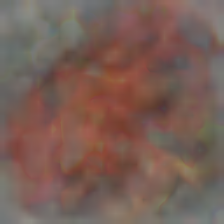} \\
% Row 10
\rotatebox{90}{\fontsize{6}{7}\selectfont\ \ \ \ Maple Tree (82.0\%)} &
\includegraphics[width=0.15\textwidth]{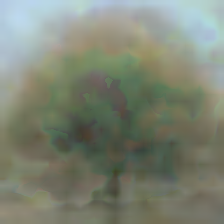} &
\includegraphics[width=0.15\textwidth]{Class47_2.png} &
\includegraphics[width=0.15\textwidth]{Class47_3.png} &
\includegraphics[width=0.15\textwidth]{Class47_4.png} &
\includegraphics[width=0.15\textwidth]{Class47_5.png} &
\includegraphics[width=0.15\textwidth]{Class47_6.png} \\
% Row 7
\rotatebox{90}{\fontsize{6}{7}\selectfont\ \quad \quad Sea (70.2\%)} &
\includegraphics[width=0.15\textwidth]{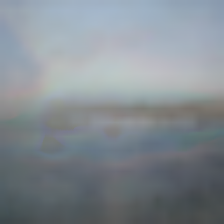} &
\includegraphics[width=0.15\textwidth]{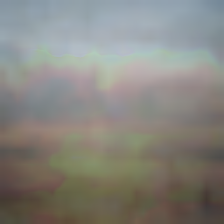} &
\includegraphics[width=0.15\textwidth]{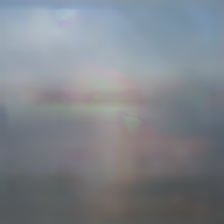} &
\includegraphics[width=0.15\textwidth]{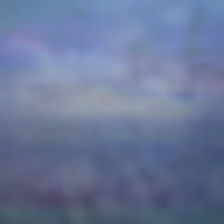} &
\includegraphics[width=0.15\textwidth]{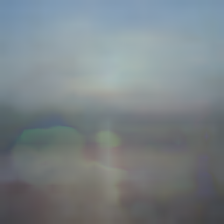} &
\includegraphics[width=0.15\textwidth]{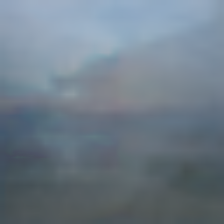} \\
\end{tabular}%
}
\end{adjustbox}
\caption{Additional neural averages of CIFAR-100 classes synthesized via AME. The setup mirrors that of Figure~\ref{fig:grid-tight-10x6}. While some synthesized classes (e.g., willow tree vs. maple tree) appear visually indistinguishable to humans, a standard ViT trained on CIFAR-100 classifies them correctly with high accuracy. For reference, random guessing yields a 1\% success rate.}
% \label{fig:grid-tight-10x6}
\end{figure}

\clearpage
\section{Deep Learning Optimizers as Statistical Estimators via Pseudogradients}\label{StatisticalEstimators}

A central idea behind AME is deceptively simple: update model parameters using gradient-like signals to reduce a loss. In the context of equation~\eqref{adaptivesequentialreinterpretation}, this might suggest that AME merely performs stochastic gradient descent on a quadratic loss, recovering the empirical average as the minimizer of~\eqref{heavytailedobjective}. This naturally raises a question: \emph{Is AME just a stochastic approximator that recovers model soup?}

The key insight in conceiving this work was that this \emph{is not}  the case. The behavior of AME is governed not by the quadratic loss alone, but by the choice of deployed optimizer as well as the distribution of materialized ensemble ingredients. Each optimizer induces specific characteristics in the optimization trajectory, often described as \emph{implicit biases}~\cite{implicit_bias_1,implicit_bias_2}. These biases cause different optimizers to converge toward distinct solutions, even when minimizing over the same loss. This phenomenon is represented in the training of deep networks across identical loss functions (e.g., adaptive optimizers like Adam often outperform SGD~\cite{HeavyTailedNoisePaper,EfficientAdaptiveFederatedOptimization,TailOptimization}) and here, we reinterpret these behaviors through a statistical and geometrical lens.

\subsection{Providing Intuition: Geometry of Adam Optimizer}

To build geometric intuition, consider Adam, whose pseudocode is outlined in Section~\ref{mathformAME}. Adam can be interpreted as computing an update direction proportional to $\mathbb{E}[\xi]/\sqrt{\mathbb{E}[\xi^2]}$, where $\xi$ represents the stochastic gradient. These expectations are approximated via exponentially decaying averages, with $\varepsilon > 0$ added to the denominator for numerical stability.

Importantly, Adam maintains moving averages over time. The numerator, an exponential average of gradients, introduces momentum and helps stabilize updates in the presence of noise. The denominator, a moving average of squared gradients, adapts the learning rate for each coordinate. If a coordinate has experienced large historical gradients, Adam reduces its step size in that dimension; if the gradients have been small, it increases it. This has a regularizing effect: in high-variance regions, Adam takes conservative steps, while allowing more aggressive updates in low-variance directions.

This behavior departs meaningfully from naive gradient descent with uniform learning rates. Rather than treating all coordinates equally, Adam imposes a form of adaptive caution based on the local geometry of the optimization landscape and the historical noise profile.

\subsection{Neural Optimizers as General Algorithms for Online Statistical Estimation}

We now propose an alternative interpretation, central for conceiving this paper: \emph{optimizers can be viewed as streaming algorithms for computing statistical estimators}. Each update involves computing a distance vector between the current estimator and a new data point (or batch), adjusted according to rules defined by the optimizer. In AME, we have interpreted these as \textit{pseudogradients}, but this principle applies more generally for any type of problem convertible into an online inference setup. This framework naturally accommodates batching and allows us to define an entire family of estimators parameterized by the optimizer's hyperparameters.

From this perspective, AME is not simply performing naive backpropagation on a fixed quadratic, but is instead instantiating a \emph{family} of estimators whose properties are shaped by the implicit geometry of the optimizer. In the synthetic experiments below, we validate this interpretation empirically and show that AME can outperform the empirical average, especially in heavy-tailed or noisy regimes--linking back to the discussion in Section~\ref{understandingmodelsoup}.

\subsection{Validating Heuristics of Statistical Estimation and Optimization}

There are many distributions for which the empirical average does not coincide with the maximum likelihood estimator (MLE). Notable examples include the Laplace, Cauchy, and Pareto distributions. In particular, the Cauchy distribution lacks a defined mean and variance; its MLE must be estimated numerically, and the sample median is often used as a robust proxy~\cite{cauchy_distribution}. This distinction becomes important in the context of AME, which behaves like an optimizer and does not necessarily converge to the empirical average (i.e., model soup).

In this experiment, we draw 60,000 samples from a standard Cauchy distribution and treat this as the `population' from which observations can be drawn. From this, we subsample 300 points to simulate the ensemble ingredients, analogous to materializing 300 two-dimensional model weights. We then compute both the model soup average and the AME-ensembled result (using Adam). The AME ensembles are deliberately initialized at the coordinates $(10,10)$, far from the MLE estimate to create an adversarial starting point, placing AME at a disadvantage by design. This trial is repeated 300 times, and the results are visualized in Figure~\ref{fig:three-by-two} (top). The identical experiment is repeated for the standard Gaussian distribution, a non-heavy-tailed distribution, in Figure~\ref{fig:three-by-two} (bottom). 

The visualization reveals that model soup exhibits high variance and completely fails to converge toward the proxy MLE under subsampling in heavy-tailed regimes. In contrast, Adam-AME produces estimates that diverge from the empirical average but reliably converge near the proxy MLE. By contrast, GD-AME, which necessarily must only linearly interpolate between the model ingredients, demonstrates similar properties to model soup estimators. This confirms that AME functions as an optimizer-based estimator rather than a simple average, and can yield improved performance depending on the underlying distribution. That is, an optimizer’s output is itself an estimator, and under certain distributions (e.g., heavy-tailed or skewed), AME may outperform soup by aligning more closely with a robust statistical objective, especially when instantiated with an adaptive optimizer.

\begin{figure}[htbp]
    \centering
    % First row
    \begin{subfigure}[t]{0.3\textwidth}
        \centering
        \includegraphics[width=\linewidth]{cauchy_ame_plot.pdf}
        \caption{Adam-AME}
    \end{subfigure}%
    \hfill
    \begin{subfigure}[t]{0.3\textwidth}
        \centering
        \includegraphics[width=\linewidth]{cauchy_ame_plot_1.pdf}
        \caption{GD-AME}
    \end{subfigure}%
    \hfill
    \begin{subfigure}[t]{0.3\textwidth}
        \centering
        \includegraphics[width=\linewidth]{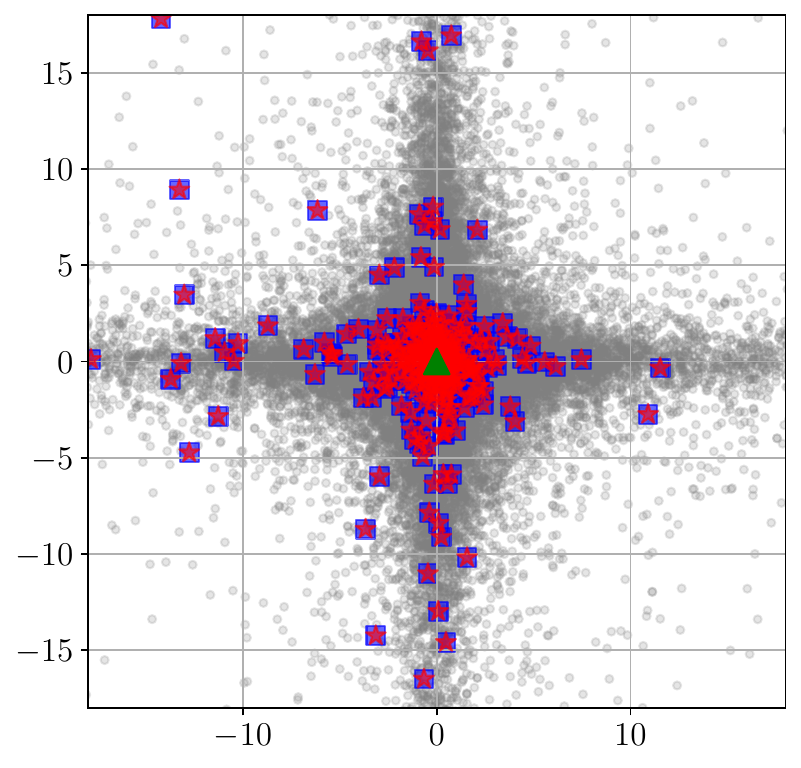}
        \caption{GD-AME}
    \end{subfigure}

    \vspace{0.5em}

    % Second row
    \begin{subfigure}[t]{0.3\textwidth}
        \centering
        \includegraphics[width=\linewidth]{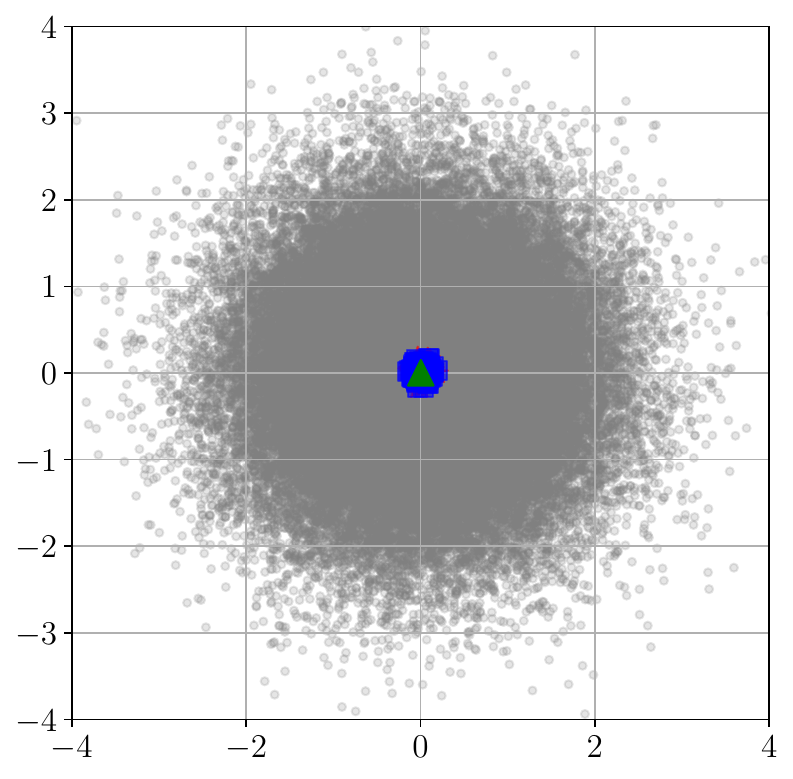}
        \caption{Adam-AME}
    \end{subfigure}%
    \hfill
    \begin{subfigure}[t]{0.3\textwidth}
        \centering
        \includegraphics[width=\linewidth]{gaussian_ame_plot_1.pdf}
        \caption{GD-AME}
    \end{subfigure}%
    \hfill
    \begin{subfigure}[t]{0.3\textwidth}
        \centering
        \includegraphics[width=\linewidth]{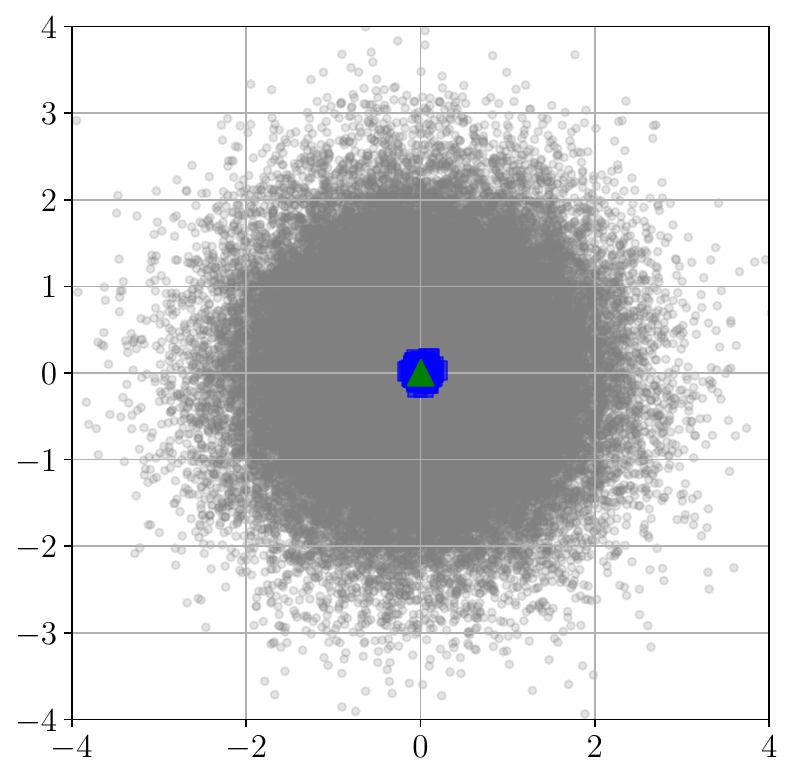}
        \caption{GD-AME}
    \end{subfigure}

    \caption{Each panel compares AME ensembles (Adam or GD) against model soup using identical data samples per trial. (Top) Results under the heavy-tailed Cauchy distribution. In (a), AME uses Adam with $\eta=0.1$, $\beta_1=\beta_2=0.2$, $\varepsilon=10^{-8}$. In (b--c), AME uses gradient descent (GD) with $\eta=1$. In (c), the GD learning rate decays as $\mathcal{O}(1/t)$, where $t$ is the number of backpropagation steps. (Bottom) Analogous experiments under the Gaussian distribution. Hyperparameters match those in (a--c) except for reduced learning rates: $\eta=0.01$ in (d) and $\eta=0.1$ in (e--f). All experiments use batch size 20 and 200 ensemble epochs. In (d--f), AME ensembles (red) align closely with the  soups (blue) centered around the green MLE, and are therefore visually occluded.}
    \label{fig:three-by-two}
\end{figure}

\subsection{Intuitions from Synthetic Experiments}

The central goal of AME is to harness insights from optimization theory to construct a novel class of empirical estimators that go beyond simple linear averaging. Rather than merely approximating the empirical mean, AME leverages existing optimizers to synthesize ensemble models that encode richer statistical structure--structures often missed by methods such as model soup. Empirically, certain optimizers (e.g., Adam) outperform others (e.g., SGD) in this context, a behavior that may be attributed to their distinct implicit biases. Understanding these biases remains an open and active area of research~\cite{implicit_bias_1,implicit_bias_2}, particularly in connection to why adaptive methods perform better for training specific architectures such as transformers~\cite{heavytail1,HeavyTailedNoisePaper,heavytailedclassimbalance,heavytail2,heavytail3}.

In the case of AME, our aim is not to find the global minimizer of a convex loss--an outcome that would simply reproduce model soup--but to exploit the optimizer’s trajectory through weight space in order to bias the ensemble toward regions that better reflect the underlying data-generating distribution. This is empirically supported in Figure~\ref{fig:three-by-two}, where we demonstrate that both the choice of optimizer and its hyperparameters (e.g., learning rate decay) yield qualitatively different ensemble behaviors. These results suggest that AME should be viewed not as a single objective minimizer, but as defining a flexible family of estimators shaped by the inductive biases of the optimization process.

This reinterpretation positions adaptive optimizers as numerical routines for computing robust, noise-aware estimators. By initializing at a fixed point and applying pseudogradient updates toward sampled models or datapoints, AME produces solutions that converge closer to the likelihood maximizer than the empirical average. We believe this perspective motivates the development of future deep neural network optimization-inspired ensembling techniques as statistical estimators in their own right.

\end{document}